\RequirePackage[l2tabu, orthodox]{nag}

\documentclass[12pt]{article}

\usepackage[parfill]{parskip}
\usepackage[margin=1in]{geometry}
\usepackage[defaultlines=3,all]{nowidow}
\usepackage{afterpage}
\usepackage{framed}

\usepackage[T1]{fontenc}
\usepackage[utf8]{inputenc}
\usepackage[english]{babel}
\usepackage{libertine}             
\usepackage{microtype}
\usepackage{titlesec}
\titleformat{\section}[hang]{\normalfont\Large\bfseries}{\thesection}{1em}{}

\usepackage{amsmath, amssymb, amsthm}
\usepackage{amsfonts}
\usepackage{mathtools}
\usepackage{nicefrac}
\usepackage{bbm}                   
\usepackage{bm}
\usepackage{stackrel}

\usepackage[dvipsnames,table,xcdraw]{xcolor}
\usepackage{pifont}
\usepackage{soul}
\definecolor{stabgreen}{RGB}{120, 190, 130}
\definecolor{staborange}{RGB}{240, 165, 60}
\definecolor{stabred}{RGB}{210, 70, 70}

\usepackage{graphicx}
\usepackage{booktabs}
\usepackage{makecell}
\usepackage{multirow}
\usepackage{tabularx}
\usepackage{setspace}
\usepackage{wrapfig}
\usepackage[labelfont=bf,format=plain,justification=raggedright,%
            singlelinecheck=false]{caption}
\usepackage{subcaption}            
\usepackage{enumitem}
\usepackage{multicol}
\usepackage{adjustbox}
\usepackage{tcolorbox}
\usepackage{minitoc}

\let\origcontentsline\addcontentsline
\newcommand{\nocontentsline}[3]{}
\newcommand{\stoptoc}{\let\addcontentsline\nocontentsline}
\newcommand{\resumetoc}{\let\addcontentsline\origcontentsline}

\usepackage{algorithm}
\usepackage{algpseudocode}          
\floatname{algorithm}{Algorithm}

\algtext*{EndFunction}

\usepackage[sort]{natbib}
\usepackage{hyperref}
\hypersetup{
  colorlinks = true,
  linktoc    = all,
  linkcolor  = black,
  citecolor  = orange,
  urlcolor   = MidnightBlue
}
\usepackage[all]{hypcap}
\usepackage[capitalize, noabbrev]{cleveref}
\crefname{equation}{equation}{equations}
\Crefname{equation}{Equation}{Equations}

\usepackage{thmtools, thm-restate}  

\theoremstyle{plain}
\newtheorem{theorem}{Theorem}[section]
\newtheorem{proposition}[theorem]{Proposition}

\newtheorem{corollary}[theorem]{Corollary}

\theoremstyle{definition}

\newtheorem{assumption}[theorem]{Assumption}

\theoremstyle{remark}
\newtheorem{remark}[theorem]{Remark}

\newcommand{\idpt}{\perp\!\!\!\perp}
\newcommand{\nidpt}{\not\perp\!\!\!\perp}

\newcommand{\EE}{\mathbb{E}}

    \def\cC{\mathcal{C}}  \def\cD{\mathcal{D}}
    \def\cG{\mathcal{G}}  \def\cH{\mathcal{H}}
      
  \def\cN{\mathcal{N}}    \def\cP{\mathcal{P}}
      
    \def\cW{\mathcal{W}}  \def\cX{\mathcal{X}}
\def\cY{\mathcal{Y}}  \def\cZ{\mathcal{Z}}

\def\bfx{\mathbf{x}}  \def\bfy{\mathbf{y}}  \def\bfz{\mathbf{z}}  \def\bfw{\mathbf{w}}
  \def\bfK{\mathbf{K}}  \def\bfL{\mathbf{L}}  \def\bfA{\mathbf{A}}
\def\bfg{\mathbf{g}}    \def\bfM{\mathbf{M}}  \def\bfB{\mathbf{B}}
  \def\bfQ{\mathbf{Q}}    \def\bfc{\mathbf{c}}

\def\bfmu{\boldsymbol{\mu}}

\title{Instrumental and Proximal Causal Inference with Gaussian Processes\footnote{Accepted at UAI 2026}}

\author{%
    Yuqi Zhang\textsuperscript{\rm 1} \qquad
    Krikamol Muandet\textsuperscript{\rm 2} \qquad
    Dino Sejdinovic\textsuperscript{\rm 3,4} \\[1ex]
    Edwin Fong\textsuperscript{\rm 1} \qquad
    Siu Lun Chau\textsuperscript{\rm 4}%
}
\date{
\footnotesize{
    \textsuperscript{\rm 1} Department of Statistics and Actuarial Science, University of Hong Kong, Hong Kong \\
    \textsuperscript{\rm 2} Rational Intelligence Lab, CISPA Helmholtz Center for Information Security, Germany \\
    \textsuperscript{\rm 3} Australian Institute for Machine Learning, Adelaide University, Australia \\
    \textsuperscript{\rm 4} College of Computing \& Data Science, Nanyang Technological University, Singapore \\[2ex]}
}

\begin{document}
\maketitle

\begin{abstract}
Instrumental variable (IV) and proximal causal learning (Proxy) methods are central frameworks for causal inference in the presence of unobserved confounding. Despite substantial methodological advances, existing approaches rarely provide reliable epistemic uncertainty (EU) quantification. We address this gap through a Deconditional Gaussian Process (DGP) framework for uncertainty-aware causal learning. Our formulation recovers popular kernel estimators as the posterior mean, ensuring predictive precision, while the posterior variance yields principled and well-calibrated EU under the standard causal identification assumptions. Moreover, the probabilistic structure enables systematic model selection via marginal log-likelihood optimization. Empirical results demonstrate strong predictive performance alongside informative EU quantification, evaluated via empirical coverage frequencies and decision-aware accuracy–rejection curves. Together, our approach provides a unified, practical solution for causal inference under unobserved confounding with reliable uncertainty.
\end{abstract}

Keywords: Gaussian processes, Causal Inference, Uncertainty Quantification
\section{Introduction}

Estimating causal effects from observational data is central to decision-making across a wide range of disciplines. A major challenge, however, is the presence of unobserved confounders, which can bias standard estimators and compromise causal validity. Among the frameworks that address this issue, Instrumental Variables (IV)~\citep{angrist1992effect,stock2003retrospectives} and Proximal Causal Learning (Proxy)~\citep{kuroki2014measurement,tchetgen2020introduction,miao2024confounding} have emerged as foundational tools. Under suitable structural assumptions, both frameworks enable the identification of causal effects despite unobserved confounding, that is, they construct valid estimators from observational data for reasoning about causal estimands.

Recent advances in machine learning have provided a rich class of such estimators, ranging from kernel-based methods~\citep{singh2019,muandet2020dual,mastouri2021proximal} to deep learning approaches~\citep{xu2020learning}. While these estimators have demonstrated strong empirical performance and have been studied extensively from a theoretical perspective~\citep{chen2025towards}, they primarily focus on point estimation of causal effects. As a result, principled predictive uncertainty quantification (UQ)—capturing how confident a model is in its causal estimates—remains comparatively underdeveloped. We argue that modelling predictive uncertainty is nevertheless also a crucial component for reliable causal inference. 

Informative predictive uncertainty is crucial for several reasons. First, it is essential for trustworthy and responsible deployment~\citep{vashney2022trustworthy}, particularly in safety-critical settings where causal estimates inform real-world interventions. Second, predictive uncertainty enables risk-aware decision-making, including conservative and selective deployment~\citep{cortes2016learning, shaker2020aleatoric}, or deferral~\citep{madras2018predict} when available evidence is insufficient. Third, predictive uncertainty plays a central role in downstream tasks such as causal data fusion~\citep{chau2021bayesimp}, causal active learning~\citep{gao2025activecq}, and finding optimal treatment~\citep{aglietti2020causal}, where it governs how evidence is weighted and how limited resources are allocated. 
These considerations become even more critical—and substantially more challenging---in the presence of unobserved confounding, where uncertainty must reflect not only data variability but also incomplete knowledge of the confounders.

Despite its importance, existing approaches to quantifying predictive uncertainty in causal inference with unobserved confounding remain limited. Model-agnostic UQ strategies are typically bootstrap-based and heuristic, and therefore lack a coherent probabilistic interpretation; the uncertainty they quantify often reflects sensitivity to the resampling procedure rather than genuine model confidence. Bayesian approaches offer a more principled alternative, but frequently incur substantial computational and methodological overhead, impose strong parametric assumptions~\citep{lopes2014bayesian}, or rely on artificial data-generating mechanisms~\citep{wang2021quasi}. Moreover, evaluations of uncertainty quality are often restricted to basic coverage-based metrics~\citep{hu2023bayesian}, rather than assessing how effectively uncertainty estimates support downstream decision-making—a perspective that has received increasing attention in the broader uncertainty quantification literature~\citep{shaker2020aleatoric,chau2026quantifying}.

\paragraph{Contributions.}
We propose a Gaussian process (GP)~\citep{rasmussen2003gaussian} framework with principled uncertainty quantification for both the IV and Proxy settings. Building on the observation that learning the unconfounded structural function in both settings reduces to solving a Fredholm integral equation, we leverage the theory of deconditional kernel embeddings~\citep{hsu2019bayesian}, which act as pseudo-inverses of conditional expectation operators, and adapt their GP formulation~\citep{chau2021deconditional} to causal learning under unobserved confounding. The resulting methods, \textbf{GPIV} and \textbf{GPProxy}, form a unified Bayesian nonparametric framework for causal estimation across both settings. 

For estimation, we show that the posterior means of \textbf{GPIV} and \textbf{GPProxy} recover the frequentist estimators underlying widely used kernel-based approaches such as Kernel IV (KIV;~\citet{singh2019}) and Kernel Negative Control (KNC;~\citet{singh2023kernelmethodsunobservedconfounding}). Consequently, our approach inherits their strong modeling performance and established asymptotic guarantees, while the Bayesian formulation additionally equips them with principled predictive uncertainty quantification conventional identification assumptions in IV and proximal models. As an added benefit, our Bayesian framework naturally supports principled model selection via marginal likelihood optimization, especially for the second-stages, yielding strong empirical performance without reliance on ad hoc parameter initialization or extensive cross-validation (CV) commonly used in frequentist approaches.


Our empirical results demonstrate consistently strong predictive performance compared to state-of-the-art baselines, which we attribute to the holistic model selection enabled by our Bayesian formulation. 
To evaluate the quality of the quantified predictive uncertainty, we begin by reporting empirical coverage frequencies, followed by assessing the informativeness of the uncertainty in downstream decision-making tasks, such as selective inference, where we evaluate how effectively uncertainty estimates identify instances for which predictions should be withheld by extending the evaluation framework of \citet{shaker2020aleatoric}.

The remainder of the paper is organized as follows. Section~\ref{sec: prelim} introduces the IV and Proxy setting, together with the necessary background on kernel methods and GPs. Section~\ref{sec: methods} presents our technical contributions. Section~\ref{sec: related works} reviews related work. Section~\ref{sec: exp} provides results of experiments and simulations, and Section~\ref{sec: discussion} concludes the paper.

\section{Preliminary}
\label{sec: prelim}

\paragraph{Notations. } Let $\mathcal X$, $\mathcal Y$, $\mathcal U$, and $\mathcal Z$ denote the spaces of the treatment, outcome, (unobserved) confounder, and instrument, respectively. Uppercase letters $X$, $Y$, $U$, and $Z$ denote random variables taking values in $\mathcal X$, $\mathcal Y$, $\mathcal U$, and $\mathcal Z$, with corresponding distributions $P_X, P_Y,P_U$ and $P_Z$. Lowercase letters denote their realizations. In the Proxy setting, with an abuse of notation, $Z$ instead denotes the treatment proxy \footnote{Note that, with an abuse of notation, \( Z \) in the IV setting and in the proxy setting are completely different variables: in the IV setting, \( Z \) is a proxy for the treatment variable, whereas in the proxy setting, \( Z \) is a proxy for confounders.}, and an additional variable $W\in \mathcal W$, serves as the outcome proxy. 

Let $\cH_\cX$ denote the reproducing kernel Hilbert space (RKHS)~\citep{aronszajn1950theory} for functions on $\cX$ associated with the kernel $k_\cX:\cX\times\cX\to\mathbb{R}$. With an abuse of notation, we define $k_x = k_\cX(x,\cdot)$ as the canonical feature map for $x\in\cX$, and $k_X = k_\cX(X,\cdot)$ as the induced random elements in $\mathcal{H_\mathcal{X}}$ map by the random variable $X$. For simplicity, we denote $k_{x,x'}$ as $k_{\mathcal{X}}(x,x')$ for two test points. Given realizations $\bfx = (x_1,\dots,x_n)^\top$, we define kernel matrices by stacking the kernel elements maps along columns as ${\bf \Psi_{x}} = [k_{x_1},\dots,k_{x_n}]$ and the Gram matrix as $\mathbf{K}_{\bfx\bfx} = {\bf \Psi}_\bfx^\top {\bf \Psi}_\bfx = [k_{x_i, x_j}]_{1\leq i,j\leq n}$. For a given test point $x'$, we write $\bfK_{x'\bfx} = [k_{x', x_1},\dots,k_{x', x_n}]$ as the evaluation vector. RKHSs, kernels, and matrices associated with variables $Z, W$ are defined analogously.  


\subsection{Instrumental Variable}
In the IV regression problem, the goal is to estimate a structural function $f:\cX\to\cY$ under the following structural equation model, also illustrated in Figure~\ref{DAGIV}:
\begin{equation}
Y = f(X) + U, \quad \mathbb{E} [U] = 0, \quad \mathbb{E}[U \mid X] \neq 0,
\label{IVSF}
\end{equation}
where $U$ is a unobserved, mean-zero confounder that influences both the treatment $X$ and outcome $Y$. The presence of this confounding renders direct estimation via standard regression methods invalid, since the conditional expectation $\mathbb{E}[Y\mid X=x]$ no longer coincides with the structural function $f(x)$. Under Pearl’s do-calculus framework~\citep{pearl2009causality}, the structural function corresponds to the average treatment effect (ATE) $$f(x) = \mathbb{E}[Y\mid do(X=x)].$$

To address unobserved confounding, an IV \(Z\) can be introduced to enable identification of the structural function \(f\). 
The key requirement is that the instrument is exogenous with respect to the unobserved confounder, in the sense that \(\mathbb{E}[U \mid Z]=0\); formal conditions are presented in Appendix~\ref{appendix: causal inference}. Taking conditional expectations with respect to $Z$ on both sides of Equation~\ref{IVSF} yields
\begin{align}
   \mathbb{E}[Y \mid Z] = \mathbb{E}[f(X) \mid Z] = \int_{\mathcal{X}} f(X)  dP(X\mid Z), 
   \label{Structure Equation IV}
\end{align}
which defines a Fredholm integral equation of the first kind. A variety of approaches have been proposed to solve this ill-posed inverse problem. For example, \citet{muandet2020dual} formulated this as a convex-concave saddle-point problem. In this work, we instead adopt a deconditioning~\citep{hsuBayesianDeconditionalKernel2019} perspective, which constructs a pseudo-inverse of the conditional expectation operator to recover $f$. Specifically, we build on the GP deconditional formulations of \citet{chau2021deconditional}, which we adapt to the IV setting and describe in detail in Section~\ref{sec: methods}.

\begin{figure}[t!]
    \centering
    \begin{subfigure}[b]{0.45\columnwidth}
        \centering
        \includegraphics[width=\linewidth]{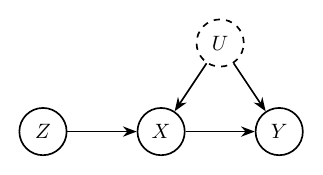}
        \caption{DAG for IV}
        \label{DAGIV}
    \end{subfigure}
    \hfill
    \begin{subfigure}[b]{0.45\columnwidth}
        \centering
        \includegraphics[width=\linewidth]{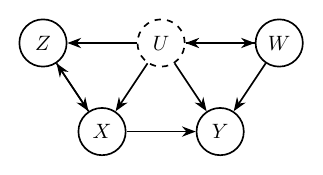}
        \caption{DAG for Proxy}
        \label{DAGProxy}
    \end{subfigure}
    \caption{Causal Graphs}
\end{figure}

\subsection{Proximal Causal Learning }



Compared with the IV setting, the assumptions in the proxy are further relaxed. Instead of assuming the existence of an IV that has no unobserved common cause with the outcome, informally, we assume the treatment proxy $Z$ and the outcome proxy $W$ provide sufficient information about the unobserved confounder $U$, as Figure \ref{DAGProxy} shows.


Our goal remains the estimation of the ATE $\mathbb E[Y \mid \text{do}(X=x)]$, which is again different with the conditional mean $\mathbb E[Y\mid X=x]$ due to the presence of $U$. Under the assumptions in the Appendix \ref{appendix: causal inference}, following the results of \citet{Miao18:Proxy,miao2024confounding}, we first solve for a bridge function $h$ satisfying the Fredholm integral equation:


\begin{equation}
    \mathbb{E}[Y\mid X,Z] = \int_{\cW} h(X,W)\,dP(W\mid X,Z) .
    \label{Bridge Function H}
\end{equation}

Finally, we marginalize over $W$ to identify the ATE:
\begin{equation}
    f(x) = \mathbb{E}[Y\mid \text{do}(X=x)] = \int_{\cW}h(x,W) d P(W)
    \label{Structure Equation for Proxy}
\end{equation}
We note that \citet{mastouri2021proximal,singh2023kernelmethodsunobservedconfounding} also proposed kernel-based approaches to tackle proxy through solving the Fredholm integral equation, similar to IV. We adopt a similar strategy but provide a Bayesian perspective, leveraging the deconditioning GP to recover the ATE from eq.\ref{Bridge Function H}.


\subsection{Kernel embeddings of distributions}




\paragraph{Kernel mean embeddings (KME).} KME~\citep{smola2007hilbert,muandet2016kernel} provides a flexible framework for representing and manipulating probability distributions without requiring parametric assumptions. In particular, the KME for $P_X$ is defined as $\mu_X = \mathbb{E}[k(X,\cdot)]\in\mathcal{H}_\cX$, which can be empirically estimated by $$\hat{\mu}_X =\frac{1}{n}\sum_{i=1}^n k_{x_i}.$$ For a broad class of \emph{characteristic} kernels, such as the Gaussian and Matérn kernels on $\mathbb{R}^d$, this mapping is injective. 



Besides computing the kernel mean, higher-order moments of the induced random element maps can be utilized to model relationships between variables. In particular, the (cross-)covariance operator between variables $X, Z$ is defined as $$C_{XZ} = \mathbb{E}[k_{\cX}(X,\cdot)\otimes k_\cZ(Z, \cdot)] \in \cH_\cX\otimes \cH_\cZ,$$ which can also be estimated via empirical averages, i.e. $$\hat{C}_{XZ} = \frac{1}{n}\sum_{i=1}^n k_{x_i}\otimes k_{z_i} = \frac{1}{n}{\bf \Psi_x\Psi_z^\top }.$$ Similarly, the covariance (or variance) operator $C_{XX}$ and its empirical estimator $\hat{C}_{XX}$ are defined analogously.


\paragraph{Conditional mean embeddings (CME).} Conditional distributions $P_{X\mid Z}$ can also be embedded in the RKHS as $\mu_{X\mid Z=z} = \mathbb{E}[k_X\mid Z=z] \in \cH_\cX$. However, as we often observe samples of $X, Z$ jointly, the standard Monte Carlo estimator for CME is not desirable. Instead, following \citet{song2009hilbert}, it is more common to associate the CME with the conditional mean operator (CMO) $C_{X\mid Z}:\cH_\cZ\to \cH_\cX$, which satisfies $C_{X\mid Z}k_z = \mu_{X\mid Z=z}$ for all $z\in\cZ$. 


Regularity assumptions on the conditional distribution (see \citet{klebanov2020rigorous} for a rigorous treatment) ensure that the CMO is a well-defined bounded operator which admits the representation $C_{X\mid Z} = C_{XZ}C_{ZZ}^{\dagger}$, where $\dagger$ denotes the Moore-Penrose pseudoinverse. A standard estimator of CMO is given by \citet{song2013kernel} as
\begin{equation}
    \hat{C}_{X\mid Z}= \Psi_{\bfx}(\bfK_{\bfz\bfz} + \eta I)^{-1} \Psi^{\top}_{\bfz}\, ,
    \label{CMOEst}    
\end{equation}
which includes a Tikhonov regularization term $\eta>0$. The empirical CME is then $$\hat{\mu}_{X\mid Z=z} = \hat{C}_{X\mid Z}k_z$$ for any $z\in\cZ$.



\paragraph{Deconditional mean embeddings (DME).} DME \citep{hsu2019bayesian} serve as natural counterparts to CME. In CME, we have $$\langle \mu_{X\mid Z=z}, f \rangle_{\cH_{\cX}} = \mathbb{E}[f(X)\mid Z=z]$$ for all $f \in \cH_{\cX}$. DME, denoted $\mu_{X=x\mid Z} \in \cH_{\cZ}$, reverses this process: it recovers the function from the conditional mean via the relation $$\langle\mu_{X=x \mid Z}, \mathbb{E}[f(X)\mid Z=\cdot \,] \,\rangle_{\cH_{\cZ}} = f(x).$$

Analogous to the relationship between the CMO and CME, the deconditional mean operator (DMO) $D_{X|Z}: \cH_{\cX} \rightarrow \cH_{\cZ}$ is associated with DME through the identity $\mu_{X=x\mid Z} = D_{X\mid Z} k_x$ for all $x \in \cX$. This operator acts as a pseudoinverse of CMO, satisfying $$(D_{X|Z})^\top \mathbb{E}[f(X)\mid Z=\cdot] = f$$ for all $f \in \cH_{\cX}$. Under regularity assumptions \citep{hsu2019bayesian}, the DMO can be expressed as a composition of cross-covariance operators and CMOs: $$D_{X\mid Z} = (C_{X\mid Z}C_{ZZ})^{\top}(C_{X\mid Z}C_{ZZ}(C_{X|Z})^\top)^{-1}.$$

DMO offers an alternative way to derive frequentist estimators for the ATE by recovering the true functions from the Fredholm integral equations (eqs. \ref{Structure Equation IV}, \ref{Bridge Function H}). This yields two estimators, which we name DIV and DProxy (detailed in Appendix \ref{appendix: DMO}); however, they do not provide uncertainty quantification.

\section{Methods: Deconditional GP for IV and Proxy}
\label{sec: methods}

In this section, we adopt the deconditioning Gaussian process framework, building on \citet{chau2021deconditional}. That work introduced a Bayesian model whose posterior means recover frequentist DMO-based estimators in statistical downscaling (a connection unrelated to causal inference).\footnote{The term “deconditioning GP” reflects this correspondence; however, neither \citet{chau2021deconditional} nor the present framework relies directly on DMOs.} We demonstrate how this model can be adapted to instrumental variable (IV) and proxy settings, allowing us to recover the structural function $f$ from Eqs.\ref{Structure Equation IV}–\ref{Structure Equation for Proxy} while simultaneously enabling predictive uncertainty quantification. Proofs, technical assumptions, and detailed derivations are deferred to Appendix~\ref{appendix:Proof in sec 3}.



\subsection{GP models for IV}
We place a GP prior $\cG\cP(0, k)$ on $f$, where $k$ is a kernel defined on $\cX$. Given observations $\mathcal{D} = (\bfx,\bfy,\bfz)$, we aim to derive the posterior distribution of $f$ conditioned on $\mathcal{D}$. To this end, we consider the following additive noise model:
\begin{equation}
\label{noise model iv}
\bfy \mid \bfz \sim \mathcal N(g(\bfz),\sigma^2I), \, \, g(z) = \EE_X[f(X) \mid Z=z] .
\end{equation}

We begin with a result on conditional mean processes from~\citet{chau2021deconditional}, which ensures the joint Gaussianity of $f(\bfx)$ and $\bfy$ given $\mathcal{D}$, and adapt it to our IV setting:

\begin{proposition}
\label{prop: CMP_IV}
(CMP for IV.) Under assumptions \ref{assumptioniv1}, \ref{assumptioniv2}, the conditional mean process $\{g(z),z \in \cZ\}$ induced by $f$ with respect to $P_{X\mid Z}$ defined as: 
$$ g(z) =\EE_X\left[f(X)\mid Z=z\right]= \int f(X) \,dP(X\mid Z=z)$$ is a Gaussian Process $g \sim  \cG\cP(0, q) $, with covariance kernel
\begin{equation}
\begin{aligned}
\small
\label{CovKernelq}
q(z, z') = \textnormal{cov}(g(z), g(z'))
= \langle \mu_{X\mid Z=z}, \mu_{X\mid Z=z'} \rangle_{\cH_{\cX}},
\end{aligned}   
\end{equation}
where $\mu_{X\mid Z=z}$ is the CME of $P_{X\mid Z=z}$ in the RKHS $\cH_\cX$.
\end{proposition}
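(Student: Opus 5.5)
The plan has two parts. First, $g$ is a Gaussian process because it is obtained from the GP $f$ by linear operations that can be exchanged with expectation. Second, the covariance follows from Fubini and a reproducing-property identity for the kernel.

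\textbf{Gaussianity.} Work on a probability space carrying $f\sim\cG\cP(0,k)$ and assume a measurable version of $f$; assumptions \ref{assumptioniv1}–\ref{assumptioniv2} are presumably meant to guarantee this, together with integrability such as $\int \sqrt{k(x,x)}\,dP(x\mid Z=z)<\infty$. Fix finitely many points $z_1,\dots,z_m$ and coefficients $a_1,\dots,a_m$. We need the linear combination
\[
\sum_j a_j g(z_j)=\sum_j a_j\int f(x)\,dP(x\mid Z=z_j)
\]
to be Gaussian. Approximate each integral by finite Riemann–Lebesgue (Monte Carlo-type) sums $\sum_i w_i f(x_i)$. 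Each such sum is jointly Gaussian, since it is a linear functional of finitely many values of $f$. The approximations converge in $L^2(\Omega)$: their $L^2$ norms are controlled through $\EE|f(x)|\le\sqrt{k(x,x)}$ and Bochner integrability of $k(x,\cdot)$ against $P_{X\mid Z=z}$. An $L^2$ limit of centred Gaussians is centred Gaussian, so $(g(z_1),\dots,g(z_m))$ is jointly Gaussian.

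A cleaner route is to observe that $g(z)$ lies in the closed Gaussian Hilbert space spanned by $\{f(x)\}$. This holds because the $\cH_\cX$-valued Bochner integral $\mu_{X\mid Z=z}=\int k_x\,dP(x\mid Z=z)$ exists, and the isometry $k_x\mapsto f(x)$ between $\cH_\cX$ and the Gaussian Hilbert space maps $\mu_{X\mid Z=z}$ to $g(z)$. I would use this isometry argument and identify $g(z)=\mathcal I(\mu_{X\mid Z=z})$.

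\textbf{Mean.} By Fubini, $\EE[g(z)]=\int \EE[f(x)]\,dP(x\mid z)=0$.

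\textbf{Covariance.} Apply Fubini again:
\[
\mathrm{cov}(g(z),g(z'))=\iint \EE[f(x)f(x')]\,dP(x\mid z)\,dP(x'\mid z')=\iint k(x,x')\,dP(x\mid z)\,dP(x'\mid z').
\]
Writing $k(x,x')=\langle k_x,k_{x'}\rangle_{\cH_\cX}$ and pulling the Bochner integrals inside the inner product gives $\langle\mu_{X\mid Z=z},\mu_{X\mid Z=z'}\rangle_{\cH_\cX}=q(z,z')$.

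\textbf{Main obstacle.} The delicate step is justifying the interchange of expectation and integration, and the joint measurability of $(\omega,x)\mapsto f(x)$. For the interchange I would rely on the integrability assumption: $\EE\int|f(x)|\,dP(x\mid z)\le\int\sqrt{k(x,x)}\,dP(x\mid z)<\infty$, and the analogous bound for products via Cauchy–Schwarz. With these bounds, Fubini–Tonelli applies.
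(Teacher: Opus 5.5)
Your proposal is correct and follows the paper's proof. Like the paper, you get the mean and covariance by exchanging $\EE_f$ with the integral against $P_{X\mid Z=z}$ (Fubini), then writing $\iint k(x,x')\,dP(x\mid Z=z)\,dP(x'\mid Z=z')=\langle\mu_{X\mid Z=z},\mu_{X\mid Z=z'}\rangle_{\cH_\cX}$.

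The only difference concerns Gaussianity. The paper does not argue it and simply cites \citet{chau2021deconditional}, whereas you supply the standard argument: $g(z)$ is the image of $\mu_{X\mid Z=z}$ under the Lo\`eve isometry, i.e.\ an $L^2$-limit of finite linear combinations of the $f(x_i)$. Your integrability condition $\int\sqrt{k(x,x)}\,dP(x\mid Z=z)<\infty$ is what both steps actually need, and it is more natural than Assumption~\ref{assumptioniv2} as literally stated.
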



Proposition~\ref{prop: CMP_IV} implies the joint distribution is Gaussian:
\begin{align}
    \begin{bmatrix}
    f(\bfx)\\
    \bfy
    \end{bmatrix}
    \sim 
    \cN\Bigg(
    \begin{bmatrix}
    0 \\
    0
    \end{bmatrix},
    \begin{bmatrix}
    \bfM_{1} & \bfM_2\\
    \bfM_2^\top  & \bfM_3
    \end{bmatrix}
    \Bigg).
\label{GPIV}
\end{align}

We now examine further the covariance matrices, with derivations provided in the Appendix \ref{appendix:Proof in sec 3}. From the GP prior, note that $\textnormal{cov}(f(x),f(x')) = k_{x,x'}$, which gives $$\bfM_1 = \textnormal{cov}(f(\bfx),f(\bfx)) = \bfK_{\bfx\bfx}.$$ Similarly, based on the kernel for $g$ in (eq \ref{CovKernelq}), we define $$\bfQ_{\bfz\bfz} = \textnormal{cov}(g(\bfz),g(\bfz)) = ( C_{X|Z} \Psi_{\bfz})^{\top} C_{X|Z} \Psi_{\bfz}$$ as the kernel matrix induced by $q$ on $\bfz$. Consequently, $$\bfM_3 = \bfQ_{\bfz\bfz} + \sigma^2 I.$$ For $\bfM_2$, the covariance $\textnormal{cov}(f(x),y) = \textnormal{cov}(f(x),g(z))$ is expressed as an inner product of features: $\langle k_x, \mu_{X|Z=z}\rangle_{\cH_\cX}$. This leads to $$\bfM_2 = \textnormal{cov}(f(\bfx), \bfy) = \Psi_{\bfx}^\top C_{X|Z} \Psi_{\bfz}.$$

Finally, using Gaussian conditioning, we obtain:
\begin{proposition}
    Let $\mathbf{\tilde{Q}} =\bfQ_{\bfz\bfz} + \sigma^2 I$. The posterior $f\mid \bfy $ given data $(\bfx, \bfy, \bfz)$ is a GP with mean $\mu$ and covariance $\kappa$:
\begin{align*}
    \mu(x) &= k_x^\top C_{X|Z}\Psi_{\bfz}\mathbf{\tilde{Q}}^{-1}\bfy, \\
    \kappa(x, x') &= k_{x,x'} - k_{x}^\top C_{X|Z}\Psi_{\bfz}\mathbf{\tilde{Q}}^{-1}\Psi_{\bfz}^\top C_{X|Z}^\top k_{x'}.
\end{align*}
\end{proposition}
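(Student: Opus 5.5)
The plan is to obtain the posterior by Gaussian conditioning. The ingredients are the joint Gaussianity from Proposition~\ref{prop: CMP_IV} and the covariance blocks $\bfM_1,\bfM_2,\bfM_3$ given above.

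First, fix an arbitrary finite set of test points $x_1^*,\dots,x_m^*\in\cX$ and consider the stacked vector $(f(x_1^*),\dots,f(x_m^*),\bfy)$. Under the prior $f\sim\cG\cP(0,k)$, Proposition~\ref{prop: CMP_IV} makes $g$ a linear functional image of $f$, namely $g(z)=\langle f,\mu_{X\mid Z=z}\rangle$ in the appropriate sense. Together with the independent Gaussian noise in \eqref{noise model iv}, this makes the stacked vector jointly Gaussian with mean zero. This is the same argument that gave \eqref{GPIV}, now with test points in place of $\bfx$.

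Next, I identify the cross-covariances.
\begin{itemize}
\item $\textnormal{cov}(f(x),f(x'))=k_{x,x'}$.
\item For the cross term, I interchange covariance and conditional expectation (Fubini, justified by the integrability assumptions~\ref{assumptioniv1},~\ref{assumptioniv2}):
\[
\textnormal{cov}(f(x),y_i)=\textnormal{cov}(f(x),g(z_i))=\EE\bigl[k(x,X)\mid Z=z_i\bigr]=\langle k_x,\mu_{X\mid Z=z_i}\rangle_{\cH_\cX}=k_x^\top C_{X\mid Z}k_{z_i}.
\]
Stacking over $i$ gives the row vector $k_x^\top C_{X\mid Z}\Psi_\bfz$.
\item $\textnormal{cov}(\bfy,\bfy)=\mathbf{\tilde Q}$, which is invertible because $\bfQ_{\bfz\bfz}\succeq 0$ and $\sigma^2>0$.
\end{itemize}

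Then I apply the standard Gaussian conditioning formula: for $(a,b)$ jointly Gaussian with zero mean, $a\mid b$ is Gaussian with mean $\Sigma_{ab}\Sigma_{bb}^{-1}b$ and covariance $\Sigma_{aa}-\Sigma_{ab}\Sigma_{bb}^{-1}\Sigma_{ba}$. Substituting the blocks above gives exactly $\mu(x)$ and $\kappa(x,x')$ as stated. Since the test set was arbitrary, every finite-dimensional marginal of $f\mid\bfy$ is Gaussian with these mean and covariance functions. Hence $f\mid\bfy$ is a GP with mean $\mu$ and covariance $\kappa$. Consistency of these marginals is automatic because the formulas are pointwise in $x,x'$.

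The main obstacle is rigor in the cross-covariance step. A draw of $f$ generally does not lie in $\cH_\cX$, so $g(z)=\langle f,\mu_{X\mid Z=z}\rangle$ is only formal. I would instead justify the step directly: write $\EE_f[f(x)\int f(t)\,dP(t\mid z)]=\int k(x,t)\,dP(t\mid z)$ via Fubini, using $\EE\int|f(t)|^2\,dP(t\mid z)<\infty$, which follows from bounded or integrable $k$ as in the assumptions. Then I invoke the reproducing property $\int k(x,t)\,dP(t\mid z)=\langle k_x,\mu_{X\mid Z=z}\rangle$, together with $C_{X\mid Z}k_z=\mu_{X\mid Z=z}$.
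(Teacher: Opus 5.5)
Your proposal is correct and follows essentially the paper's proof. Both arguments form the joint Gaussian of $f$ at arbitrary test points together with $\bfy$, read off the blocks $\bfM_1,\bfM_2,\bfM_3$, apply Gaussian conditioning, and pass from arbitrary finite test sets to a GP (the paper invokes the Kolmogorov extension theorem here); your explicit Fubini justification of the cross-covariance, needed because sample paths need not lie in $\cH_\cX$, is a welcome refinement of a step the paper only sketches.
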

To obtain a finite data estimate of the mean and covariance, we need to first estimate the CMO/CME, which corresponds to the usual first stage regression in the KIV (\citep{singh2019}), after that we can replace $C_{X|Z}$ with $$\widehat{C}_{X|Z} := \Psi_{\bfx}(\bfK_{\bfz\bfz} + \eta I)^{-1}\Psi_{\bfz}^\top$$ and $\bfQ_{\bfz\bfz}$ with $$\hat{\bfQ}_{\bfz\bfz} = ( \widehat{C}_{X|Z} \Psi_{\bfz})^{\top} \widehat{C}_{X|Z} \Psi_{\bfz}.$$ Therefore, we obtain our empirical estimators for the posterior mean and covariance of the structure function $f$:
\begin{align*}
      \hat{\mu}(x) &= \bfK_{x\bfx}\bfA(\bfA^\top\bfK_{\bfx\bfx}\bfA+ \sigma^2 I)^{-1}\bfy, \\
    \hat{{\kappa}}(x, x') &= k_{x,x'} - \bfK_{x\bfx}\bfA(\bfA^\top\bfK_{\bfx\bfx}\bfA+ \sigma^2 I)^{-1}\bfA^\top\bfK_{\bfx x'}
\end{align*}
where $\bfA = (\bfK_{\bfz\bfz} + \eta I)^{-1} \bfK_{\bfz\bfz}$ is the mediation matrix. 


Both our operator-based deconditioning approach (DIV) and the posterior mean of GP formulation (GPIV) yield closed-form estimators, with the relationship specified by the following proposition.

\begin{proposition}
(Equivalency between KIV, DIV, and GPIV.) Assume $\bfK_{\bfx\bfx}$, $(\bfA^\top\bfK_{\bfx\bfx}\bfA+ \sigma^2 I)$ are invertible, the estimator of KIV, DIV, and the mean posterior of GPIV are equivalent.
\end{proposition}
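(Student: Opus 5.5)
The strategy is to write all three estimators as closed-form kernel expressions in the same sample $(\bfx,\bfy,\bfz)$. We then reduce each to the common form $\hat\mu(x)=\bfK_{x\bfx}\bfA(\bfA^\top\bfK_{\bfx\bfx}\bfA+\sigma^2 I)^{-1}\bfy$ already obtained for GPIV, under the identification of the KIV second-stage regularizer with $\sigma^2$ (up to the sample-size scaling used in \citet{singh2019}). We will take the KIV first and second stages to use the same sample, matching the GPIV construction. The GPIV side needs no work, since it is the empirical posterior mean stated just above the proposition.

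\textbf{Step 1 (KIV).} Recall KIV: the first stage estimates $\hat\mu_{X\mid Z=z_i}=\widehat C_{X|Z}k_{z_i}$, so the stacked features are $\widehat C_{X|Z}\Psi_{\bfz}=\Psi_{\bfx}(\bfK_{\bfz\bfz}+\eta I)^{-1}\bfK_{\bfz\bfz}=\Psi_\bfx\bfA$ (using symmetry of $\bfK_{\bfz\bfz}$ and the fact that it commutes with $(\bfK_{\bfz\bfz}+\eta I)^{-1}$). The second stage is kernel ridge regression of $\bfy$ on these features: $\min_{f\in\cH_\cX}\|\bfy-\bfA^\top\Psi_\bfx^\top f\|^2+\sigma^2\|f\|^2$. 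Its normal equations give $f=\Psi_\bfx\bfA(\bfA^\top\bfK_{\bfx\bfx}\bfA+\sigma^2I)^{-1}\bfy$ by the push-through identity $(\Phi\Phi^\top+\sigma^2 I)^{-1}\Phi=\Phi(\Phi^\top\Phi+\sigma^2I)^{-1}$ with $\Phi=\Psi_\bfx\bfA$. Evaluating at $x$ gives exactly the GPIV mean. KIV as originally stated in \citet{singh2019} is written in the primal, feature-space form $(\Phi\Phi^\top+\lambda I)^{-1}\Phi\bfy$. I would state explicitly that the push-through step converts it, and that invertibility of $\bfA^\top\bfK_{\bfx\bfx}\bfA+\sigma^2I$ is what is used here.

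\textbf{Step 2 (DIV).} Take the DIV estimator from Appendix~\ref{appendix: DMO}, namely $\hat f(x)=\langle \hat\mu_{X=x|Z},\hat g\rangle$. Here $\hat D_{X|Z}$ is the empirical, Tikhonov-regularized version of $(C_{X|Z}C_{ZZ})^\top(C_{X|Z}C_{ZZ}C_{X|Z}^\top)^{-1}$, and $\hat g$ is the empirical conditional mean of $Y$ given $Z$. Substituting $\hat C_{ZZ}=\frac1n\Psi_\bfz\Psi_\bfz^\top$ and $\widehat C_{X|Z}$ gives $\widehat C_{X|Z}\hat C_{ZZ}\propto\Psi_\bfx\bfA\Psi_\bfz^\top$ and $\widehat C_{X|Z}\hat C_{ZZ}\widehat C_{X|Z}^\top\propto \Psi_\bfx\bfA\bfA^\top\Psi_\bfx^\top$. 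The regularized inverse $(\Psi_\bfx\bfA\bfA^\top\Psi_\bfx^\top+\lambda I)^{-1}$ acting on $k_x$ must then be pushed through to Gram-matrix form via the same identity (Woodbury). It is here that invertibility of $\bfK_{\bfx\bfx}$ enters, to cancel a $\bfK_{\bfx\bfx}$ factor arising when the operator is expressed through $\Psi_\bfx$. After cancellation I expect $\bfK_{x\bfx}\bfA(\bfA^\top\bfK_{\bfx\bfx}\bfA+\lambda I)^{-1}\bfy$ to remain.

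The main obstacle is Step 2. The DMO contains nested inverses and an outer $\Psi_\bfz$ factor, and the regularization constants and $1/n$ scalings must be matched carefully so that the DIV regularizer coincides with $\sigma^2$. My first attempt would be to rewrite everything in terms of $\Phi=\Psi_\bfx\bfA$. Then $\hat D_{X|Z}^\top\hat g$ becomes $(\Phi\Phi^\top+\lambda I)^{-1}\Phi\,\bfM\bfy$ for some $n\times n$ matrix $\bfM$, and the task is to show that $\bfM$ collapses to the identity using $\bfK_{\bfz\bfz}(\bfK_{\bfz\bfz}+\eta I)^{-1}$ cancellations. If exact cancellation fails, I would fall back to adopting the appendix's precise DIV definition and matching constants there.
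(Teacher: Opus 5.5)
Your proof goes through, but your KIV step takes a different route from the paper's. The paper starts from KIV's published Gram-matrix solution, $\hat f_{KIV}(x)=\bfK_{x\bfx}[\bfK_{\bfx\bfx}\bfA\bfA^\top\bfK_{\bfx\bfx}+n\lambda\bfK_{\bfx\bfx}]^{-1}\bfK_{\bfx\bfx}\bfA\bfy$. It factors the bracket as $\bfK_{\bfx\bfx}(\bfA\bfA^\top\bfK_{\bfx\bfx}+n\lambda I)$ and cancels $\bfK_{\bfx\bfx}$ using its inverse. It then applies the matrix push-through identity $(\bfA\bfA^\top\bfK_{\bfx\bfx}+n\lambda I)^{-1}\bfA=\bfA(\bfA^\top\bfK_{\bfx\bfx}\bfA+n\lambda I)^{-1}$. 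That cancellation is the only place the paper uses invertibility of $\bfK_{\bfx\bfx}$.

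You instead solve the second-stage ridge problem directly in $\cH_\cX$ and push through at the operator level with $\Phi=\Psi_\bfx\bfA$. This is valid, since $\Phi\Phi^\top+\sigma^2 I$ is invertible on $\cH_\cX$ for every $\sigma^2>0$. It never needs $\bfK_{\bfx\bfx}$ invertible, so it is slightly more general. The paper's route instead verifies KIV's coefficient vector $\hat\alpha$ exactly as KIV reports it; the regularizer $n\lambda\bfK_{\bfx\bfx}$ in that formula is why the assumption appears in the statement. Note that KIV's reported closed form is this dual Gram-matrix expression, not a primal one, so converting it needs the $\bfK_{\bfx\bfx}$ cancellation in addition to push-through.

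In Step 2, two of your expectations are off. First, invertibility of $\bfK_{\bfx\bfx}$ does not enter there. Your plug-in computation gives $\hat D_{X\mid Z}=\frac1n\Psi_\bfz\Phi^\top(\frac1n\Phi\Phi^\top+\lambda I)^{-1}=\Psi_\bfz(\bfA^\top\bfK_{\bfx\bfx}\bfA+n\lambda I)^{-1}\bfA^\top\Psi_\bfx^\top$ by push-through alone. This is the empirical DMO of Appendix~\ref{appendix: DMO}, where the inverse is dropped by a typo.

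Second, your matrix $\bfM$ is defined by $\bfM\bfy=(\hat g(z_1),\dots,\hat g(z_n))^\top$, and no $\bfK_{\bfz\bfz}(\bfK_{\bfz\bfz}+\eta I)^{-1}$ cancellation will turn it into the identity. If $\hat g$ is a genuine regularized estimate of $\EE[Y\mid Z]$, e.g.\ $\hat g=\Psi_\bfz(\bfK_{\bfz\bfz}+\eta' I)^{-1}\bfy$, then $\bfM=\bfK_{\bfz\bfz}(\bfK_{\bfz\bfz}+\eta' I)^{-1}\neq I$, and the resulting estimator is not the GPIV mean. The equivalence holds only because DIV, following \citet{hsu2019bayesian}, replaces $g(z_i)$ by the raw observation $y_i$, so $\bfM=I$ by definition. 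With that convention DIV is literally the GPIV mean formula with $\sigma^2=n\lambda$, which is why the paper calls this step trivial. Your ``fallback'' is therefore the definition you must adopt from the start.
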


\subsection{GP models for Proxy}
As in GPIV, we construct a GP model for $f(x)$ and derive its posterior given the observed data $\mathcal{D} = (\bfx,\bfy,\bfz,\bfw)$. We begin by placing a GP prior $\cG\cP(0, k)$ on the bridge function $h$, where $k = k_\cX \cdot k_\cW$ on the product space $\cX \times \cW$. Adopting an additive noise model analogous to (eq \ref{noise model iv}) in the IV setting, we assume
\begin{equation*}
\mathbf{y} \mid \mathbf{x}, \mathbf{z} \sim \mathcal N\bigl(g(\mathbf{x},\mathbf{z}), \sigma^2 I\bigr), \,
\text{where } \, g(x,z) = \mathbb{E}_W\bigl[h(x,W) \mid X=x, Z=z\bigr].
\end{equation*}
We obtain a conditional mean process result similar to that in the previous section.
\begin{proposition}
(Conditional Mean Process for Proxy Settings.)
Let ${g(x,z) : x \in \cX, z \in \cZ}$ be the conditional mean process induced by the bridge function $h$, defined as
$$g(x,z) = \int_{\cW} h(x,W)\,dP(W|X=x,Z=z).$$
Then, under standard regularity assumptions (\ref{assumptionproxy1},\ref{assumptionproxy2}) and by linearity, $g$ follows a GP $g \sim \cG\cP(0, q)$ with covariance kernel
\begin{equation}
q((x,z),(x',z')) = \textnormal{cov}\bigl(g(x,z), g(x',z')\bigr) 
= k_{x,x'}  \bigl\langle \mu_{W\mid X=x,Z=z},\; \mu_{W\mid X=x',Z=z'}\bigr\rangle_ {\mathcal{H}_{\mathcal{W}}}\,.
\label{Covq}
\end{equation}
\end{proposition}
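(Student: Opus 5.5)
The argument mirrors Proposition~\ref{prop: CMP_IV}, with $\cX\times\cW$ in place of $\cX$ and with conditioning on $(X,Z)$. It has two parts: showing that $g$ is Gaussian, and computing its covariance.

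\textbf{Gaussianity.} The map $h\mapsto g(x,z)=\int_{\cW} h(x,w)\,dP(w\mid X=x,Z=z)$ is linear in $h$. For any finite collection of points $(x_1,z_1),\dots,(x_m,z_m)$ and coefficients $a_1,\dots,a_m$, the combination $\sum_j a_j g(x_j,z_j)$ is an integral of the Gaussian process $h$ against a finite signed measure. Such an integral is an $L^2$ limit of Riemann/Monte Carlo sums $\sum_\ell c_\ell h(x_\ell,w_\ell)$, and each of these sums is Gaussian. Assumptions~\ref{assumptionproxy1} and~\ref{assumptionproxy2} provide the measurability of $h$ and the integrability condition $\int\!\sqrt{k((x,w),(x,w))}\,dP(w\mid x,z)<\infty$. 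These let us apply Fubini and guarantee that the limit is well defined in $L^2(\Omega)$. Since $L^2$ limits of Gaussian variables are Gaussian, every finite-dimensional marginal of $g$ is Gaussian. The mean is
\[
\EE[g(x,z)]=\int \EE[h(x,w)]\,dP(w\mid x,z)=0,
\]
where exchanging expectation and integral is again justified by Fubini.

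\textbf{Covariance.} Apply Fubini once more:
\begin{align*}
\textnormal{cov}\bigl(g(x,z),g(x',z')\bigr)
&=\iint \EE[h(x,w)h(x',w')]\,dP(w\mid x,z)\,dP(w'\mid x',z')\\
&=k_{x,x'}\iint k_\cW(w,w')\,dP(w\mid x,z)\,dP(w'\mid x',z').
\end{align*}
The second line uses the product structure $k=k_\cX\cdot k_\cW$: the $\cX$-factor $k_{x,x'}$ does not depend on the integration variables, so it factors out of the double integral. For the remaining double integral, use the reproducing property $k_\cW(w,w')=\langle k_w,k_{w'}\rangle_{\cH_\cW}$. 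Bochner integrability of $k_W$ under $P(W\mid x,z)$, which follows from the boundedness and measurability assumptions, lets the inner product commute with both integrals. This gives
\[
\langle \mu_{W\mid X=x,Z=z},\mu_{W\mid X=x',Z=z'}\rangle_{\cH_\cW},
\]
which is exactly~\eqref{Covq}.

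\textbf{Main obstacle.} The delicate step is the rigorous interchange of expectation and integration, together with the $L^2$-limit argument for Gaussianity, since sample paths of $h$ need not lie in $\cH$. I would handle it by working in $L^2(\Omega)$: define $g(x,z)$ as the Bochner integral of the $L^2(\Omega)$-valued map $w\mapsto h(x,w)$. This map is Bochner integrable because $\|h(x,w)\|_{L^2(\Omega)}=\sqrt{k((x,w),(x,w))}$ is integrable by assumption. The integral then lies in the closed Gaussian Hilbert space spanned by $\{h(x,w)\}$, which gives joint Gaussianity and justifies all the exchanges above.
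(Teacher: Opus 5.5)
Your proposal is correct and follows the same route as the paper. Gaussianity comes from the linearity of $h\mapsto\int_{\cW}h(x,w)\,dP(w\mid x,z)$: the paper just asserts this and defers to an external reference, whereas you supply the $L^2(\Omega)$ Bochner-integral argument placing $g(x,z)$ in the Gaussian Hilbert space of $h$. The zero mean and the covariance then follow via Fubini, pulling $k_{x,x'}$ out of the product kernel and using the reproducing property to obtain $\langle \mu_{W\mid X=x,Z=z},\mu_{W\mid X=x',Z=z'}\rangle_{\cH_\cW}$.

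One caveat: the condition $\int\sqrt{k_\cW(w,w)}\,dP(w\mid x,z)<\infty$ is your own (reasonable) reading of the loosely stated Assumptions~\ref{assumptionproxy1} and~\ref{assumptionproxy2}, not something they literally say. It holds automatically for the bounded RBF kernels used in the paper.
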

Our primary interest lies in recovering the structural function $f$ appearing in equation (eq \ref{Structure Equation for Proxy}), which requires marginalizing the outcome proxy $W$ from the bridge function $h$. By linearity, $f$ itself will also be a GP:
\begin{proposition}
Define $f(x) = \int h(x,W) dP(W)$. Under mild regularity assumption \ref{assumptionproxy3}, $f$ is almost surely a GP $f \sim \cG\cP(0,r)$ with covariance kernel
\begin{equation}
\label{Covr}
r(x,x')= \textnormal{cov}\bigl(f(x), f(x')\bigr)= k_{x,x'}\lVert \mu_{W} \rVert^{2}_{\mathcal{H}_{\mathcal{W}}} .   
\end{equation}
\end{proposition}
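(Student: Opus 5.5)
The plan is to treat $f$ as a bounded linear functional of the Gaussian process $h$, in the same way as the conditional mean process results of Proposition~\ref{prop: CMP_IV} and the proxy analogue, and then to compute its covariance by exchanging expectation and integration.

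\textbf{Well-definedness and Gaussianity.} Let $h\sim\cG\cP(0,k_\cX\cdot k_\cW)$. First I will verify that $f(x)=\int h(x,W)\,dP(W)$ exists almost surely as a random variable. Assumption~\ref{assumptionproxy3} is presumably a condition like $\EE_W[k_\cW(W,W)^{1/2}]<\infty$ or boundedness of $k_\cW$, and measurability of $h$. Under it, Fubini--Tonelli gives
\[
\EE_h\!\int |h(x,w)|\,dP(w)\le \int \sqrt{k_{x,x}\,k_\cW(w,w)}\,dP(w)<\infty ,
\]
so the integral is finite almost surely. To get Gaussianity, I will write $f(x)$ as an $L^2(\Omega)$-limit of Riemann/Monte Carlo type sums $\sum_j a_j h(x,w_j)$. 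Alternatively, I can note that $f(x)$ lies in the closed Gaussian Hilbert space spanned by $\{h(x,w)\}$, because $w\mapsto h(x,w)$ is Bochner integrable in $L^2(\Omega)$ (its norm is $\sqrt{k_{x,x}k_\cW(w,w)}$, which is integrable). Finite linear combinations $\sum_i c_i f(x_i)$ are then also $L^2$-limits of Gaussian combinations, hence Gaussian. So $f$ is a GP, with mean $\int \EE[h(x,w)]\,dP(w)=0$.

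\textbf{Covariance.} By Fubini, which is justified by the same integrability bound through Cauchy--Schwarz,
\[
\textnormal{cov}(f(x),f(x'))=\iint \EE[h(x,w)h(x',w')]\,dP(w)\,dP(w') = k_{x,x'}\iint k_\cW(w,w')\,dP(w)\,dP(w').
\]
The double integral equals $\langle \mu_W,\mu_W\rangle_{\cH_\cW}=\|\mu_W\|^2_{\cH_\cW}$ by the reproducing property and Bochner integrability of $k_W$. This gives equation~\eqref{Covr}.

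\textbf{Main obstacle.} I expect the main difficulty to be the almost-sure and measurability aspects, namely that the pathwise integral coincides with the $L^2(\Omega)$ Bochner integral. The approach I will try first is to take a jointly measurable version of $h$ and apply Fubini on $\Omega\times\cW$. This identifies the pathwise integral with the $L^2$ limit almost surely, which accounts for the ``almost surely'' in the statement.
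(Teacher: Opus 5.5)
Your proposal is correct and follows essentially the same route as the paper. Gaussianity comes from linearity of $h\mapsto\int h(x,W)\,dP(W)$, which the paper simply delegates to \citet{kanagawa2025gaussian}; the zero mean comes from exchanging $\EE_h$ and $\EE_W$; and the covariance $k_{x,x'}\iint k_\cW(w,w')\,dP(w)\,dP(w')=k_{x,x'}\lVert\mu_W\rVert^2_{\mathcal{H}_\mathcal{W}}$ follows from Fubini and the reproducing property. The paper's Assumption~\ref{assumptionproxy3} is actually boundedness of that linear functional rather than your condition $\int\sqrt{k_\cW(w,w)}\,dP(w)<\infty$, but your condition is a sound concrete substitute that also justifies the Fubini interchanges and the pathwise-versus-$L^2(\Omega)$ identification, which the paper asserts without detail.
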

Consequently, in the proxy setting we obtain the following joint Gaussian distribution:
\begin{align}
    \begin{bmatrix}
    f(\bfx)\\
    \bfy
    \end{bmatrix}
    \sim 
    \cN\Bigg(
    \begin{bmatrix}
    0 \\
    0
    \end{bmatrix},
    \begin{bmatrix}
    \bfL_{1} & \bfL_2\\
    \bfL_2^\top  & \bfL_3
    \end{bmatrix}
    \Bigg).
\label{GPProxy}
\end{align}
Based on the covariance kernel specified in (eq \ref{Covr}), we obtain $\bfL_1 = \textnormal{cov}(f(\bfx), f(\bfx)) = c_W \bfK_{\bfx\bfx}$, where $c_W = \lVert \mu_{W} \rVert^{2}_{\mathcal{H}_{\mathcal{W}}}$. 

The matrix $\bfL_3$ consists of two components. The first component originates from the covariance kernel $q$ defined in (eq \ref{Covq}), given by
\begin{equation*}
\mathbf{Q}_{\mathbf{z}\mathbf{w}\mathbf{x}} 
= \operatorname{cov}(g(\mathbf{x},\mathbf{z}), g(\mathbf{x},\mathbf{z})) 
= \mathbf{K}_{\mathbf{x}\mathbf{x}} \odot \left\{ (C_{W\mid X,Z}\Psi_{\mathbf{x},\mathbf{z}})^{\top} (C_{W\mid X,Z}\Psi_{\mathbf{x},\mathbf{z}}) \right\}.
\end{equation*}
Combined with the second component, which accounts for the additive noise variance, we have $\bfL_3 = \bfQ_{\bfz\bfw\bfx} + \sigma^2 I$.
For $\bfL_2$, we derive a similar inner-product expression for the covariance between $f(x)$ and $y$:
\begin{equation*}
\textnormal{cov}(f(x'),y) = \textnormal{cov}(f(x'),g(x,z)) 
= k_{x,x'}\langle \mu_{W}, \mu_{W\mid X=x,Z=z}\rangle _{\mathcal{H}_{\mathcal{W}}},    
\end{equation*}

leading to $\textnormal{cov}(f(\bfx), \bfy) = \bfK_{\bfx\bfx} \odot (\bfmu_{W})^{\top} C_{W\mid X,Z}\Psi_{\bfx,\bfz}$. Here, $\bfmu_{W} = (\mu_{W}, \ldots, \mu_{W})^{\top} = \mu_{W} \mathbf{1}_{n}$ denotes a vector of identical kernel mean embeddings.

Applying Gaussian conditioning yields the following posterior mean and variance expressions:
\begin{proposition}
 Let $\mathbf{\tilde{Q}}=(\mathbf{Q}_{\mathbf{z}\mathbf{w}\mathbf{x}} + \sigma^2I)$ and $\mathbf{\tilde{C}}=\Psi_\mathbf{x} \otimes C_{W\mid X,Z}\Psi_{\mathbf{x},\mathbf{z}}$. The posterior $f\mid \mathcal{D}$ is a GP with mean $\mu$ and covariance $\kappa$:
\begin{align*}
\small 
    \mu(x) &= (k_x \otimes \mu_W)^\top \mathbf{\tilde{C}} \mathbf{\tilde{Q}}^{-1} \mathbf{y} \\
    \kappa(x, x') &= c_W k_{x,x'} - (k_x \otimes \mu_W)^\top \mathbf{\tilde{C}} \mathbf{\tilde{Q}}^{-1} \mathbf{\tilde{C}}^\top(k_{x'} \otimes \mu_W)
\end{align*}  
\end{proposition}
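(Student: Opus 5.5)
The plan is to derive the posterior by Gaussian conditioning on the joint distribution \eqref{GPProxy}, once the three blocks $\bfL_1,\bfL_2,\bfL_3$ have been identified with feature-space expressions. Because $k=k_\cX\cdot k_\cW$ is a product kernel, its canonical feature map on $\cX\times\cW$ is $k_{(x,w)}=k_x\otimes k_w\in\cH_\cX\otimes\cH_\cW$. Under assumptions \ref{assumptionproxy1}--\ref{assumptionproxy3}, Bochner integrability of this feature map lets us move the integrals inside the GP. This gives the RKHS representations
$$\int h(x,W)\,dP(W)=\langle h, k_x\otimes\mu_W\rangle \quad\text{and}\quad g(x,z)=\langle h, k_x\otimes \mu_{W\mid X=x,Z=z}\rangle,$$
interpreted in the usual GP sense, with covariances computed through the prior covariance operator, which acts as the identity on the feature side. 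Writing $\mu_{W\mid X=x,Z=z}=C_{W\mid X,Z}k_{(x,z)}$ and stacking over the $n$ data points gives the column-wise tensor $\mathbf{\tilde C}=\Psi_\bfx\otimes C_{W\mid X,Z}\Psi_{\bfx,\bfz}$, whose $i$-th column is $k_{x_i}\otimes \mu_{W\mid X=x_i,Z=z_i}$. So $\tilde{\mathbf C}$ is the operator taking $\mathbb R^n$ into $\cH_\cX\otimes\cH_\cW$.

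The next step is to verify the blocks using $\langle a\otimes b,c\otimes d\rangle=\langle a,c\rangle\langle b,d\rangle$.
\begin{itemize}
\item $\mathbf{\tilde C}^\top\mathbf{\tilde C}$ has entries $k_{x_i,x_j}\langle\mu_{W\mid x_i,z_i},\mu_{W\mid x_j,z_j}\rangle$, which is $\bfQ_{\bfz\bfw\bfx}$. This matches \eqref{Covq}, so $\bfL_3=\mathbf{\tilde Q}$, using that the noise is independent of $h$.
\item $\textnormal{cov}(f(x),\bfy)=(k_x\otimes\mu_W)^\top\mathbf{\tilde C}$, whose $i$-th entry is $k_{x,x_i}\langle\mu_W,\mu_{W\mid x_i,z_i}\rangle$. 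This is consistent with the $\bfL_2$ computation above.
\item $\textnormal{var}$-type terms follow from $\textnormal{cov}(f(x),f(x'))=\langle k_x\otimes\mu_W,k_{x'}\otimes\mu_W\rangle=c_Wk_{x,x'}$, which is $r$ from \eqref{Covr}.
\end{itemize}

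Then we apply the standard Gaussian conditioning formula to the jointly Gaussian vector $(f(x),f(x'),\bfy)$. Joint Gaussianity for arbitrary finite collections of test points holds because all these quantities are continuous linear functionals of the same GP $h$, by the two preceding propositions. This gives mean $\textnormal{cov}(f(x),\bfy)\mathbf{\tilde Q}^{-1}\bfy$ and covariance $r(x,x')-\textnormal{cov}(f(x),\bfy)\mathbf{\tilde Q}^{-1}\textnormal{cov}(\bfy,f(x'))$. Substituting the block expressions yields the stated $\mu$ and $\kappa$. Since these formulas hold for every finite set of test points, the posterior $f\mid\mathcal D$ is a GP with that mean and kernel. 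Invertibility of $\mathbf{\tilde Q}$ is automatic because $\sigma^2>0$ and $\bfQ_{\bfz\bfw\bfx}$ is a PSD Gram matrix.

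The main obstacle is rigorously justifying that $f$ and $g$ are \emph{jointly} Gaussian with $h$ and that the interchange of integral and GP is valid, given that sample paths of $h$ generally do not lie in $\cH_\cX\otimes\cH_\cW$. I would handle this as the earlier propositions do. First, use Fubini together with the integrability in \ref{assumptionproxy3} to compute covariances as double integrals of $k$. Then establish joint Gaussianity as an $L^2$ limit of Riemann/Monte Carlo sums of jointly Gaussian evaluations of $h$, since $L^2$ limits of Gaussian families remain Gaussian. The tensor notation is then just a compact bookkeeping of these double integrals.
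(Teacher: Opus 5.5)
Your proposal is correct and follows essentially the same route as the paper. The paper computes the blocks $\bfL_1,\bfL_2,\bfL_3$ via Fubini, applies Gaussian conditioning to $(f(\bfx),\bfy)$, extracts the rows for the test points, and passes to a GP by the Kolmogorov extension argument from the IV case. Your extra details — checking that $\mathbf{\tilde C}^\top\mathbf{\tilde C}$ and $(k_x\otimes\mu_W)^\top\mathbf{\tilde C}$ reproduce those blocks, the invertibility remark, and the $L^2$ justification of joint Gaussianity that the paper delegates to \citet{kanagawa2025gaussian} — add rigor but do not change the argument.
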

We begin by estimating the CMO and the KME of $W$ from the finite sample $\mathcal{D}$. The CMO is estimated as $\widehat{C}_{W\mid X,Z} = \Psi_\bfw(\bfK_{\bfx\bfx}\odot\bfK_{\bfz\bfz}+\eta I)^{-1}\Psi_{\bfx,\bfz}^{\top}$, which coincides with the first-stage regression in the frequentist kernel regression method \citep{mastouri2021proximal,singh2023kernelmethodsunobservedconfounding}. The kernel mean estimator is obtained via the Monte Carlo average $\hat{\mu}_W = n^{-1}\sum_{i=1}^{n}k_{w_i}$; consequently, the constant appearing in $\bfL_1$ is estimated as $\hat{c}_W = n^{-2}\sum_{1\leq i,j\leq n}k_{w_{i},w_{j}}$. This averaging operation effectively marginalizes $W$ out of the bridge function $h$. We note that if one instead constructs a joint Gaussian distribution over $h(x,w)$ and $g(x,z)$ and then marginalises $W$ from the posterior of $h$, the resulting posterior mean for $f\mid \mathcal{D}$ remains identical.

Therefore, we obtain our empirical estimators for the posterior mean of the structure function, 
\begin{align*}
 \hat{\mu}(x) &= \{\mathbf{K}_{x\mathbf{x}} \odot (\mathbf{K}_{\bar{w}\mathbf{w}} \mathbf{B})\} \{\mathbf{K}_{\mathbf{x}\mathbf{x}} \odot (\mathbf{B}^{\top}\mathbf{K}_{\mathbf{w}\mathbf{w}} \mathbf{B}) + \sigma^2 I\}^{-1} \mathbf{y},\\
 \hat{\kappa}(x, x')& = \hat{c}_W \, k_{x,x'} - \{\mathbf{K}_{x\mathbf{x}} \odot (\mathbf{K}_{\bar{w}\mathbf{w}} \mathbf{B}) \} 
\{\mathbf{K}_{\mathbf{x}\mathbf{x}} \odot (\mathbf{B}^{\top}\mathbf{K}_{\mathbf{w}\mathbf{w}} \mathbf{B}) + \sigma^2 I \} ^{-1}\{\mathbf{K}_{x'\mathbf{x}} \odot (\mathbf{K}_{\bar{w}\mathbf{w}} \mathbf{B}) \}^{\top},
\end{align*}
where $\bfB = (\bfK_{\bfx \bfx}\odot\bfK_{\bfz \bfz} + \eta I)^{-1}(\bfK_{\bfx \bfx}\odot\bfK_{\bfz \bfz})$ acts as the mediation matrix for proxy setting. Moreover, we denote $\mathbf{K}_{\bar{w}\mathbf{w}} = (\langle\hat{\mu}_W,k_{w_1}\rangle,...,\langle\hat{\mu}_W,k_{w_n}\rangle) = \mathbf{K}_{\mathbf{w}\bar{w}}^{\top}$.

Similarly, we state the relationship between the posterior mean of GPProxy and the frequentist kernel methods:

\begin{proposition}
Assuming $\mathbf{K}_{\mathbf{x}\mathbf{x}} \odot (\mathbf{B}^{\top}\mathbf{K}_{\mathbf{w}\mathbf{w}} \mathbf{B})$ is invertible, the posterior mean of GPProxy is equivalent to the estimator produced by KNC \citep{singh2023kernelmethodsunobservedconfounding}.
\end{proposition}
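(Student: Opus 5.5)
The plan is to write the KNC estimator in closed form and check that it matches $\hat\mu(x)$ term by term, with the GP noise variance identified with the KNC second-stage ridge parameter, $\sigma^2 = n\xi$. I will take KNC \citep{singh2023kernelmethodsunobservedconfounding} in the form where both stages use the same sample $\mathcal{D}$ and the same first-stage regulariser $\eta$. Its first stage is the kernel ridge estimate $\widehat{C}_{W\mid X,Z}$ already used above, so the first stages coincide by construction. The second stage solves
\begin{equation*}
\hat h = \arg\min_{h\in\cH_\cX\otimes\cH_\cW} \frac{1}{n}\sum_{i=1}^n \bigl(y_i - \langle h, k_{x_i}\otimes \hat\mu_{W\mid x_i,z_i}\rangle\bigr)^2 + \xi\lVert h\rVert^2 ,
\end{equation*}
and the estimate is $\hat f_{\mathrm{KNC}}(x) = \langle \hat h, k_x\otimes\hat\mu_W\rangle$.

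The first step is to identify the empirical conditional embeddings. Applying $\widehat{C}_{W\mid X,Z}$ to the feature $k_{x_i}\otimes k_{z_i}$ gives
\begin{equation*}
\hat\mu_{W\mid x_i,z_i} = \Psi_\bfw(\bfK_{\bfx\bfx}\odot\bfK_{\bfz\bfz}+\eta I)^{-1}(\bfK_{\bfx\bfx}\odot\bfK_{\bfz\bfz})e_i = \Psi_\bfw \bfB e_i .
\end{equation*}
So the $i$-th column of $\bfB$ holds the weights of the $i$-th conditional embedding.

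The second step solves the second stage by the representer theorem. Let $\Phi$ be the operator whose $i$-th column is $k_{x_i}\otimes\Psi_\bfw\bfB e_i$. The minimiser is
\begin{equation*}
\hat h = (\Phi\Phi^\top + n\xi I)^{-1}\Phi\bfy = \Phi(\Phi^\top\Phi + n\xi I)^{-1}\bfy ,
\end{equation*}
where the second form follows from the push-through identity. The Gram matrix then factorises through the mixed-product property of tensor inner products:
\begin{equation*}
(\Phi^\top\Phi)_{ij} = k_{x_i,x_j}\,(\bfB^\top\bfK_{\bfw\bfw}\bfB)_{ij}, \qquad \text{so} \qquad \Phi^\top\Phi = \bfK_{\bfx\bfx}\odot(\bfB^\top\bfK_{\bfw\bfw}\bfB).
\end{equation*}

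The third step evaluates the estimate. For a test point $x$,
\begin{equation*}
\langle k_x\otimes\hat\mu_W, \Phi e_j\rangle = k_{x,x_j}\,\langle\hat\mu_W,\Psi_\bfw\bfB e_j\rangle = k_{x,x_j}\,(\bfK_{\bar w\bfw}\bfB)_j ,
\end{equation*}
so the row vector of these inner products is $\bfK_{x\bfx}\odot(\bfK_{\bar w\bfw}\bfB)$. Combining the three steps gives
\begin{equation*}
\hat f_{\mathrm{KNC}}(x) = \{\bfK_{x\bfx}\odot(\bfK_{\bar w\bfw}\bfB)\}\{\bfK_{\bfx\bfx}\odot(\bfB^\top\bfK_{\bfw\bfw}\bfB)+n\xi I\}^{-1}\bfy .
\end{equation*}
This equals $\hat\mu(x)$ exactly when $\sigma^2 = n\xi$.

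The main obstacle is not the algebra but matching conventions. KNC as published may use sample splitting between stages, a different regulariser scaling, or an alternative closed form written with $\Phi^\top\Phi$ inverted directly. That is where the stated invertibility of $\bfK_{\bfx\bfx}\odot(\bfB^\top\bfK_{\bfw\bfw}\bfB)$ would enter. If the published KNC formula has such a form, I would use this invertibility to rewrite $(\Phi^\top\Phi+n\xi I)^{-1}$ against it via a Woodbury-type manipulation, as in the proof of the IV equivalence proposition. For sample splitting, I would note that the argument is unchanged once $\bfB$ is replaced by the cross-sample mediation matrix.
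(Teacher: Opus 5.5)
Your proof is correct. It reaches the same final formula as the paper, with $\sigma^2 = n\xi$, but by a somewhat different route.

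\textbf{The paper's route.} The paper does not re-derive KNC. It starts from KNC's published coefficient formula $\hat{\alpha} = (\bfM\bfM^\top + n\lambda\bfM)^{-1}\bfM\bfy$, where $\bfM = \bfK_{\bfx\bfx}\odot(\bfB^\top\bfK_{\bfw\bfw}\bfB)$. It uses the invertibility of $\bfM$ to factor $\bfM\bfM^\top + n\lambda\bfM = \bfM(\bfM + n\lambda I)$ and cancel $\bfM^{-1}\bfM$. It then averages $\hat h(x,w_i)$ over $i$ to produce the row vector $\bfK_{x\bfx}\odot(\bfK_{\bar w\bfw}\bfB)$. The Hadamard structure of $\bfM$ and of that evaluation vector is taken as given from KNC.

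\textbf{Your route.} You derive the second stage from its variational definition, via the representer theorem and the push-through identity. This gives $(\bfM + n\xi I)^{-1}\bfy$ directly, and along the way you verify $\hat\mu_{W\mid x_i,z_i} = \Psi_\bfw\bfB e_i$, the factorisation of $\Phi^\top\Phi$, and the evaluation vector.

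\textbf{What each route buys.} Your argument never uses the invertibility of $\bfM$, so it shows the two estimators coincide as functions unconditionally. The hypothesis in the proposition is only needed to make the published closed form well-defined, since that form inverts $\bfM(\bfM+n\lambda I)$. When $\bfM$ is singular, $\hat h = \Phi\alpha$ is still determined by the normal equations, and your $\alpha$ is one solution. The paper's route buys brevity by leaning on the published formula.

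\textbf{On your contingency plan.} If you do start from the published form, no Woodbury-type manipulation is needed. Invertibility of $\bfM$ gives $(\bfM^2 + n\lambda\bfM)^{-1}\bfM = (\bfM + n\lambda I)^{-1}$ in one line.
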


However, our operator-based deconditioning method, DProxy, employs a two-stage regression procedure that differs from GPProxy. Specifically, DProxy is equivalent to the original formulation of KNC (version 1) introduced by \citet{singh2023kernelmethodsunobservedconfounding}, which we refer to as KNC-orig. In this version, the first-stage regression involves the conditional mean operator $C_{X,W|X,Z}$, which does not satisfy the completeness assumptions required for identifiability in proximal causal inference \citep{mastouri2021proximal}. In contrast, KPV \citep{mastouri2021proximal} follows the same regression structure and loss function as GPProxy and KNC, but differs in its representation, leading to a distinct closed-form solution. We refer the reader to Appendix \ref{appendix: ProxyEqv} for further details.

\subsection{Hyperparameter Selection}

In standard GP models, hyperparameters are tuned by maximizing the marginal log‑likelihood (MLL) \citet{williams2006gaussian}. In our IV and proxy settings, the full set of hyperparameters is $\theta_{\text{IV}} = (l_x, l_z, \eta, \sigma^2)$ and $\theta_{\text{proxy}} = (l_x, l_z, l_w, \eta, \sigma^2)$. Ideally, one would optimize all of them jointly via MLL. However, we observe the difficulty of convergence and the fact that joint optimization often drives the length‑scale $l_z$ (or the regularizer $\eta$) to extreme values (e.g., $l_z \to 0$), leading to poor predictive performance (Appendix G.5 for an identifiability analysis). We therefore adopt a two‑stage hybrid strategy that separates heuristic first‑stage choices from principled second‑stage MLL:

Stage 1 (heuristic): We fix the kernel hyperparameters associated with the instrument (i.e., $l_z$ for an RBF kernel and the regularization $\eta$) using the median heuristic \citet{singh2019} and a small constant (e.g., $\eta = 0.1$) after standardizing the inputs, please refer to Appendix \ref{appendix: abstudy lengthscale} - \ref{appendix: eta} for further sensitivity analysis. This choice is not learned from the marginal likelihood; it follows standard practice in the previous kernel methods and prevents the collapse observed under joint MLL.

Stage 2 (MLL‑based): With $l_z$ and $\eta$ fixed, we optimize the remaining hyperparameters ($l_x$, $\sigma^2$, and $l_w$ in the proxy setting) by maximizing the marginal log‑likelihood $\log p(y \mid \theta) \propto -\log |\tilde{Q}| - y^\top \tilde{Q}^{-1} y$, where $\tilde{Q} = Q_{zz} + \sigma^2 I$ for IV and $\tilde{Q} = Q_{zwx} + \sigma^2 I$ for proxy.

This hybrid scheme avoids the identifiability and convergence issues of full MLL while still enjoying the benefits of principled model selection for the second‑stage parameters. Importantly, even optimizing only $l_x$ and $\sigma^2$ via MLL (with $l_z$, $\eta$ fixed heuristically) already yields substantial improvements over non‑Bayesian baselines, which rely on data splitting and cross‑validation. Our method uses all data for both stages without reducing effective sample size (please refer to the Appendix \ref{appendix: split data}).

\section{Related Works}
\label{sec: related works}

\paragraph{IV and Proxy methods.}
A broad range of methods has been developed for the IV setting. Kernel IV~(KIV)~\citep{singh2019} extend the classical linear two-stage least squares framework by accommodating nonlinear relationships via RKHS representations. To mitigate biases inherent to two-stage procedures, alternative single-stage approaches grounded in the generalized method of moments (GMM) have been proposed, including MMRIV~\citep{zhang2023instrumental}, DualIV~\citep{muandet2020dual}, and its Bayesian counterpart QBIV~\citep{wang2021quasi}. Deep learning-based IV estimators have also been investigated \citep{bennett2019deep,xu2020learning}. Other Bayesian formulations \citep{kleibergen2003bayesian,conley2008semi,lopes2014bayesian} have also been studied, but often introduce substantial computational overhead, particularly due to reliance on Markov Chain Monte Carlo (MCMC). More fundamentally, many existing approaches rely on strong and sometimes opaque assumptions about the data-generating process. Although QBIV offers improved scalability, it nevertheless relies on a fictitious data-generating mechanism.


In comparison, methodological developments for proximal causal learning remain relatively limited. Early contributions by \citet{mastouri2021proximal, singh2023kernelmethodsunobservedconfounding} introduced nonlinear estimators based on kernelized two-stage regression, namely KPV and KNC. A GMM formulation tailored to the Proxy setting was also developed in \citet{mastouri2021proximal}. Subsequently, \citet{wu2023doubly} proposed doubly robust kernel-based procedures. Deep learning approaches have further extended both two-stage \citep{xu2021deep} and GMM-based \citep{kompa2022deep} paradigms. To the best of our knowledge, the only Bayesian treatment of proximal causal inference is provided by \citet{hu2023bayesian}, which employs a probit stick-breaking process prior, whereas our approach instead adopts a Gaussian process prior.

Although several studies in IV and Proxy settings mention UQ, few elaborate on its practical utility or significance. Most existing work evaluates UQ solely through coverage properties, whereas our method further explores applications such as Accuracy-Rejection Curve (learning to reject)~\citep{shaker2020aleatoric} and active learning~\citep{aggarwal2014active}. 

\paragraph{Uncertainty quantification (UQ) in causal inference.} UQ lies at the core of causal inference, as causal reasoning fundamentally involves counterfactual quantities for which direct observations are typically unavailable — a difficulty known as the fundamental problem of causal inference. Progress on UQ is particularly visible in partially identified settings, where causal estimands cannot be uniquely point-identified \citep{manski2003partial}. 
Related work has also relaxed structural assumptions, such as additive noise models in instrumental variable settings, leading to set-based uncertainty characterizations expressed as bounds on the structural function \citep{kilbertus2020class}. In contrast, our approach maintains the additive structural assumption while adopting a fully probabilistic model to estimate the ATE.

Another prominent line of research focuses on directly quantifying predictive uncertainty in the estimation of causal estimands. GP models have been incorporated into causal inference to enable principled uncertainty quantification, as explored by \citet{chau2021bayesimp}, \citet{dimitriou2024data}, and \citet{dance2025interventionalprocessescausaluncertainty}. These approaches typically rely on the assumption of no unobserved confounding, under which uncertainty primarily reflects finite-sample variability and model flexibility. ActiveCQ~\citep{gao2025activecq} similarly exploits GP posterior uncertainty for active learning, again under the unconfoundedness assumption. In contrast, our work extends this direction to confounded settings by developing GP-based inference procedures for IV and proxy-based identification strategies.



\section{Experiments}
\label{sec: exp}

\begin{figure*}[t]
\centering
\includegraphics[width=0.95\linewidth]{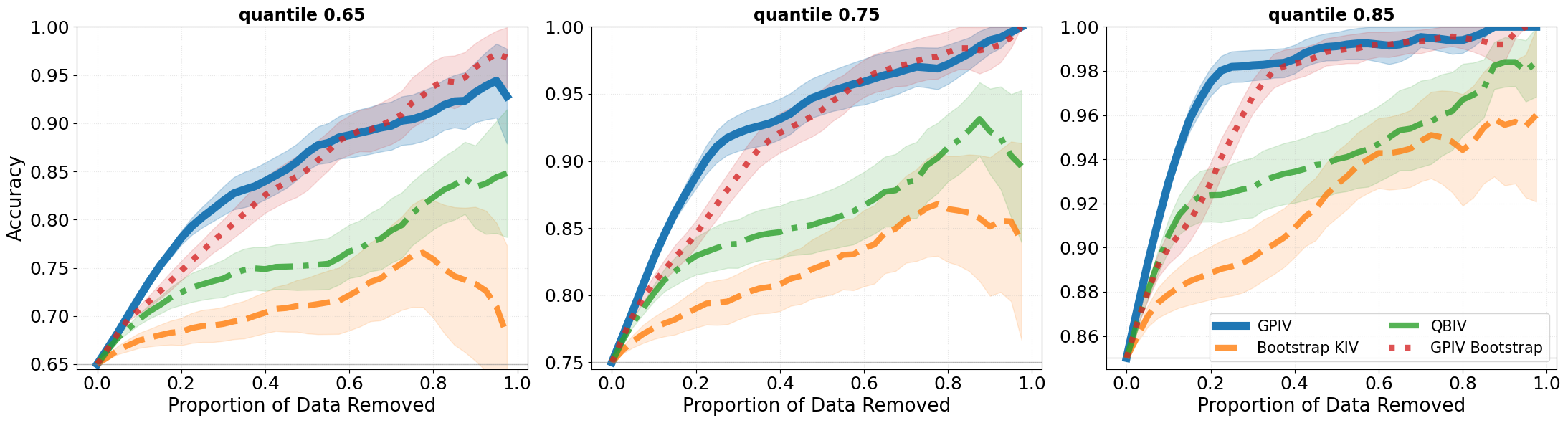}
\caption{Accuracy-rejection curve for log design (data size $=200$), with quantile $= 0.65,0.75,0.85$}
\label{arclog}
\end{figure*}

We evaluate our methods from two dimensions: estimation accuracy and the informativeness of the quantified uncertainty.\footnote{Code is available at https://github.com/YukiZ17/GPIVProxy.}To assess accuracy, we compare our approach with several widely used baselines, particularly kernel-based methods, using a standard metric: the out-of-sample mean squared error (MSE) between the estimated $\hat{f}(x)$ and the ground-truth ATE function $f(x)$. For uncertainty quantification, besides conventional metrics such as nominal $95\%$ coverage rates~\citep{wang2021quasi}, we also assess the informativeness of the predictive uncertainty using the  Accuracy–Rejection Curve (ARC) \citep{hullermeier2022quantification,chau2025integral}, which captures the trade-off between prediction accuracy and selective abstention based on uncertainty estimates. ARC visualizes model accuracy as a function of the rejection rate (i.e., the proportion of instances withheld). By selectively abstaining on the $p\%$ of predictions it is least confident about—in our case, those with the highest posterior variance—the model predicts only on the remaining $(1-p)\%$ of data. Under informative uncertainty, accuracy improves monotonically with the rejection rate, yielding a rising curve. In contrast, uninformative uncertainty or random abstention results in a flat curve. As existing ARC methods are designed for classification problems, we introduce a continuous analogue termed the $\delta$-ARC. Here, a prediction is considered accurate whenever the absolute error between the predicted and true values falls below a tolerance threshold $\delta$, which is set to certain quantiles of the prediction errors over the test data.


\subsection{Simulations on IV}
We compare our method, GPIV, against several representative baselines, including KIV \citep{singh2019}, MMRIV \citep{zhang2023instrumental}, and QBIV \citep{wang2021quasi}, focusing primarily on estimation accuracy. Notably, KIV differs from GPIV in key design choices, particularly in its reliance on data splitting and the use of fixed kernel lengthscales. QBIV, in contrast, represents a Bayesian extension of DualIV \citep{muandet2020dual}. In the context of UQ, we compare GPIV with QBIV, KIV (BS), and QBIV (BS), where the latter two approaches denote that the initial point estimate is used as the mean estimate, and bootstrap (BS) is employed solely for variance estimation. Moreover, \citet{lob2026uniforminferencekernelinstrumental} provide a modified bootstrap method for KIV with good asymptotic behaviour and low computational cost. Nevertheless, their method produces a uniform uncertainty band, which fails to capture the epistemic uncertainty from the data.

\paragraph{Synthetic data.} We adopt the data generating process described in \citet{singh2019}, and evaluate each estimator across three distinct designs of the groundtruth ATE $f$: \text{a sine design } ($f(x) = 2\sin(2\pi x)$) , \text{a linear design } ($f(x)=4x-2$), and \text{a log design } ($f(x)=\log (|16x-8|+1)\times\text{sgn}(x-0.5)$) design. To evaluate model performance, we measure the MSE, coverage, and ARC over 200 evenly spaced points $x \in [0,1]$. 

\textbf{Demand data.} 
We test GPIV on a more challenging airplane ticket demand design used in \citet{muandet2020dual}. An observation consists of $(Y,P,T,S,C)$, where $Y,P,T,S,C$ represents the demand for airline tickets, the price of tickets, the time of year, the sentiment of the customer (discrete variable), and the price shift of the supply cost shifter respectively. Following the notations and methodology in the aforementioned literature, We consider $X=(P,T,S)$ as the input and $Z=(C,T,S)$ as the instruments. The true ATE function is $f(P,T,S) = 100+S(10+P)h(T) - 2P$, where $h(T) = 2( {(T-5)^4}/{600}+\exp(-4(T-5)^2)+{T}/{10} -2)$. Our test data is the set of 3500 points $(p,t,s)$, consisting of: 30 evenly spaced values of $p \in [2.5,27.5]$, 20 evenly spaced values of $t \in [0,10]$, and all 7 discrete values $s \in \{1,\dots,7\}$. 

\textbf{Results.} Table \ref{mseresults} shows that the proposed method, GPIV, consistently achieves the lowest or second-lowest MSE across all baselines. One contributing factor to the observed performance gap is that KIV requires data splitting, which limits its ability to fully exploit the available samples, whereas GPIV leverages the entire dataset. This effect is evident in the synthetic experiments, where KIV with a sample size of 1000 yields performance comparable to GPIV trained on only 500 samples. We further observe that optimizing the lengthscale associated with $X$ substantially improves accuracy, achieving performance comparable to that of deep learning–based methods \citep{xu2020learning} on the demand design. Additional ablation study on lengthscale selection and their impact is provided in Appendix \ref{appendix: abstudy lengthscale}.

\begin{table}[htbp]
\centering
\caption{MSE (with standard error in parentheses) for different IV designs and data sizes $(n)$. The best result in each row highlighted in \textcolor{red}{red} and the second best in \textcolor{orange}{orange}.}
\label{mseresults}
\adjustbox{max width=\textwidth}{
\begin{tabular}{l*5{l}}
\toprule
\multicolumn{1}{c}{Design} & \multicolumn{1}{c}{n} & \multicolumn{1}{c}{\textbf{GPIV}} & \multicolumn{1}{c}{KIV} & \multicolumn{1}{c}{MMRIV} & \multicolumn{1}{c}{QBIV} \\
\midrule
\multirow{3}{*}{sine} 
 & 200 & \textcolor{red}{.163(.014)} & .291(.024) & .199(.015) & \textcolor{orange}{.176(.021)} \\
 & 500 & \textcolor{red}{.084(.009)} & .224(.024) & .189(.011) & \textcolor{orange}{.117(.017)} \\
 & 1000 & \textcolor{red}{.050(.006)} & .210(.021) & .160(.008) & \textcolor{orange}{.087(.012)} \\
\multirow{3}{*}{log} 
 & 200 & \textcolor{orange}{.152(.017)} & .280(.038) & \textcolor{red}{.059(.007)} & .177(.022) \\
 & 500 & \textcolor{orange}{.071(.005)} & .127(.018) & \textcolor{red}{.058(.005)} & .124(.014) \\
 & 1000 & \textcolor{red}{.045(.005)} & .094(.009) & \textcolor{orange}{.052(.004)} & .090(.011) \\
\multirow{3}{*}{linear} 
 & 200 & \textcolor{orange}{.157(.022)} & .250(.031) & \textcolor{red}{.082(.011)} & .193(.024) \\
 & 500 & \textcolor{orange}{.095(.015)} & .120(.017) & \textcolor{red}{.093(.008)} & .126(.016) \\
 & 1000 & \textcolor{red}{.048(.005)} & .077(.015) & .088(.006) & \textcolor{orange}{.069(.007)} \\
\multirow{3}{*}{demand*} 
 & 200 & \textcolor{red}{.070(.003)} & .713(.032) & .649(.025) & \textcolor{orange}{.632(.026)} \\
 & 500 & \textcolor{red}{.053(.002)} & .605(.030) & .582(.014) & \textcolor{orange}{.520(.016)} \\
 & 1000 & \textcolor{red}{.028(.001)} & .474(.026) & .539(.013) & \textcolor{orange}{.429(.015)} \\
\bottomrule
\multicolumn{6}{c}{*Note: We report normalized MSE for demand design.} \\ 
\end{tabular}
}
\end{table}

\begin{table}[h!]
\centering
\caption{Area under quantile $0.75$-ARC (the higher the better) and Coverages (the closer to $0.95$ the better) with standard error in parentheses, with 200 observations each. The best result in each row is highlighted in \textcolor{red}{red} and the second best in \textcolor{orange}{orange}.}
\label{ivuq}
\adjustbox{max width=\textwidth}{
\begin{tabular}{l*5{l}}
\toprule
\multicolumn{1}{c}{Design} & \multicolumn{1}{c}{Metric} & \multicolumn{1}{c}{\textbf{GPIV}} & \multicolumn{1}{c}{KIV(BS)} & \multicolumn{1}{c}{GPIV(BS)} & \multicolumn{1}{c}{QBIV}\\
\midrule
\multirow{2}{*}{sine} 
 & AUC & \textcolor{red}{.854(.015)} & .790(.022) & \textcolor{orange}{.842(.021)} & .746(.025) \\
 & Cov. & \textcolor{red}{.978(.015)} & .738(.082) & .610(.072) & \textcolor{orange}{.876(.137)}  \\
\multirow{2}{*}{log} 
 & AUC & \textcolor{red}{.904(.011)} & .800(.025) & \textcolor{orange}{.893(.012)} & .836(.020)  \\
 & Cov. & \textcolor{orange}{1.00(.000)} & \textcolor{red}{.989(.013)} & .799(.061) & .746(.159) \\
\multirow{2}{*}{linear} 
 & AUC & \textcolor{red}{.905(.012)} & .832(.024) & \textcolor{orange}{.874(.019)} & .848(.020) \\
 & Cov. & \textcolor{orange}{.989(.006)} & \textcolor{red}{.983(.019)} & .779(.052) & .771(.162) \\
\multirow{2}{*}{demand} 
 & AUC & \textcolor{red}{.918(.017)} & .663(.008) & .707(.011) & \textcolor{orange}{.904(.001)} \\
 & Cov. & \textcolor{red}{.961(.007)} & .461(.029) & .625(.059) & \textcolor{orange}{.903(.010)}\\
\bottomrule
\end{tabular}
}
\end{table}

We evaluate the quality of the quantified uncertainty using nominal coverage and ARC. Table~\ref{ivuq} indicates that GPIV outperforms both the bootstrap-based methods and QBIV in terms of coverage.  In particular, the reported $95\%$ confidence intervals of these competing approaches are systematically too narrow, leading to overly optimistic uncertainty estimates. 
Furthermore, the ARC (Figure \ref{arclog}) and the area under the ARC (AUC in Table \ref{ivuq}) indicate that our method provides the most informative uncertainty quantification.

\subsection{Simulation on Proxy}
 We compare our methods GPProxy with several kernel based method: KPV, PMMR \citep{mastouri2021proximal}, PKIPW, and PKDR \citep{wu2023doubly}, KNC-orig (equivalent to DProxy) and KNC \citep{singh2023kernelmethodsunobservedconfounding}. For UQ tasks, we compare GPProxy with GPProxy(BS), which uses variances of bootstrapped posterior means, given that no existing kernel methods in the proximal setting are equipped with uncertainty quantification, to the best of our knowledge.
 
 
\paragraph{Synthetic data.} \quad We introduce the data generating mechanism for the synthetic experiment. The generative process follows that of \cite{wu2023doubly}. We choose 300 evenly spaced points for $x$ in $[-2,4]$ as our test data. 

\paragraph{Demand data.} We follow the proximal version of the airline demand simulation from \citet{xu2021deep} and \citet{kompa2022deep}. The objective is to estimate how airline ticket prices $X$ influence sales $Y$, a relationship confounded by demand $U$ (e.g., seasonal trends). To address this, the fuel cost $Z=(Z_1,Z_2)$ and the number of views on a ticket reservation website $W$ serve as treatment and the outcome proxies, respectively. We evaluate our methods on 300 evenly spaced points for $x$ in $[10,40]$.

\begin{table}
\centering
\caption{MSE (with standard error in parentheses) for different data sizes. Superscripts $s$ and $d$ denote synthetic and demand designs, respectively. The best result in each row is highlighted in \textcolor{red}{red} and the second best in \textcolor{orange}{orange}.}
\label{tab:proxyresults}
\adjustbox{max width=\textwidth}{
\begin{tabular}{c c c c c c}
\toprule
n & \textbf{GPProxy} & DProxy & PMMR & PKDR & KNC \\
\midrule
200$^{s}$ & \textcolor{red}{.385(.041)} & .408(.047) & .421(.048) & \textcolor{orange}{.401(.032)} & .417(.038) \\
500$^{s}$ & \textcolor{orange}{.332(.036)} & .374(.030) & .370(.040) & \textcolor{red}{.319(.028)} & .372(.029) \\
1000$^{s}$ & \textcolor{orange}{.319(.030)} & .354(.035) & .348(.039) & \textcolor{red}{.250(.021)} & .390(.030) \\
200$^{d}$ & \textcolor{orange}{16.8(1.80)} & 19.2(1.47) & 18.3(4.03) & \textcolor{red}{14.4(3.40)} & 18.6(1.32) \\
500$^{d}$ & \textcolor{red}{13.5(.606)} & 17.5(.547) & \textcolor{orange}{17.1(4.01)} & 17.3(3.30) & 17.7(1.41) \\
1000$^{d}$ & \textcolor{red}{12.2(.621)} & 16.1(.551) & \textcolor{orange}{15.4(3.93)} & 15.8(3.36) & 16.2(0.43) \\
\bottomrule
\multicolumn{6}{c}{Note: KPV and PKIPW are significantly worse than others so their } \\
\multicolumn{6}{l}{results are not showing here.} \\
\end{tabular}
}
\end{table}

\begin{table}
\centering
\caption{GPProxy vs GPProxy(BS):Area under quantile $0.75$-ARC and Coverage (Closer to $0.95$ the better) across different data size $(n)$ with standard error in parentheses. The best result in each row is highlighted in \textcolor{red}{red}.} 
\label{tab:proxyuq}
\adjustbox{max width=\textwidth}{
\begin{tabular}{c c c c c c}
\toprule
 &  & \multicolumn{2}{c}{synthetic} & \multicolumn{2}{c}{demand} \\
\cmidrule(r{0.5em}){3-4} \cmidrule(l{0.5em}){5-6}
n & Metric & \textbf{GPProxy} & GPProxy(BS) & \textbf{GPProxy} & GPProxy(BS) \\
\midrule
\multirow{2}{*}{200} & AUC  & \textcolor{red}{.916(.010)} & .842(.018) & \textcolor{red}{.955(.002)} & .754(.024) \\
                     & Cov  & \textcolor{red}{.938(.017)} & .803(.026) & \textcolor{red}{.949(.002)} & .383(.037) \\
\multirow{2}{*}{500} & AUC  & \textcolor{red}{.934(.004)} & .899(.013) & \textcolor{red}{.957(.001)} & .713(.036) \\
                     & Cov  & \textcolor{red}{.929(.014)} & .852(.020) & \textcolor{red}{.989(.007)} & .389(.024) \\
\multirow{2}{*}{1000}& AUC  & \textcolor{red}{.945(.004)} & .922(.007) & \textcolor{red}{.960(.001)} & .704(.038) \\
                     & Cov  & \textcolor{red}{.910(.019)} & .772(.024) & \textcolor{red}{.983(.001)} & .413(.014) \\
\bottomrule
\end{tabular}
}
\end{table}

\paragraph{Results.} Table~\ref{tab:proxyresults} shows that our method achieves predictive accuracy comparable to, or exceeding, that of existing kernel-based approaches. Beyond accuracy, our method delivers superior uncertainty quantification relative to bootstrap-based alternatives. This is evidenced by the ARC (Figures~\ref{arcproxysyn} and~\ref{arcproxydemand}) presented in Appendix~\ref{appendix:ProxyExp}, where our approach exhibits more informative uncertainty behavior. Consistently, Table~\ref{tab:proxyuq} demonstrates that our method attains uniformly better coverage across varying sample sizes compared to bootstrap variance estimates.

\section{Discussion}
\label{sec: discussion}

We introduced a unified GP framework for IV and Proxy. Leveraging the deconditional perspective, our approach recovers widely used frequentist estimators as posterior means while simultaneously quantifying epistemic uncertainty through the posterior variance. This dual interpretation enables principled model selection via marginal likelihood optimization, contributing to strong predictive performance. Empirical results show that the proposed methods attain competitive accuracy while delivering substantially more informative uncertainty estimates, as reflected in both coverage behaviour and selective prediction performance.

Our approach adopts an end-to-end Bayesian framework for inference over the ATE function, where first-stage estimates are treated as parameters defining the likelihood used in the Bayesian update. We discussed a heuristic bootstrap method accounting for the first-stage UQ in the appendix \ref{UQ1}, where it turns out that the first-stage variance does not make a great difference to the total UQ. Eliciting a GP prior for the first stage necessitates more complex inference and optimization procedures, which lie beyond the scope of this work. We leave this extension for future investigation. A further promising direction is to integrate recent axiomatic explainability tools for GP and kernel methods~\citep{chau2022rkhs,chau2023explaining,mohammadi2026amortized,mohammadi2026exact,mohammadi2026quadrashap}, equipping causal estimation with interpretability while propagating predictive uncertainty to the resulting explanations.

\newpage





\bibliography{main}
\appendix
\onecolumn
\section*{Supplementary Material}

\appendix

\section{Causal Inference}
\label{appendix: causal inference}
Here we introduce the omitted assumptions for identifiablility in the IV and Proxy setting. 

\subsection{Instrumental Variable}

\begin{assumption}
Assumptions to identify the ATE under IV setting:
\begin{itemize} 
    \item \emph{Relevance}: $Z$ has a causal influence on $X$, i.e., $X \nidpt Z$.
    \item \emph{Exclusion}: $Z$ does not directly affect $Y$, i.e., $Y \idpt Z \mid (U, X)$.
    \item \emph{Instrumental Unconfoundedness}: $Z$ is independent of the confounder $U$, i.e., $U \idpt Z$.
\end{itemize}
\end{assumption}

\subsection{Proximal Causal Learning}
\begin{assumption}
Assumptions to identify the ATE under Proxy setting \citep{Miao18:Proxy,mastouri2021proximal}:
    \begin{itemize}
    \item \emph{Structural Assumptions}: We assume $Y\idpt Z\mid (X,U)$, and $W\idpt (Z,X) \mid U$
    \item \emph{Completeness Assumptions}: Let $l,g$ be any square integrable functions, then the following holds: 
    \begin{align}
    &\EE[l(U) \mid X=x, Z=z] = 0 \quad \forall (x,z) \in \cX \times \cZ \nonumber 
    \Longleftrightarrow l(U) = 0 \quad \text{a.s.} \\
    &\EE[g(Z) \mid X=x, W=w] = 0 \quad \forall (x,w) \in \cX \times \cW \nonumber 
    \Longleftrightarrow g(Z) = 0 \quad \text{a.s.}
    \end{align}
    \item \emph{Regularity Assumptions}: Please refer to the assumptions (v) - (vii) in the appendix of \citet{Miao18:Proxy}.
\end{itemize}
\end{assumption}

\newpage

\section{Deconditioning Operator Methods}
\label{appendix: DMO}

Here we introduce the methods using the deconditioning mean operator directly. We begin with the empirical estimator of DMO, provided by \citet{hsu2019bayesian}:
\begin{equation}
   \hat{D}_{X\mid Z} = \Psi_{\bfz} (\bfA^{T}\bfK_{\bfx\bfx}\bfA + n\lambda I )\bfA^{T}\Psi_{\bfx}^{T},
\label{DMOEst}
\end{equation}
where $\bfA = (\bfK_{\bfz\bfz} + \eta I)^{-1} \bfK_{\bfz\bfz}$, and $\eta, \lambda$ are the regularization terms for the inversion.

\subsection{Deconditioning IV (DIV)}
Following the Fredholm integral equation (eq \ref{Structure Equation IV}), we further represent the conditional expectation $\EE[f(X)\mid Z] = g(Z) = \EE[Y\mid Z]$, for some $g: \cZ \rightarrow \mathbf{R}$. We assume $g \in \cH_{\cZ}$. Using DMO, we may recover the ATE $f$ from the conditional expectation $g$:$(D_{X|Z})^{T}g = f$. Hence we have:
\begin{align*}
     f(x) &= \langle k_{x},f \rangle _{\cH_{\cX}} =  \langle k_{x}, (D_{X|Z})^{T}g \rangle _{\cH_{\cX}}\\
     &= \langle D_{X|Z}\,k_{x},g \rangle _{\cH_{\cZ}} 
\end{align*}

Assume we observe the data $\cD$ from the IV structure. Then we express the estimation of $f$ by replacing the DMO with its non-parametric estimator:

\begin{align}
    \hat{f}(x) &= \langle \Psi_{\bfz} (\bfA^{T}\bfK_{\bfx\bfx}\bfA + n\lambda I )\bfA^{T}\Psi_{\bfx}^{T}\,k_{x},g \rangle _{\cH_{\cZ}} \\
    &= \bfg^{T} (\bfA^{T}\bfK_{\bfx\bfx}\bfA + n\lambda I )\bfA^{T}\Psi_{\bfx}^{T}\,k_{x}\\
    &= \bfy^{T}  (\bfA^{T}\bfK_{\bfx\bfx}\bfA + n\lambda I )\bfA^{T} \bfK_{\bfx x}\\
    & = \bfK_{x\bfx} \bfA (\bfA^{T}\bfK_{\bfx\bfx}\bfA + n\lambda I ) \bfy
    \label{DMOIV}
\end{align}

where $\bfg = (g(z_{1}),...,g(z_{n})) =\langle \Psi_{\bfz}\,,g \rangle _{\cH_{\cZ}}$. In the second last step, we estimate $g(z_{i}) = \EE[Y|Z=z_{i}]$ with its point estimate $y_{i}$. Denote $\bfy = (y_{1},...,y_{n})$, then $\bfg$ is replaced with $\bfy$, which follows the same manner in \cite{hsu2019bayesian}.

Note that, the last step here can be seen as a kernel regression from $Y$ to $Z$: 
$Y = g(Z) +e$, where $g \in \cH_{\cZ}$ and $e$ is a demeaned error term independent of $Z$.


\subsection{Deconditioning Proxy (DProxy)}
Following the similar method in the previous section, we represent $g(X,Z) = \EE[Y|X,Z]$ in the (eq \ref{Bridge Function H}). The first step is to recover the bridge function $h$ by the DMO
\begin{equation}
    h(x,w) =  \langle k_{x} \otimes k_{w}, (D_{X,W|X,Z})^{T}g \rangle _{\cH_{\cX} \times \cH_{\cW}}.  
    \label{recoverh}
\end{equation}

The second step is to integrate $U$ out:
\begin{align*}
    f(x) &= \int h(x,W) dP(W)\\
    &= \int \langle D_{X,W|X,Z} \,(k_{x} \otimes k_{W}), g \rangle _{\cH_{\cX} \times \cH_{\cZ}}   dP(W)\\
    &= \langle \int D_{X,W|X,Z} \,(k_{x} \otimes k_{W}) dP(W), g \rangle _{\cH_{\cX} \times \cH_{\cZ}} \\
    &= \langle  D_{X,W|X,Z} \,(k_{x} \otimes \mu_{W}), g \rangle _{\cH_{\cX} \times \cH_{\cZ}}
\end{align*}

Now we replace the operators with their estimators:
\begin{align}
\label{DProxy}
    \hat{f}(x) &= \langle  \hat{D}_{X,W|X,Z} \,(k_{x} \otimes \hat{\mu}_{W}), g \rangle _{\cH_{\cX} \otimes \cH_{\cZ}} \notag\\
    & = \langle (\Psi_{\bfx}\otimes\Psi_{\bfz}) [\bfB^{T}(\bfK_{\bfx\bfx}\odot\bfK_{\bfw\bfw})\bfB + n\lambda I ]^{-1}\bfB^{T}(\Psi_{\bfx}\otimes\Psi_{\bfw})^{T}\,(k_{x}\,\otimes\,\frac{1}{n}\sum_{i=1}^{n} k_{w_{i}}),g \rangle _{\cH_{\cX} \otimes\cH_{\cZ}} \notag\\
    &= \langle (\Psi_{\bfx}\otimes\Psi_{\bfz}),g\rangle _{\cH_{\cX} \otimes\cH_{\cZ}} [\bfB^{T}(\bfK_{\bfx\bfx}\odot\bfK_{\bfw\bfw})\bfB + n\lambda I ]^{-1}\bfB^{T} (\bfK_{\bfx x} \odot \bfK_{\bfw \Bar{w}})\notag\\
    &= (\bfK_{x\bfx} \odot \bfK_{\Bar{w}\bfw})\bfB[\bfB^{T}(\bfK_{\bfx\bfx}\odot\bfK_{\bfw\bfw})\bfB + n\lambda I^{-1}] \bfy   
\end{align}
\newpage

\section{Proof and Omitted Assumptions in Section 3}
\label{appendix:Proof in sec 3}
\subsection{GP for IV}

\begin{assumption}
\label{assumptioniv1}
 $f \sim \cG\cP(0, k)$ to be integrable: $\int_\cX |f(X)| \,dP(X\mid Z=z) < \infty$   
\end{assumption}

\begin{assumption}
\label{assumptioniv2}
The kernel $k$ satisfies $$\EE[|k_X|_{\cH_\cX}] = \int_{\cX'}\int_{\cX} \sqrt{\mathrm{cov}(f(x),f(x'))}dP(X)dP(X')< \infty.$$
\end{assumption}

\begin{proposition}
(CMP for IV) Under the assumptions (\ref{assumptioniv1}, \ref{assumptioniv2}), the conditional mean process $\{g(z),z \in \cZ\}$ induced by $f$ with respect to $P_{X\mid Z}$ defined as: 
$$ g(z) =\EE_X\left[f(X)\mid Z=z\right]= \int f(X) \,dP(X\mid Z=z)$$ is a Gaussian Process $g \sim  \cG\cP(0, q) $, with covariance kernel
\begin{equation*}
q(z, z') = \textnormal{cov}(g(z), g(z'))
= \langle \mu_{X\mid Z=z}, \mu_{X\mid Z=z'} \rangle_{\cH_{\cX}},
\end{equation*}
where $\mu_{X\mid Z=z}$ denotes the CME of $P_{X\mid Z=z}$ in the reproducing kernel Hilbert space $\cH_\cX$.
\end{proposition}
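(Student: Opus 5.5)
We must show that $g(z)=\int f(x)\,dP(x\mid Z=z)$ is a zero-mean Gaussian process with the stated covariance. The plan has three steps: justify the moments by Fubini, identify the covariance through the kernel mean embedding, and then establish joint Gaussianity.

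\textbf{Mean and covariance.} Assumption~\ref{assumptioniv1} makes $g(z)$ well defined pathwise. Assumption~\ref{assumptioniv2} bounds the integral of $\sqrt{k(x,x')}$, which dominates $\mathbb{E}|f(x)|$ and, by Cauchy--Schwarz, $\mathbb{E}|f(x)f(x')|\le\sqrt{k(x,x)k(x',x')}$. We read this bound as holding for the conditional laws $P_{X\mid Z=z}$ as well, i.e.\ $\int\!\sqrt{k(x,x)}\,dP(x\mid z)<\infty$. Under that reading, Fubini lets us exchange $\mathbb{E}_f$ with the integrals against $P(X\mid Z=z)$. This gives $\mathbb{E}[g(z)]=\int \mathbb{E}[f(x)]\,dP(x\mid z)=0$ and
\[
\operatorname{cov}(g(z),g(z'))=\iint k(x,x')\,dP(x\mid z)\,dP(x'\mid z').
\]
Write $k(x,x')=\langle k_x,k_{x'}\rangle_{\cH_\cX}$. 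The Bochner integrability of $k_X$ under $P_{X\mid Z=z}$ (the same square-root condition) lets us pull both integrals inside the inner product. The result is $\langle \mu_{X\mid Z=z},\mu_{X\mid Z=z'}\rangle_{\cH_\cX}=q(z,z')$.

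\textbf{Gaussianity.} It remains to show that for any $z_1,\dots,z_m$ and reals $a_1,\dots,a_m$, the variable $\sum_j a_j g(z_j)$ is Gaussian. This variable equals $\int f\,d\nu$ for the finite signed measure $\nu=\sum_j a_j P_{X\mid Z=z_j}$. To handle it, first consider a discrete measure $\nu_N$ with finitely many atoms. Then $\int f\,d\nu_N$ is a finite linear combination of the jointly Gaussian values $f(x_i)$, and is therefore Gaussian. Next, choose $\nu_N\to\nu$ so that $\int f\,d\nu_N\to\int f\,d\nu$ in $L^2(\Omega)$. We then use the fact that $L^2$ limits of centred Gaussians are Gaussian, with limiting variance $\iint k\,d\nu\,d\nu$, which matches the covariance above.

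\textbf{Main obstacle.} The hardest step is building this $L^2$ approximation rigorously. The approach to try first is to work in the Gaussian Hilbert space spanned by $\{f(x)\}$, which is isometric to $\cH_\cX$ via $f(x)\leftrightarrow k_x$. Under this isometry, $\int f\,d\nu$ corresponds to the Bochner integral $\int k_x\,d\nu(x)$. Empirical or quadrature measures $\nu_N$ with $\int k_x\,d\nu_N\to\int k_x\,d\nu$ in $\cH_\cX$ exist because $k_X$ is Bochner integrable under our reading of Assumption~\ref{assumptioniv2}, for instance by approximating with simple functions. The isometry then transfers this convergence to $L^2(\Omega)$.

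A further check is needed: the pathwise integral must agree almost surely with this $L^2$ limit. This follows from Fubini once more, by computing $\mathbb{E}\big|\int f\,d\nu-\int f\,d\nu_N\big|^2$ as $\|\int k_x\,d(\nu-\nu_N)\|^2_{\cH_\cX}$. The computation is justified because $\mathbb{E}|f(x)f(x')|\le\sqrt{k(x,x)k(x',x')}$ is integrable against $|\nu-\nu_N|\otimes|\nu-\nu_N|$ under the square-root condition. All finite-dimensional marginals are then Gaussian, which completes the proof that $g\sim\cG\cP(0,q)$.
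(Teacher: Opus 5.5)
Your argument is correct. For the mean and covariance it matches the paper's proof: Fubini swaps $\mathbb{E}_f$ with the integrals against $P_{X\mid Z=z}$, and $\iint k\,dP(x\mid z)\,dP(x'\mid z')$ is then identified with $\langle\mu_{X\mid Z=z},\mu_{X\mid Z=z'}\rangle_{\cH_\cX}$. You justify that last identification via Bochner integrability, whereas the paper just writes it down.

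The genuine difference is Gaussianity. The paper gives no argument and defers to the conditional-mean-process result of \citet{chau2021deconditional}. You instead prove it directly via Cram\'er--Wold: you write $\sum_j a_j g(z_j)=\int f\,d\nu$, approximate $\nu$ by finitely supported $\nu_N$ with $\int k_x\,d\nu_N\to\int k_x\,d\nu$ in $\cH_\cX$, and use that $L^2$ limits of centred Gaussians are Gaussian.

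This buys a self-contained proof that makes explicit the hypothesis actually used, namely $\int\sqrt{k(x,x)}\,dP(x\mid z)<\infty$ for every $z$. The printed Assumption~\ref{assumptioniv2} is a marginal condition stated with $k(x,x')$, so your conditional reading is also what the paper's own Fubini steps need. Your route also handles the identification of the pathwise integral with the $L^2$ limit, which a bare citation hides.

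Your last computation, $\mathbb{E}\bigl|\int f\,d\nu-\int f\,d\nu_N\bigr|^2=\bigl\|\int k_x\,d(\nu-\nu_N)\bigr\|^2_{\cH_\cX}$, already gives the $L^2$ convergence by itself, so you can drop the detour through the Gaussian Hilbert space isometry.

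Two details need tightening. First, the pathwise integral and every Fubini step require a jointly measurable version of $(x,\omega)\mapsto f(x)(\omega)$; this is automatic for the continuous paths of the RBF prior, and the paper assumes it tacitly too. Second, a generic simple-function approximant of $x\mapsto k_x$ takes values that need not be feature maps, so it does not correspond to a discrete measure. You can instead build $\nu_N$ in either of two ways:
\begin{itemize}
\item partition $\cX$ into preimages, under $x\mapsto k_x$, of countably many small-diameter subsets of $\cH_\cX$, place one atom per cell, and truncate to finitely many cells using $\int\sqrt{k(x,x)}\,d|\nu|<\infty$;
\item apply the strong law of large numbers in a separable $\cH_\cX$ to samples drawn from each $P_{X\mid Z=z_j}$.
\end{itemize}
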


\begin{proof}
The proof consists of two parts.
\begin{itemize}
    \item CMP, the conditional expectation of a GP remains a valid GP with proper (finite) mean and covariance. This follows the results form~\citet{chau2021deconditional}.
    \item The mean and covariance kernel are $0$ and $q$:
    We first show the mean of the GP is $0$: 
    \begin{equation*}
        \EE_g[g(z)] = \EE_{X}[\EE_f[f(X)]\mid Z=z] = 0.
    \end{equation*}
    For covariance kernel, we have,
    \begin{align*}
        q(z, z') &= \mathrm{cov}(g(z),g(z'))\\
        &= \iint \mathrm{cov}(f(x),f(x'))dP(X\mid Z=z)dP(X'\mid Z=z')\\
        &= \EE_{X,X'}[k(X, X')|Z=z, Z=z']\\
        &= \langle\mu_{X|Z=z}, \mu_{X|Z=z'}\rangle_{\cH_{\cX}},
    \end{align*}
where the Fubini's theorem is applied to the second step, given the regularity assumption stated in the proposition.    
\end{itemize}
\end{proof}

\textbf{Covariance Matrices in the joint Gaussian of $\bfy$ and $f(\bfx)$ in (eq \ref{GPIV})} 
Firstly, we have $\bfM_1 = \text{cov}(f(\bfx),f(\bfx)) = \bfK_{\bfx \bfx}$ as $\text{cov}(f(x_i),f(x_j)) = k_{x_i,x_j} = [\bfM_1]_{ij}$

Now, for $\bfM_2$, we have:
\begin{align*}
    \text{cov}(f(x),y) &= \text{cov}(f(x),g(z))\\
    &= \int \text{cov}(f(x),f(X))dP(X\mid Z=z)\\
    & = \int \langle k_x,k_X \rangle_{\cH_\cX}dP(X\mid Z=z)\\
    &= \langle k_x,\int k_X dP(X\mid Z=z) \rangle_{\cH_\cX}\\
    & = \langle k_x, \mu_{X|Z=z}\rangle_{\cH_\cX}.
\end{align*}

Therefore, $\mathrm{cov}(f(\bfx),\bfy) = \Psi_{\bfx}^{\top}C_{X|Z}\Psi_{\bfz}$, where 
$$
C_{X|Z}\Psi_{\bfz} = [\mu_{X|Z=z_1},...,\mu_{X|Z=z_n}]^\top.
$$

For $\bfM_3 = \bfQ_{\bfz\bfz} + \sigma^2 I$, since we know the additive noise is independent with $g$, therefore their variance should be additive as well. Therefore we only need to look into the covariance matrix of $g(\bfz)$: 
$\bfQ_{\bfz\bfz} = (C_{X|Z}\Psi_{\bfz})^\top (C_{X|Z}\Psi_{\bfz})$ as 
$$\mathrm{cov}(g(z_i),g(z_j)) = \langle \mu_{X|Z=z_i} , \mu_{X|Z=z_j}\rangle_{\cH_\cX} = q(z_i,z_j) = [\bfQ_{\bfz\bfz}]_{i,j}.$$
Therefore, by Gaussian conditioning, we have 
\begin{equation}
\label{GPIV Conditioning}
    f(\bfx)\mid \bfy \sim \cN(\bfM_2 \bfM_3^{-1}\bfy , \bfM_1-\bfM_2 \bfM_3^{-1} \bfM_2^{\top}).
\end{equation}

\begin{proposition}
    Let $\mathbf{\tilde{Q}} =\bfQ_{\bfz\bfz} + \sigma^2 I$. The posterior $f\mid \bfy $ given data $(\bfx, \bfy, \bfz)$ is a GP with mean $\mu$ and covariance $\kappa$:
\begin{align*}
    \mu(x) &= k_x^\top C_{X|Z}\Psi_{\bfz}\mathbf{\tilde{Q}}^{-1}\bfy, \\
    \kappa(x, x') &= k_{x,x'} - k_{x}^\top C_{X|Z}\Psi_{\bfz}\mathbf{\tilde{Q}}^{-1}\Psi_{\bfz}^\top C_{X|Z}^\top k_{x'}.
\end{align*}
\end{proposition}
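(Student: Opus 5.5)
The plan is to combine the conditional mean process result (Proposition~\ref{prop: CMP_IV}) with the standard Gaussian conditioning formula. I will work with finite collections first and then lift the result to the process level. Fix arbitrary test points $x_1^\ast,\dots,x_m^\ast\in\cX$. Under the prior $f\sim\cG\cP(0,k)$ and Assumptions~\ref{assumptioniv1}--\ref{assumptioniv2}, the map $f\mapsto (f(x_1^\ast),\dots,f(x_m^\ast), g(z_1),\dots,g(z_n))$ is a linear functional of a Gaussian process. Here each $g(z_i)=\int f\,dP(X\mid Z=z_i)$, which is a limit of finite linear combinations of point evaluations, and Gaussianity is preserved under $L^2$ limits. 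Hence this vector is jointly Gaussian with mean zero, which is the content of the first bullet of the proof of Proposition~\ref{prop: CMP_IV}. Adding the noise $\bfy=g(\bfz)+\varepsilon$ with $\varepsilon\sim\cN(0,\sigma^2 I)$ independent of $f$ keeps everything jointly Gaussian, so the block structure in \eqref{GPIV} holds for any finite set of test points.

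Next, I identify the blocks using the computations already carried out above. $\mathrm{cov}(f(x),f(x'))=k_{x,x'}$ comes directly from the prior. For the cross term, Fubini (justified by Assumption~\ref{assumptioniv2}) gives $\mathrm{cov}(f(x),y_i)=\mathrm{cov}(f(x),g(z_i))=\langle k_x,\mu_{X\mid Z=z_i}\rangle_{\cH_\cX}$, using independence of $\varepsilon$. Stacking over $i$ yields the row vector $k_x^\top C_{X|Z}\Psi_{\bfz}$. For the data block, $\mathrm{cov}(\bfy,\bfy)=\bfQ_{\bfz\bfz}+\sigma^2 I=\mathbf{\tilde Q}$, again by Proposition~\ref{prop: CMP_IV} and independence. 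Since $\sigma^2>0$ and $\bfQ_{\bfz\bfz}$ is a Gram matrix and hence positive semidefinite, $\mathbf{\tilde Q}$ is positive definite and invertible.

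Applying Gaussian conditioning \eqref{GPIV Conditioning} to the vector $(f(x_1^\ast),\dots,f(x_m^\ast),\bfy)$ gives a Gaussian conditional law. Its mean entries are $k_{x_j^\ast}^\top C_{X|Z}\Psi_{\bfz}\mathbf{\tilde Q}^{-1}\bfy$, and its covariance entries are $k_{x_j^\ast,x_l^\ast}-k_{x_j^\ast}^\top C_{X|Z}\Psi_{\bfz}\mathbf{\tilde Q}^{-1}\Psi_{\bfz}^\top C_{X|Z}^\top k_{x_l^\ast}$. These are exactly $\mu(x_j^\ast)$ and $\kappa(x_j^\ast,x_l^\ast)$. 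These finite-dimensional laws are consistent across choices of test points, because they arise from marginalizing a single joint Gaussian. The kernel $\kappa$ is positive semidefinite, being a conditional covariance. By Kolmogorov extension, the posterior $f\mid\bfy$ is therefore a GP with mean $\mu$ and covariance $\kappa$.

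I expect the main obstacle to be the measure-theoretic step: showing that $g(z_i)$ is jointly Gaussian with the point evaluations of $f$, and that the interchanges of expectation and integration are legitimate. My first approach would be to invoke the result of \citet{chau2021deconditional} cited in Proposition~\ref{prop: CMP_IV}, together with the integrability in Assumptions~\ref{assumptioniv1}--\ref{assumptioniv2}, which yields both $L^2$-measurability of the integral and the Fubini justification. The remaining steps are routine linear algebra.
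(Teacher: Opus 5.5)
Your proposal is correct and follows essentially the same route as the paper. Both arguments obtain joint Gaussianity of the test-point evaluations of $f$ and $\bfy$ from the conditional mean process result, identify the blocks $\bfM_1,\bfM_2,\bfM_3$ via Fubini, apply the Gaussian conditioning formula \eqref{GPIV Conditioning} on arbitrary finite test sets, and pass to a GP via Kolmogorov extension; your remarks on the invertibility of $\mathbf{\tilde{Q}}$ (since $\sigma^2>0$) and on the consistency of the finite-dimensional conditionals just make explicit what the paper leaves implicit.
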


\begin{proof}
\label{proofIV}
   Since (eq \ref{GPIV Conditioning}) holds for any $\bfx$, by Kolmogorov extension theorem, we have the posterior of $f$ given the observations is a Gaussian Process.

   For test points $\tilde{\bfx}=(x,x')$, $f(\tilde{\bfx})|\bfy$ follows the same expression in (eq \ref{GPIV Conditioning}), so in this case, we have $\mu(x) = [\bfM_2]_{1,:} \bfM_3^{-1}\bfy$, $\kappa(x, x')=[\bfM_1]_{1,2} - [\bfM_2]_{1,:}\bfM_3^{-1}[\bfM_2]^{T}_{:,2}$, where $[\bfM_1]_{1,2} = k_{x,x'}$ is the covariance between $f(x)$ and $f(x')$, $[\bfM_2]_{1,:} = k_x^\top C_{X|Z}\Psi_{\bfz}$, $[\bfM_2]_{2,:} = k_{x'}^\top C_{X|Z}\Psi_{\bfz}$ represents the first and second row of the covariance matrix $\mathrm{cov}(f(\tilde{\bfx}),\bfy)$, where 
   $$
    k_x^\top C_{X|Z}\Psi_{\bfz} = [\langle k_x,\mu_{X|Z=z_1}\rangle_{\cH_\cX},...,\langle k_x,\mu_{X|Z=z_n}\rangle_{\cH_\cX}].
   $$
   $\bfM_3=\bfQ_{\bfz\bfz} + \sigma^2 I$ does not change as it only depends on the covariance of $\bfy$, which is irrelevant to the new test data points $x,x'$.   
\end{proof}

\subsection{GP for Proxy}
\begin{assumption}
\label{assumptionproxy1}
 $P(\cdot|x,z)$ is well-defined for all $(x,z) \in \mathcal{X} \times \mathcal{Z}$.   
\end{assumption}

\begin{assumption}
\label{assumptionproxy2}
$h\rightarrow\int_{\mathcal{W}}h(x,W)dP(W\mid X=x,Z=z)$ is a bounded linear functional for every $(x,z) \in \mathcal{X} \times \mathcal{Z}$.
\end{assumption}

\begin{proposition}
(CMP for Proxy) Let $\{g(x,z),x\in \cX,z\in \cZ\}$ be the CMP induced by the bridge function $h$ 
$$g(x,z) = \int_{\cW} h(x,W)\,dP(W|X=x,Z=z).$$ 
Then, under the assumption of regularity (Assumptions \ref{assumptionproxy1},\ref{assumptionproxy2}), by linearity, $g \sim \cG\cP(0, q)$ with the covariance
\begin{equation*}
   q((x,z),(x',z')) = \mathrm{cov}(g(x,z),g(x',z')) = k_{x,x'} \langle \mu_{W\mid X=x,Z=z}, \mu_{W\mid X=x',Z=z'}\rangle _{\mathcal{H}_{\mathcal{W}}}.
\end{equation*}
\end{proposition}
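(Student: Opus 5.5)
The argument mirrors the proof of the CMP for IV and has two parts: show that $g$ is a Gaussian process, then compute its mean and covariance.

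\textbf{Gaussianity.} Fix $(x,z)$. By Assumptions \ref{assumptionproxy1} and \ref{assumptionproxy2}, the map $L_{x,z}: h \mapsto \int_{\cW} h(x,W)\,dP(W\mid X=x,Z=z)$ is a well-defined bounded linear functional, and $g(x,z)=L_{x,z}h$. Any finite collection $(g(x_1,z_1),\dots,g(x_m,z_m))$ is then a vector of bounded linear functionals of the centred Gaussian process $h\sim\cG\cP(0,k_\cX\cdot k_\cW)$. I will obtain each such functional as an $L^2$ limit of finite linear combinations of point evaluations $h(x,w_j)$, for example via Riemann/Monte Carlo sums approximating the integral. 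Each such combination is jointly Gaussian, and $L^2$ limits of jointly Gaussian vectors remain jointly Gaussian. This gives the finite-dimensional Gaussianity, and Kolmogorov extension then gives $g$ as a GP. This is the same route as the CMP argument of \citet{chau2021deconditional} invoked for IV. I expect justifying the limit to be the main obstacle, namely ensuring the integral is an $L^2$ limit of evaluations. I would handle it by noting that $\EE\int |h(x,W)|\,dP \le \int \sqrt{k_{x,x}k_{W,W}}\,dP<\infty$ under bounded kernels, so Fubini applies and the Bochner-type integral is well defined.

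\textbf{Mean.} Fubini gives
$$\EE_h[g(x,z)]=\int \EE_h[h(x,W)]\,dP(W\mid x,z)=0.$$

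\textbf{Covariance.} Again by Fubini,
$$\mathrm{cov}(g(x,z),g(x',z'))=\iint \mathrm{cov}(h(x,W),h(x',W'))\,dP(W\mid x,z)\,dP(W'\mid x',z').$$
The integrand is $k_{x,x'}\,k_\cW(W,W')=k_{x,x'}\langle k_W,k_{W'}\rangle_{\cH_\cW}$. The factor $k_{x,x'}$ is constant in $W,W'$ and comes out of the integral. This is the step where the product kernel structure, with $x$ held fixed inside $h(x,\cdot)$, is used. Moving the integrals inside the inner product, which is valid because Bochner integrals commute with bounded linear functionals, yields
$$k_{x,x'}\,\langle \mu_{W\mid X=x,Z=z},\mu_{W\mid X=x',Z=z'}\rangle_{\cH_\cW}.$$
This is exactly the claimed kernel $q$, and it is positive definite as a product of two positive definite kernels.
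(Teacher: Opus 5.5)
Your proposal is correct and follows the paper's proof. The paper also obtains Gaussianity from the linearity of integration, citing \citet{kanagawa2025gaussian} where you sketch the $L^2$-limit argument yourself, and it computes the zero mean and the covariance $k_{x,x'}\langle \mu_{W\mid X=x,Z=z},\mu_{W\mid X=x',Z=z'}\rangle_{\mathcal{H}_{\mathcal{W}}}$ by the same Fubini step with the product kernel. The one thing you state explicitly that the paper leaves implicit in its regularity assumptions is the integrability condition $\int \sqrt{k_{\cW}(W,W)}\,dP(W\mid X=x,Z=z)<\infty$, for example bounded kernels.
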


\begin{proof}
The joint Gaussianity is guaranteed as the integration is a linear operation, we refer readers to the proof of Gaussianity to \citet{kanagawa2025gaussian}, Section 4. Moreover, we have
$$
\EE_g[g(x,z)] =  \EE_{h}\EE_{W}[h(x,W)|X=x,Z=z] = \EE_{W}[\EE_{h}[h(x,W)]|X=x,Z=z] = 0.
$$
For the covariance kernel, we have 
\begin{align*}
    q((x,z),(x',z')) &= \mathrm{cov}(g(x,z),g(x',z'))\\  
   &=\int \int \mathrm{cov}(h(x,W),h(x',W'))dP(W|X=x,Z=z)dP(W'\mid X=x',Z=z')\\
   &=\int \int k_{x,x'}\,k_{W,W'} \, dP(W|X=x,Z=z)dP(W'\mid X=x',Z=z')\\
   &= k_{x,x'} \langle \mu_{W|X=x,Z=z}, \mu_{W|X=x',Z=z'}\rangle _{\cH_{\cW}}.
\end{align*}

\end{proof}

\begin{assumption}
\label{assumptionproxy3}
 $h\rightarrow\int_{\mathcal{W}}h(x,W)dP(W)$ is a bounded linear functional.   
\end{assumption}

\begin{proposition}
Let $f(x) = \int h(x,W) dP(W)$. Under mild regularity assumption \ref{assumptionproxy3}, $f \sim \cG\cP(0,r)$ a.s. with the covariance kernel:
\begin{equation*}
    r(x,x')= \textnormal{cov}\bigl(f(x), f(x')\bigr)= k_{x,x'}\lVert \mu_{W} \rVert^{2}_{\mathcal{H}_{\mathcal{W}}} .
\end{equation*}
\end{proposition}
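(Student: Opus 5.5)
We follow the same two-part template as the proof of the CMP propositions for IV and Proxy above. First we establish that $f$ is almost surely a Gaussian process. Then we compute its mean and covariance.

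For Gaussianity, note that $h \sim \cG\cP(0,k)$ with $k = k_\cX\cdot k_\cW$. For each fixed $x$, the map $h \mapsto \int_\cW h(x,W)\,dP(W)$ is a bounded linear functional by Assumption~\ref{assumptionproxy3}. Hence, for any finite collection $x_1,\dots,x_m$, the vector $(f(x_1),\dots,f(x_m))$ is the image of the Gaussian process $h$ under a bounded linear map. Such an image is jointly Gaussian, exactly as invoked in the Proxy CMP proof via \citet{kanagawa2025gaussian}, Section 4. One can also view each $f(x_i)$ as an $L^2$ limit of Riemann/Monte Carlo sums $\sum_j a_j h(x_i,w_j)$, and $L^2$ limits of Gaussian vectors are Gaussian. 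Consistency of the finite-dimensional distributions is automatic, so the Kolmogorov extension theorem gives a process. The qualifier ``almost surely'' enters because the integral $\int h(x,W)\,dP(W)$ must exist for almost every sample path of $h$. Boundedness of the functional, together with a finiteness condition of the type in Assumption~\ref{assumptioniv2}, namely $\int\sqrt{k_{W,W}}\,dP(W)<\infty$, should give $\EE_h\int|h(x,W)|\,dP(W)<\infty$, and hence pathwise integrability a.s.

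For the moments, interchanging expectations by Fubini gives $\EE_h[f(x)] = \int \EE_h[h(x,W)]\,dP(W)=0$. For the covariance we write
\begin{align*}
r(x,x') &= \iint \textnormal{cov}\bigl(h(x,W),h(x',W')\bigr)\,dP(W)\,dP(W') = k_{x,x'}\iint k_{W,W'}\,dP(W)\,dP(W')\\
&= k_{x,x'}\,\langle \mu_W,\mu_W\rangle_{\cH_\cW} = k_{x,x'}\lVert\mu_W\rVert^2_{\cH_\cW}.
\end{align*}
Here $W$ and $W'$ are independent copies from $P_W$, because $f(x)$ and $f(x')$ each marginalize $W$ separately against the same law. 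The last step uses the reproducing property, $\EE[k(W,W')] = \langle \EE k_W, \EE k_{W'}\rangle$, with Bochner integrability of $k_W$ ensured by $\EE\sqrt{k_{W,W}}<\infty$.

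The main obstacle is justifying the Fubini interchange in the covariance step and the a.s.\ existence of the pathwise integral. I would handle both with the bound $\EE_h|h(x,W)h(x',W')| \le \sqrt{k_{x,x}k_{x',x'}}\sqrt{k_{W,W}k_{W',W'}}$, from Cauchy--Schwarz. Its integral factorizes and is finite under the integrability condition. This also dominates the mean computation, so both interchanges follow from the same estimate.
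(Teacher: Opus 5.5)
Your proposal is correct and follows the same route as the paper's proof: joint Gaussianity from linearity of the integral (via \citet{kanagawa2025gaussian}), then zero mean and the covariance $k_{x,x'}\iint k_{W,W'}\,dP(W)\,dP(W') = k_{x,x'}\lVert\mu_W\rVert^2_{\cH_\cW}$ by Fubini. The one difference is that you state the integrability condition $\int\sqrt{k_{W,W}}\,dP(W)<\infty$ and the Cauchy--Schwarz domination that justify the interchanges, whereas the paper only says they are ``guaranteed with the regularity assumptions''; this condition is not part of Assumption~\ref{assumptionproxy3} as written, but it holds trivially for the bounded RBF kernels used in the paper.
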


\begin{proof}
\label{IVPostProof}
Similarly, The joint Gaussianity is guaranteed as the integration is a linear operation \citep{kanagawa2025gaussian}. Moreover, we have
$$\EE_{f}[f(x)] = \EE_{h}\EE_{W}[h(x,W)] = \EE_{W}[\EE_{h}[h(x,W)]\,] = 0 ,$$ 

and the covariance kernel,

\begin{align*}
    r(x,x') &= \mathrm{cov}(f(x),f(x')) \\
   &=\int_{\mathcal{W'}} \int_{\mathcal{W}} \mathrm{cov}(h(x,W),h(x',W'))dP(W)dP(W')\\
   &=\int_{\mathcal{W'}} \int_{\mathcal{W}} k_{x,x'}\,k_{W,W'} \, dP(W)dP(W')\\
   &= k_{x,x'} \langle \mu_{W}, \mu_{W'}\rangle _{\cH_{\cW}} =  k_{x,x'}\lVert \mu_{W} \rVert^{2}_{\mathcal{H}_{\mathcal{W}}} .
\end{align*}

Note that, Fubini's theorem is used in the above derivations, which is guaranteed with the regularity assumptions.
\end{proof}

\textbf{Covariance Matrices in the Joint Gaussian of $\bfy$ and $f(\bfx)$ in (eq \ref{GPProxy})} 

 $\bfL_1 = \mathrm{cov}(f(\bfx),f(\bfx)) = c_W \bfK_{\bfx\bfx}$ comes from the covariance kernel $r$ between $f(\bfx)$, where $c_W =  \langle \mu_{W}, \mu_{W'}\rangle _{\cH_{\cW}} =\lVert \mu_{W} \rVert^{2}_{\mathcal{H}_{\mathcal{W}}}= \iint\langle k_{W}, k_{W'} \rangle dP(W)dP(W')$.

 For $\bfL_2$, we have,
 \begin{align*}
     \mathrm{cov}(f(x'),y) &= \mathrm{cov}(f(x'),g(x,z)) \\
&= \mathrm{cov}( \int h(x',W'))dP(W') , \int h(x,W)dP(W\mid X=x,Z=z)   \\
& = \iint  \mathrm{cov}(h(x',W'),h(x,W))\,dP(W')dP(W\mid X=x,Z=z)\\
& = \iint  k_{x,x'} \langle k_{W'}, k_W\rangle \,dP(W')dP(W\mid X=x,Z=z)               \\
&= k_{x,x'}\langle \mu_{W}, \mu_{W\mid X=x,Z=z}\rangle _{\mathcal{H}_{\mathcal{W}}}.
 \end{align*}

Note that, we used multiple times of Fubini's theorems in the above derivations. Therefore, we have $\bfL_2 = \mathrm{cov}(f(\bfx),\bfy) = \bfK_{\bfx\bfx} \odot (\bfmu_{W})^{\top}C_{W\mid X,Z}\Psi_{\mathbf{x},\mathbf{z}}$, where $\bfmu_{W} = (\mu_{W},...,\mu_{W})^{\top} = \mu_{W} 1_{n}$, where we again abuse the notation:
$$C_{W\mid X,Z}\Psi_{\mathbf{x},\mathbf{z}} = [\mu_{W|X=x_1,Z=z_1},...,\mu_{W|X=x_n,Z=z_n}]^\top.$$

For $\bfL_3$, similar to $\bfM_3$, the covariance consists of the variance of $g(\bfx,\bfz)$, which is $\mathbf{Q}_{\mathbf{z}\mathbf{w}\mathbf{x}}$, and the independent additive noise $\sigma^2 I$. The derivation is very similar to the $\bfM_3$ and we omitted it here.

Therefore, Gaussian conditioning gives us

\begin{equation}
\label{GPProxy Conditioning}
    f(\bfx)\mid \bfy \sim \cN (\bfL_2 \bfL_3^{-1}\bfy ,\bfL_1- \bfL_2 \bfL_3^{-1} \bfL_2^{\top}).
\end{equation}

\begin{proposition}
 Let $\mathbf{\tilde{Q}}=(\mathbf{Q}_{\mathbf{z}\mathbf{w}\mathbf{x}} + \sigma^2I)$ and $\mathbf{\tilde{C}}=\Psi_\mathbf{x} \otimes C_{W\mid X,Z}\Psi_{\mathbf{x},\mathbf{z}}$. The posterior $f\mid \mathcal{D}$ is a GP with mean $\mu$ and covariance $\kappa$:
\begin{align*}
\small 
    \mu(x) &= (k_x \otimes \mu_W)^\top \mathbf{\tilde{C}} \mathbf{\tilde{Q}}^{-1} \mathbf{y} \\
    \kappa(x, x') &= c_W k_{x,x'} - (k_x \otimes \mu_W)^\top \mathbf{\tilde{C}} \mathbf{\tilde{Q}}^{-1} \mathbf{\tilde{C}}^\top(k_{x'} \otimes \mu_W)
\end{align*}  
\end{proposition}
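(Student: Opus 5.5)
The argument follows the IV case: apply Gaussian conditioning to the joint law in (eq \ref{GPProxy}), and rewrite the resulting blocks in the tensor-product notation of the statement. The key step is showing that the cross-covariance $\bfL_2$, written with the Hadamard product, equals $(k_x\otimes\mu_W)^\top\mathbf{\tilde C}$ row by row. The rest is the same Kolmogorov-extension argument used in the proof for GPIV.

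\textbf{Step 1 (finite-dimensional posterior).} Take an arbitrary finite set of test points $\tilde\bfx=(x,x')$. The proposition for $f$ makes $f$ a GP with kernel $r$. The CMP proposition for Proxy makes $g$ a GP with kernel $q$, and the noise is additive and independent. Hence $(f(\tilde\bfx),\bfy)$ is jointly Gaussian, with blocks $\bfL_1,\bfL_2,\bfL_3$ evaluated at $\tilde\bfx$. By (eq \ref{GPProxy Conditioning}), the posterior of $f(\tilde\bfx)$ has mean $\bfL_2\bfL_3^{-1}\bfy$ and covariance $\bfL_1-\bfL_2\bfL_3^{-1}\bfL_2^\top$. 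Since $\bfL_3=\mathbf{Q}_{\bfz\bfw\bfx}+\sigma^2I=\mathbf{\tilde Q}$ does not depend on the test points, these finite-dimensional posteriors are consistent. Kolmogorov's extension theorem then gives a GP posterior, exactly as in the proof for the IV case.

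\textbf{Step 2 (identifying the entries).} The $(1,2)$ entry of $\bfL_1$ is $r(x,x')=c_Wk_{x,x'}$, which is the first term of $\kappa$. For $\bfL_2$, the computation preceding the statement gives the $i$-th entry of the row for $x$:
$$\mathrm{cov}(f(x),y_i)=k_{x,x_i}\langle\mu_W,\mu_{W\mid X=x_i,Z=z_i}\rangle_{\cH_\cW}.$$
This factorizes as $\langle k_x\otimes\mu_W,\;k_{x_i}\otimes\mu_{W\mid X=x_i,Z=z_i}\rangle_{\cH_\cX\otimes\cH_\cW}$, using $\langle a\otimes b,c\otimes d\rangle=\langle a,c\rangle\langle b,d\rangle$. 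I read $\mathbf{\tilde C}=\Psi_\bfx\otimes C_{W\mid X,Z}\Psi_{\bfx,\bfz}$ as the column-wise (Khatri--Rao) tensor product, whose $i$-th column is $k_{x_i}\otimes\mu_{W\mid X=x_i,Z=z_i}$. With this reading, the row of $\bfL_2$ for $x$ is $(k_x\otimes\mu_W)^\top\mathbf{\tilde C}$, and the column of $\bfL_2^\top$ for $x'$ is $\mathbf{\tilde C}^\top(k_{x'}\otimes\mu_W)$. Substituting into the Step 1 formulas gives $\mu(x)$ and $\kappa(x,x')$ as stated.

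\textbf{Main obstacle.} The only delicate point is making the notation $\mathbf{\tilde C}$ precise. It must be the column-wise tensor product, not a full Kronecker product, so that $\mathbf{\tilde C}$ is an $n$-column operator into $\cH_\cX\otimes\cH_\cW$. As a consistency check, $\mathbf{\tilde C}^\top\mathbf{\tilde C}$ should reproduce $\mathbf{Q}_{\bfz\bfw\bfx}=\bfK_{\bfx\bfx}\odot\{(C_{W\mid X,Z}\Psi_{\bfx,\bfz})^\top C_{W\mid X,Z}\Psi_{\bfx,\bfz}\}$, which follows from the same inner-product factorization.

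A secondary point is the interchange of covariance and integration. This is already covered by Fubini under Assumptions \ref{assumptionproxy1}--\ref{assumptionproxy3}, as in the preceding propositions. The invertibility of $\mathbf{\tilde Q}$ holds because $\sigma^2>0$ and $\mathbf{Q}_{\bfz\bfw\bfx}$ is positive semidefinite.
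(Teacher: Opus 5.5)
Your proposal is correct and is essentially the paper's proof: Gaussian conditioning of the joint law in \eqref{GPProxy Conditioning}, followed by the Kolmogorov extension argument from the IV case. Your explicit column-wise (Khatri--Rao) reading of $\mathbf{\tilde C}$, checked against $\bfL_2$ and $\mathbf{Q}_{\bfz\bfw\bfx}$, usefully clarifies notation the paper leaves implicit; the one thing to tighten is that joint Gaussianity of $(f(\tilde{\bfx}),\bfy)$ follows because $f$ and $g$ are bounded linear images of the same GP $h$ (Assumptions \ref{assumptionproxy2}--\ref{assumptionproxy3}), not merely because each is separately a GP.
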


\begin{proof}
    This immediately comes from (eq \ref{GPProxy Conditioning}) and the Kolmogorov extension theorem, same as the proof (\ref{proofIV}).
\end{proof}
\newpage

\section{Splitting the Data}
\label{appendix: split data}
Certain methods, particularly two-stage regression approaches \citep{singh2019,mastouri2021proximal,singh2023kernelmethodsunobservedconfounding}, partition the observed data into two distinct sets. For instance, in the instrumental variable setting, one might define $\mathcal{D}_1 = (\mathbf{x}, \mathbf{y}, \mathbf{z})$ and $\mathcal{D}_2 = (\tilde{\mathbf{x}}, \tilde{\mathbf{y}}, \tilde{\mathbf{z}})$. Here, $\mathcal{D}_1$ is used to train the first-stage model (estimating conditional mean embeddings), while $\mathcal{D}_2$ is reserved for the second-stage regression. For hyperparameter selection, these methods often minimize the risk associated with one stage using data from the other stage—for example, tuning second-stage hyperparameters based on performance evaluated with $\mathcal{D}_1$, and vice versa. In some practical settings, the data may be only partially available across the splits; for instance, $\mathcal{D}_1$ might contain only $(\mathbf{x}, \mathbf{z})$, while $\mathcal{D}_2$ contains only $(\tilde{\mathbf{y}}, \tilde{\mathbf{z}})$. However, none of these data-splitting scenarios are assumed in our framework. Regarding hyperparameter selection, because lengthscales typically influence both stages of the model, splitting the data and optimizing hyperparameters separately for each stage becomes cumbersome.

In our approach, the first-stage regression (i.e., the estimation of the conditional mean operator) is embedded within the computation of the GP covariance kernel. Consequently, if we were to decouple the estimation of the CMO from the overall procedure, a similar estimation strategy could be applied provided the hyperparameters are fixed. For instance, in the empirical form of the posterior mean and (co-)variance of GPIV, one could replace the mediation matrix $\mathbf{A} = (\mathbf{K}_{\mathbf{z}\mathbf{z}} + \eta I)^{-1} \mathbf{K}_{\mathbf{z}\mathbf{z}}$ with $\tilde{\mathbf{A}} = (\mathbf{K}_{\mathbf{z}\mathbf{z}} + \eta I)^{-1} \mathbf{K}_{\mathbf{z}\tilde{\mathbf{z}}}$, and replace $\mathbf{y}$ with $\tilde{\mathbf{y}}$:

\begin{equation}
\begin{aligned}
\label{IVSplit}
      \hat{\mu}(x) &= \bfK_{x\bfx}\tilde{\bfA}(\tilde{\bfA}^\top\bfK_{\bfx\bfx}\tilde{\bfA}+ \sigma^2 I)^{-1}\tilde{\bfy}, \\
    \hat{{\kappa}}(x, x') &= k_{x,x'} - \bfK_{x\bfx}\tilde{\bfA}(\tilde{\bfA}^\top\bfK_{\bfx\bfx}\tilde{\bfA}+ \sigma^2 I)^{-1}\tilde{\bfA}^\top\bfK_{\bfx x'}.
\end{aligned}   
\end{equation}

as our CMO $C_{X|Z}$ is estimated from $\mathcal{D}_1$ and the remaining GP regression comes from $\mathcal{D}_2$.

Similarly, for the proxy setting, our CMO $C_{W|X,Z}$ is estimated from $\mathcal{D}_1 = (\mathbf{x},\mathbf{y},\mathbf{z},\mathbf{w})$ and the remaining GP comes from $\mathcal{D}_2 =(\tilde{\mathbf{x}},\tilde{\mathbf{y}},\tilde{\mathbf{z}},\tilde{\mathbf{w}})$ if we choose to split our data. Therefore, the empirical posterior mean and (co-)variance of GPProxy have the form:

\begin{equation}
\begin{aligned}
\label{ProxySplit}
   \hat{\mu}(x) &= \{\mathbf{K}_{x\tilde{\mathbf{x}}} \odot (\mathbf{K}_{\bar{w}\mathbf{w}} \tilde{\mathbf{B}})\} \{\mathbf{K}_{\tilde{\mathbf{x}}\tilde{\mathbf{x}}} \odot (\tilde{\mathbf{B}}^{\top}\mathbf{K}_{\mathbf{w}\mathbf{w}} \tilde{\mathbf{B}}) + \sigma^2 I\}^{-1} \tilde{\mathbf{y}},\\
   \hat{\kappa}(x, x') &= \hat{c}_W \, k_{x,x'} - \{\mathbf{K}_{x\tilde{\mathbf{x}}} \odot (\mathbf{K}_{\bar{w}\mathbf{w}} \tilde{\mathbf{B}}) \}\{\mathbf{K}_{\tilde{\mathbf{x}}\tilde{\mathbf{x}}} \odot (\tilde{\mathbf{B}}^{\top}\mathbf{K}_{\mathbf{w}\mathbf{w}} \tilde{\mathbf{B}}) + \sigma^2 I \} ^{-1}\{\mathbf{K}_{x'\tilde{\mathbf{x}}} \odot (\mathbf{K}_{\bar{w}\mathbf{w}} \tilde{\mathbf{B}}) \}^{\top},
\end{aligned}   
\end{equation}

where we replace the mediation matrix $\mathbf{B}$ with $\tilde{\mathbf{B}}=(\bfK_{\bfx \bfx}\odot\bfK_{\bfz \bfz} + \eta I)^{-1}(\bfK_{\bfx \tilde{\bfx}}\odot\bfK_{\bfz \tilde{\bfz}})$.

However, we do not recommend doing so, especially when the data size is small. This ensures full use of the available information, which is particularly important when the dataset is limited; splitting the data in such cases would risk underutilizing the already scarce samples. From the synthetic experiments for IV (Table \ref{mseresults}), we observe that KIV methods with a training size of 1000 perform similarly to GPIV with a training size of 500. This is largely because data splitting (e.g., an equal split) reduces the effective sample size for each stage, effectively using only half of the information.

To better illustrate the effect of data splitting, we performed additional simulations in the IV settings.

\begin{table}[htbp]
\centering
\caption{MSE and Coverage: effect of data splitting (mean with standard error in parentheses).}
\label{tab:mse_coverage_split}
\adjustbox{max width=\textwidth}{
\begin{tabular}{l c c c}
\toprule
Design & $n$ & MSE & Coverage \\
\midrule
\multirow{3}{*}{log} 
 & 200 & 0.473 (0.295) & 0.917 (0.130) \\
 & 500 & 0.152 (0.076) & 0.984 (0.045) \\
 & 1000 & 0.091 (0.046) & 0.977 (0.048) \\
\addlinespace
\multirow{3}{*}{sine} 
 & 200 & 0.407 (0.149) & 0.816 (0.162) \\
 & 500 & 0.256 (0.132) & 0.852 (0.144) \\
 & 1000 & 0.144 (0.103) & 0.880 (0.202) \\
\addlinespace
\multirow{3}{*}{linear} 
 & 200 & 0.487 (0.317) & 0.934 (0.130) \\
 & 500 & 0.183 (0.098) & 0.964 (0.051) \\
 & 1000 & 0.097 (0.042) & 0.921 (0.074) \\
\addlinespace
\multirow{3}{*}{demand} 
 & 200 & 0.472 (0.028) & 0.902 (0.020) \\
 & 500 & 0.152 (0.010) & 0.945 (0.009) \\
 & 1000 & 0.062 (0.003) & 0.957 (0.008) \\
\bottomrule
\end{tabular}
}
\end{table}

Comparing our results with the GPIV column of Table \ref{mseresults} in the main text, we find that the MSE increases when we split the data, since only half of the information is leveraged. Indeed, the MSE at \(n=1000\) in the table above is almost equal to that at \(n=500\) without data splitting. Moreover, compared to KIV, our method performs slightly better on the synthetic design and substantially better on the demand design. This indicates that the hyperparameter optimization strategy yields considerable improvements in the latter case.


\newpage

\section{Equivalence Between Methods}
\subsection{Instrument Variable}
\begin{proposition}
(Equivalency between KIV, DIV, and GPIV) Assume that $\bfK_{\bfx\bfx}$, $(\bfA^\top\bfK_{\bfx\bfx}\bfA+ \sigma^2 I)$ are invertible, the estimator of KIV, DIV, and the mean posterior of GPIV are equivalent.
\end{proposition}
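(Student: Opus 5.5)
The plan is to reduce all three estimators to closed-form expressions in terms of the Gram matrices and the mediation matrix $\bfA = (\bfK_{\bfz\bfz}+\eta I)^{-1}\bfK_{\bfz\bfz}$. Two matrix identities then show they coincide, after matching the regularizers $\sigma^2 = n\lambda$. The GPIV posterior mean is already in closed form from the previous section:
\[
\hat\mu(x)=\bfK_{x\bfx}\bfA(\bfA^\top\bfK_{\bfx\bfx}\bfA+\sigma^2 I)^{-1}\bfy .
\]
For DIV, I read off the last line of the derivation in Appendix~\ref{appendix: DMO}. There the bracket is meant as an inverse, which is a typo in the displayed estimator of $\hat D_{X\mid Z}$. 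This gives $\hat f(x)=\bfK_{x\bfx}\bfA(\bfA^\top\bfK_{\bfx\bfx}\bfA+n\lambda I)^{-1}\bfy$. With $\sigma^2=n\lambda$, DIV and GPIV are therefore literally the same expression, and no invertibility assumption is needed for this step beyond that of the stated bracket.

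The substantive step is KIV. I will use the non-split form of \citet{singh2019}, with both stages on the same sample, as in Appendix~\ref{appendix: split data}. First stage: $\hat\mu_{X\mid Z=z_i}=\Psi_{\bfx}(\bfK_{\bfz\bfz}+\eta I)^{-1}\bfK_{\bfz z_i}$, so stacking gives the feature matrix $\Psi_{\bfx}\bfA$. Second stage: kernel ridge regression of $\bfy$ on these features in $\cH_\cX$,
\[
\hat f=\arg\min_{f\in\cH_\cX}\ \textstyle\sum_i\big(y_i-\langle f,\Psi_{\bfx}\bfA e_i\rangle\big)^2+n\lambda\|f\|^2 .
\]
Its normal equations give $\hat f=(\Psi_{\bfx}\bfA\bfA^\top\Psi_{\bfx}^\top+n\lambda I)^{-1}\Psi_{\bfx}\bfA\bfy$. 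The form KIV actually reports is $\hat f=\Psi_{\bfx}\alpha$ with $\alpha=(\bfA\bfA^\top\bfK_{\bfx\bfx}+n\lambda\bfK_{\bfx\bfx})^{-1}\bfA\bfy$, obtained by restricting to the span of $\Psi_\bfx$ via the representer theorem. This form needs $\bfK_{\bfx\bfx}$ invertible, which is where that hypothesis enters.

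To connect the two, I apply the push-through identity $(\mathbf{U}\mathbf{V}+cI)^{-1}\mathbf{U}=\mathbf{U}(\mathbf{V}\mathbf{U}+cI)^{-1}$ with $\mathbf{U}=\Psi_{\bfx}\bfA$ and $\mathbf{V}=\bfA^\top\Psi_{\bfx}^\top$. This gives $\hat f=\Psi_{\bfx}\bfA(\bfA^\top\bfK_{\bfx\bfx}\bfA+n\lambda I)^{-1}\bfy$, and evaluating at $k_x$ reproduces the DIV/GPIV formula. At the matrix level, the same conclusion follows by writing $\bfA\bfA^\top\bfK_{\bfx\bfx}+n\lambda\bfK_{\bfx\bfx}=(\bfA\bfA^\top+n\lambda\bfK_{\bfx\bfx}^{-1})\bfK_{\bfx\bfx}$ and using the finite-dimensional identity $(\bfA\bfA^\top\bfK_{\bfx\bfx}+n\lambda I)^{-1}\bfA=\bfA(\bfA^\top\bfK_{\bfx\bfx}\bfA+n\lambda I)^{-1}$. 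That identity holds because $\bfA(\bfA^\top\bfK\bfA+cI)=(\bfA\bfA^\top\bfK+cI)\bfA$, and both inverses exist by the stated assumption together with Sylvester's determinant identity.

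The main obstacle is bookkeeping rather than depth. I need to fix precisely which KIV variant is meant (split versus unsplit samples, and the normalization of $\lambda$, e.g.\ $n\lambda$ versus $m\lambda$). I also need to justify that the dual-coefficient representation coincides with the operator-level ridge solution, which is exactly where invertibility of $\bfK_{\bfx\bfx}$ is used. If KIV is taken with split samples, the same argument goes through with $\bfA$ replaced by $\tilde\bfA$, yielding the split GPIV mean in Appendix~\ref{appendix: split data}.
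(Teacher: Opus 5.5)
Your operator-level argument is correct and settles the KIV step on its own. The normal equations $(\Psi_{\bfx}\bfA\bfA^\top\Psi_{\bfx}^\top+n\lambda I)\hat f=\Psi_{\bfx}\bfA\bfy$, combined with push-through for $U=\Psi_{\bfx}\bfA$, give $\hat f=\Psi_{\bfx}\bfA(\bfA^\top\bfK_{\bfx\bfx}\bfA+n\lambda I)^{-1}\bfy$.

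The matrix-level bridge to ``the form KIV actually reports'' is, however, wrong as written. KIV's published coefficient, which is where the paper's proof starts, is
\[
\hat\alpha=(\bfK_{\bfx\bfx}\bfA\bfA^\top\bfK_{\bfx\bfx}+n\lambda\bfK_{\bfx\bfx})^{-1}\bfK_{\bfx\bfx}\bfA\bfy .
\]
This is also what the representer theorem gives when you substitute $f=\Psi_{\bfx}\alpha$ into your objective. Cancelling the left factor $\bfK_{\bfx\bfx}$ gives $(\bfA\bfA^\top\bfK_{\bfx\bfx}+n\lambda I)^{-1}\bfA\bfy$, with $n\lambda I$, not $n\lambda\bfK_{\bfx\bfx}$ as in your formula.

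Your factorisation $\bfA\bfA^\top\bfK_{\bfx\bfx}+n\lambda\bfK_{\bfx\bfx}=(\bfA\bfA^\top+n\lambda\bfK_{\bfx\bfx}^{-1})\bfK_{\bfx\bfx}$ is also false: the right-hand side equals $\bfA\bfA^\top\bfK_{\bfx\bfx}+n\lambda I$. The two slips cancel, so you reach the right expression, but neither step holds. With scalars $\bfK_{\bfx\bfx}=k$ and $\bfA=a$, your $\alpha$ is $ay/\{k(a^2+n\lambda)\}$, whereas KIV's is $ay/(ka^2+n\lambda)$.

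The repair is exactly the paper's step:
\begin{itemize}
\item write $\bfK_{\bfx\bfx}\bfA\bfA^\top\bfK_{\bfx\bfx}+n\lambda\bfK_{\bfx\bfx}=\bfK_{\bfx\bfx}(\bfA\bfA^\top\bfK_{\bfx\bfx}+n\lambda I)$;
\item invert using invertibility of $\bfK_{\bfx\bfx}$ and cancel $\bfK_{\bfx\bfx}^{-1}\bfK_{\bfx\bfx}$;
\item apply your correct identity $(\bfA\bfA^\top\bfK_{\bfx\bfx}+n\lambda I)^{-1}\bfA=\bfA(\bfA^\top\bfK_{\bfx\bfx}\bfA+n\lambda I)^{-1}$.
\end{itemize}

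Once that is fixed, your proof differs from the paper's in where the work is done. The paper stays entirely at the Gram-matrix level, starting from KIV's closed form, so it genuinely needs $\bfK_{\bfx\bfx}$ invertible. Your RKHS route applies push-through to the operator $\Psi_{\bfx}\bfA$ and never inverts $\bfK_{\bfx\bfx}$. It therefore shows that the second-stage ridge minimiser equals the GPIV posterior mean for every $\lambda>0$. Invertibility of $\bfK_{\bfx\bfx}$ is then needed only to make KIV's particular dual parametrisation well defined, not for the equivalence itself. If you identify KIV with the ridge minimiser, you can drop the dual-form detour entirely. Your handling of DIV (reading the missing inverse as a typo) and the identification $\sigma^2=n\lambda$ agree with the paper.
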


\begin{proof}
KIV \citep{singh2019} yields a nonparametric closed-form estimator of $f$:
\begin{equation*}
     \hat{f}_{KIV}(x) = (\hat{\alpha})^{T}\bfK_{\bfx x}
\end{equation*}
where $\hat{\alpha} =[\bfK_{\bfx\bfx}\bfA(\bfK_{\bfx\bfx}\bfA)^{T}+n\lambda \bfK_{\bfx\bfx}]^{-1}\bfK_{\bfx\bfx}\bfA\bfy$.

We then derive the following identity:
\begin{align*}
    \hat{f}_{KIV}(x) &=\bfK_{x\bfx}[\bfK_{\bfx\bfx}\bfA(\bfK_{\bfx\bfx}\bfA)^{\top}+n\lambda\bfK_{\bfx\bfx}]^{-1}\bfK_{\bfx\bfx}\bfA\bfy \\
    & = \bfK_{x\bfx}[\bfK_{\bfx\bfx}\bfA\,\bfA^{\top}\bfK_{\bfx\bfx}+n\lambda\bfK_{\bfx\bfx}]^{-1}\bfK_{\bfx\bfx}\bfA\bfy \\
    & = \bfK_{x\bfx}[\bfK_{\bfx\bfx}(\bfA\,\bfA^{\top}\bfK_{\bfx\bfx}+n\lambda I)]^{-1}\bfK_{\bfx\bfx}\bfA\bfy\\
    & = \bfK_{x\bfx}(\bfA\,\bfA^{\top}\bfK_{\bfx\bfx}+n\lambda I)^{-1}\, \bfK_{\bfx\bfx}^{-1}\,\bfK_{\bfx\bfx}\bfA\bfy\\
    & = \bfK_{x\bfx}(\bfA\,\bfA^{\top}\bfK_{\bfx\bfx}+n\lambda I)^{-1}\,\bfA\bfy\\
    & = \bfK_{x\bfx}\,\bfA\,(\bfA^{\top}\bfK_{\bfx\bfx}\,\bfA+n\lambda I)^{-1}\bfy\\
    & = \hat{\mu}(x)
\end{align*}

where the last step applies the push-through identity—an important corollary of the Woodbury identity.

This closed-form expression coincides with that of Deconditioned IV and with the posterior mean of GPIV, provided the noise variance $\sigma^2$ is interpreted as the inverse regularization parameter $n \cdot \lambda$ in the second stage.

Now we only need to prove the equivalency between DIV and GPIV. This is trivial as the estimator in DIV (eq \ref{DMOIV}) and GPIV (eq \ref{GPIV}) are identical.
\end{proof}

\begin{remark}
The equivalence remains valid when the data are split according to the scheme described in Appendix \ref{appendix: split data} for both KIV and GPIV, under the assumption that $\bfK_{\bfx\bfx}$ and $(\tilde{\bfA}^\top\bfK_{\bfx\bfx}\tilde{\bfA}+ \sigma^2 I)$ are invertible.
\end{remark}

Therefore, the posterior mean of GPIV inherits the good asymptotic behaviors of KIV \citep{singh2019,lob2026uniforminferencekernelinstrumental}.

\subsection{Proximal Causal Learning}
\label{appendix: ProxyEqv}

\textbf{GPProxy and KNC}
\vspace{0.2cm}
\begin{proposition}
Assuming $\mathbf{K}_{\mathbf{x}\mathbf{x}} \odot (\mathbf{B}^{\top}\mathbf{K}_{\mathbf{w}\mathbf{w}} \mathbf{B})$ is invertible, the posterior mean of GPProxy is equivalent to the estimator given by KNC \citep{singh2023kernelmethodsunobservedconfounding}.
\end{proposition}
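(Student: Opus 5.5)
The plan mirrors the IV equivalence proof: write down the KNC closed form, rearrange it with the kernel trick and a push-through identity, and match it to $\hat\mu(x)$.

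\textbf{Step 1 (KNC closed form).} Following \citet{singh2023kernelmethodsunobservedconfounding}, the second stage regresses $y_i$ on the features
\[
\hat\phi_i = k_{x_i}\otimes \hat\mu_{W\mid X=x_i,Z=z_i}, \qquad \hat\mu_{W\mid X=x_i,Z=z_i}=\Psi_\bfw \bfB_{:,i},
\]
with ridge penalty $n\lambda\|h\|^2$ over $\cH_\cX\otimes\cH_\cW$. Here $\bfB$ is the mediation matrix from the estimated CMO $\widehat C_{W\mid X,Z}$. Let $\Phi=[\hat\phi_1,\dots,\hat\phi_n]$. By the representer theorem the minimiser is
\[
\hat h=\Phi(\Phi^\top\Phi+n\lambda I)^{-1}\bfy .
\]
The estimate is then $\hat f_{KNC}(x)=\langle \hat h, k_x\otimes\hat\mu_W\rangle$. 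If KNC is instead stated in a primal or normal-equation form analogous to KIV's $\hat\alpha$, I would first bring it to this dual form. The analogue of the invertibility of $\bfK_{\bfx\bfx}$ used in the IV proof may be needed for that step. This is presumably why the assumption on $\bfK_{\bfx\bfx}\odot(\bfB^\top\bfK_{\bfw\bfw}\bfB)$ appears: it lets a factor be cancelled before applying the push-through identity $(\Phi\Phi^\top+n\lambda)^{-1}\Phi=\Phi(\Phi^\top\Phi+n\lambda)^{-1}$.

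\textbf{Step 2 (Gram matrix).} Compute $\Phi^\top\Phi$ using the tensor-product inner product:
\[
[\Phi^\top\Phi]_{ij}=k_{x_i,x_j}\,(\bfB^\top\bfK_{\bfw\bfw}\bfB)_{ij},
\qquad\text{so}\qquad
\Phi^\top\Phi=\bfK_{\bfx\bfx}\odot(\bfB^\top\bfK_{\bfw\bfw}\bfB).
\]

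\textbf{Step 3 (evaluation vector).} Similarly,
\[
\langle k_x\otimes\hat\mu_W,\hat\phi_i\rangle = k_{x,x_i}\,(\bfK_{\bar w\bfw}\bfB)_i ,
\]
which gives the row vector $\bfK_{x\bfx}\odot(\bfK_{\bar w\bfw}\bfB)$.

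\textbf{Step 4 (match).} Substituting Steps 2 and 3 gives
\[
\hat f_{KNC}(x)=\{\bfK_{x\bfx}\odot(\bfK_{\bar w\bfw}\bfB)\}\{\bfK_{\bfx\bfx}\odot(\bfB^\top\bfK_{\bfw\bfw}\bfB)+n\lambda I\}^{-1}\bfy .
\]
With $\sigma^2=n\lambda$, this is exactly the empirical $\hat\mu(x)$ of GPProxy.

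\textbf{Main obstacle.} The main difficulty is Step 1. KNC (as opposed to KNC-orig) has to be written with the same first-stage CMO $\widehat C_{W\mid X,Z}$, built from $\bfK_{\bfx\bfx}\odot\bfK_{\bfz\bfz}+\eta I$, and with its second-stage coefficient expression reduced to the dual form above. The Hadamard-product identities in Steps 2–3 are routine once the features are identified.
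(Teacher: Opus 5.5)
Your proposal is correct and follows essentially the paper's argument. The paper starts from KNC's published coefficients $\hat{\alpha}=(\bfM\bfM^\top+n\lambda\bfM)^{-1}\bfM\bfy$ with $\bfM=\bfK_{\bfx\bfx}\odot(\bfB^\top\bfK_{\bfw\bfw}\bfB)$, and uses invertibility of $\bfM$ to cancel it directly, $\{\bfM(\bfM+n\lambda I)\}^{-1}\bfM\bfy=(\bfM+n\lambda I)^{-1}\bfy$; no push-through identity is needed there, since KNC's coefficients are already over the features $k_{x_i}\otimes\hat{\mu}_{W\mid x_i,z_i}$. It then averages $\hat h(x,w_i)$ over $i$ to get $\bfK_{x\bfx}\odot(\bfK_{\bar w\bfw}\bfB)$ and matches $\hat\mu(x)$ with $\sigma^2=n\lambda$. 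Your representer-theorem derivation reaches the same coefficients $(\bfM+n\lambda I)^{-1}\bfy$ directly, and so shows the invertibility assumption is only needed to reconcile with KNC's normal-equation form, not for the equivalence of the fitted functions.
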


\begin{proof}
KNC gives a closed-form estimator of bridge function:
$$
\hat{h}(x,w) = \{\mathbf{K}_{x\mathbf{x}} \odot (\mathbf{K}_{\bar{w}\mathbf{w}} \mathbf{B})\}\hat{\mathbf{\alpha}},
$$
where $\hat{\mathbf{\alpha}}=(\bfM\bfM^{T}+n\lambda\bfM)^{-1}\bfM\bfy$, $\bfM = \mathbf{K}_{\mathbf{x}\mathbf{x}} \odot (\mathbf{B}^{\top}\mathbf{K}_{\mathbf{w}\mathbf{w}} \mathbf{B}) = \bfM^{T}$, and $\bfB = (\bfK_{\bfx \bfx}\odot\bfK_{\bfz \bfz} + \eta I)^{-1}(\bfK_{\bfx \bfx}\odot\bfK_{\bfz \bfz})$ is mediation matrix.

Now we have:
\begin{align*}
 \hat{\mathbf{\alpha}}&=(\bfM\bfM^{T}+n\lambda\bfM)^{-1}\bfM\bfy\\
&= \{\bfM(\bfM^{T}+n\lambda I)\}^{-1}\bfM \bfy\\
&=(\bfM^{T}+n\lambda I)^{-1}\bfM^{-1}\bfM\bfy\\
&=(\bfM^{T}+n\lambda I)^{-1}\bfy\\
&=\{\mathbf{K}_{\mathbf{x}\mathbf{x}} \odot (\mathbf{B}^{\top}\mathbf{K}_{\mathbf{w}\mathbf{w}} \mathbf{B})+n\lambda I\}^{-1}\bfy.
\end{align*}
Moreover, 
\begin{align*}
\hat{f}_{KNC}(x) &= \frac{1}{n}\sum_{i=1}^{n}\hat{h}(x,w_i)\\
&= \frac{1}{n}\sum_{i=1}^{n}(\bfK_{x\bfx}\odot\bfK_{w_i\bfw}\bfB)\hat{\mathbf{\alpha}}\\
&= (\bfK_{\bfx\bfx} \odot \bfK_{\bar{w}\bfw}\bfB)\hat{\mathbf{\alpha}},
\end{align*}
which equals to the mean estimator of GPProxy - the noise variance $\sigma^2$ is interpreted as the inverse regularization hyperparameter $n\cdot\lambda$ in the second stage regression.
\end{proof}

\begin{remark}
The equivalence remains valid when the data are split according to the scheme described in Appendix \ref{appendix: split data} for both KNC and GPProxy, assuming $\mathbf{K}_{\mathbf{\tilde{x}}\mathbf{\tilde{x}}} \odot (\tilde{\mathbf{B}}^{\top}\mathbf{K}_{\mathbf{w}\mathbf{w}} \tilde{\mathbf{B}})$ is invertible.
\end{remark}

Similarly, the posterior mean of GPProxy inherits the asymptotic behaviours from KNC \citep{singh2023kernelmethodsunobservedconfounding}.

\textbf{KNC and KPV}

Both KNC and KPV aim to minimize the empirical loss:
\begin{align*}
&\hat{\eta}_{XW} = \text{argmin}_{\eta \in \mathcal{H_{\mathcal{X}}} \times \mathcal{H_{\mathcal{W}}}}\hat{L}(\eta), \,\text{where}\\
&\hat{L}(\eta) = \frac{1}{n}\sum_{i=1}^{n}(y_i - \langle\eta,k_x\otimes \hat{\mu}_{W|x,z}\rangle_{\mathcal{H_{\mathcal{X}}} \times \mathcal{H_{\mathcal{W}}}})+\lambda||\eta||_{{\mathcal{H_{\mathcal{X}}} \times \mathcal{H_{\mathcal{W}}}}}.
\end{align*}
Then, $\hat{f}(x) = \langle \hat{\eta}_{XW},\hat{\mu}_{W}\otimes k_x\rangle_{\mathcal{H_{\mathcal{X}}} \times \mathcal{H_{\mathcal{W}}}}$. Both methods use the representer theorem to express $\hat{\eta}_{XW}$ as a linear combination of tensor products of kernel features and CME. However, in KNC, the empirical CME $\hat{\mu}_{W|x,z}$ is first estimated from the first-stage regression, and then $\hat{\eta}_{XW}$ is represented as $\hat{\eta}_{XW} = \sum_{i=1}^{n} \alpha_i k_{x_i} \otimes \hat{\mu}_{W|x_i,z_i}$, where the coefficients $\alpha_i$ form a vector. In contrast, KPV first expresses $\hat{\mu}_{W|x_i,z_i} = \sum_{j=1}^{n} \Gamma_{j}(x_j,z_j)k_{w_j}$, and then represents $\hat{\eta}_{XW}$ as $\hat{\eta}_{XW} = \sum_{i=1}^{n} \sum_{j=1}^{n}\alpha_{ij} \Gamma_{j}(x_j,z_j) k_{x_i} \otimes k_{w_j}$, leading to a matrix of parameters $\alpha_{ij}$, which differs from KNC's vector of parameters $\alpha_i$. This leads to the different closed-form estimator between KPV and KNC (and the posterior mean of GPProxy).

\textbf{DProxy and KNC-orig}

Now, we introduce an alternative kernel regression for the proxy settings, which we named as KNC-orig in our paper. KNC-orig adopts the two stage regression as well, and in the first stage, the operator $C_{X,W|X,Z}$ is obtained from kernel regression. In the second stage, KNC-orig learns the $\hat{f}$ as a mapping from $y$ to $\mu_{X,W|X,Z}$. KPV \citep{mastouri2021proximal} and KNC \citep{singh2023kernelmethodsunobservedconfounding}, on the other hands, first obtain the operator $C_{W|X,Z}$ and then regress $y$ against $(\mu_{W|x,z},k_x)$. \citet{mastouri2021proximal} points out that the first stage of KNC-orig contains self-regression $k_x$ to $k_x$, which violates the completeness of assumptions, and hence causes bias. Similarly, since $D_{X,W|X,Z}$ contains the CMO $C_{X,W|X,Z}$, it violates the completeness of the assumptions as well.

KNC-orig gives a closed form of the estimator:
\begin{equation}
\begin{aligned}
   \label{KNC-orig}
  &\hat{f}_{KNC-o}(x) = \frac{1}{n}\sum_{i=1}^{n}(\bfK_{x\bfx}\odot\bfK_{w_i\bfw})[(\bfK_{\bfx\bfx}\odot\bfK_{\bfw\bfw})\bfB\bfB^\top (\bfK_{\bfx\bfx}\odot\bfK_{\bfw\bfw})+n\lambda (\bfK_{\bfx\bfx}\odot\bfK_{\bfw\bfw})]^{-1}
  \\&(\bfK_{\bfx\bfx}\odot\bfK_{\bfw\bfw})\bfB\bfy,
\end{aligned}
\end{equation}

where $\bfB$ is the mediation matrix.

\begin{corollary}
Under mild invertible assumptions, DProxy and KNC-orig give the same estimator of $f$.
\end{corollary}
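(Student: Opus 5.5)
The argument mirrors the proof of the KIV–DIV–GPIV equivalence: reduce both closed forms to a common expression by cancelling the Gram matrix and then applying the push-through identity. Write $\bfK := \bfK_{\bfx\bfx}\odot\bfK_{\bfw\bfw}$, so that $\bfK$ is symmetric and, by the ``mild invertible assumptions'', invertible.

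\textbf{Step 1 (rewrite KNC-orig).} Since $\frac{1}{n}\sum_{i=1}^n \bfK_{w_i\bfw} = \bfK_{\bar w\bfw}$, linearity gives
\[
\frac1n\sum_{i=1}^n (\bfK_{x\bfx}\odot\bfK_{w_i\bfw}) = \bfK_{x\bfx}\odot\bfK_{\bar w\bfw}.
\]
The average over $i$ in (\ref{KNC-orig}) therefore collapses to a single row vector.

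\textbf{Step 2 (cancel $\bfK$).} The bracket in (\ref{KNC-orig}) factors as
\[
\bfK\bfB\bfB^\top\bfK+n\lambda\bfK=\bfK(\bfB\bfB^\top\bfK+n\lambda I).
\]
Its inverse is $(\bfB\bfB^\top\bfK+n\lambda I)^{-1}\bfK^{-1}$, and the $\bfK^{-1}$ cancels against the trailing $\bfK\bfB\bfy$. This leaves
\[
\hat f_{KNC-o}(x)=(\bfK_{x\bfx}\odot\bfK_{\bar w\bfw})(\bfB\bfB^\top\bfK+n\lambda I)^{-1}\bfB\bfy.
\]

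\textbf{Step 3 (push-through).} The identity $(\bfB\bfB^\top\bfK+n\lambda I)^{-1}\bfB=\bfB(\bfB^\top\bfK\bfB+n\lambda I)^{-1}$ follows from
\[
\bfB(\bfB^\top\bfK\bfB+n\lambda I)=(\bfB\bfB^\top\bfK+n\lambda I)\bfB.
\]
It requires both inverses to exist, which is part of the invertibility assumption. Applying it gives
\[
(\bfK_{x\bfx}\odot\bfK_{\bar w\bfw})\,\bfB\,[\bfB^\top(\bfK_{\bfx\bfx}\odot\bfK_{\bfw\bfw})\bfB+n\lambda I]^{-1}\bfy,
\]
which is exactly the last line of (\ref{DProxy}), read with the evident typo fixed so that the inverse sits on the bracket.

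\textbf{Main obstacle.} The delicate point is bookkeeping rather than analysis. In (\ref{DProxy}) the row vector $(\bfK_{x\bfx}\odot\bfK_{\bar w\bfw})$ arises as $(\bfK_{\bfx x}\odot\bfK_{\bfw\bar w})^\top$ from the DMO estimator applied to $k_x\otimes\hat\mu_W$, whereas in KNC-orig it arises from averaging $\bfK_{x\bfx}\odot\bfK_{w_i\bfw}$. I would check that these two vectors agree entrywise, namely $k(x,x_j)\,\hat\mu_W(w_j)$ in both. I would also check that the mediation matrix $\bfB$ is the same in both formulas (built from $\bfK_{\bfx\bfx}\odot\bfK_{\bfz\bfz}$), and that $\bfy$ plays the role of $\bfg$ in DProxy, as in the DIV derivation. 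With these identifications the two estimators coincide.
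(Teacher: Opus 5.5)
Your proposal is correct and matches the paper's proof. The paper also factors $\bfK_{\bfx\bfx}\odot\bfK_{\bfw\bfw}$ out of the bracket and cancels it, uses the push-through identity to move $\bfB$ outside the inverse, and collapses $\frac{1}{n}\sum_i(\bfK_{x\bfx}\odot\bfK_{w_i\bfw})$ into $\bfK_{x\bfx}\odot\bfK_{\bar w\bfw}$ to arrive at (\ref{DProxy}). One refinement: for $\lambda>0$ both brackets are automatically invertible, because $\bfB^\top\bfK\bfB$ (with your $\bfK$) is positive semidefinite and has the same nonzero spectrum as $\bfB\bfB^\top\bfK$, so the only assumption actually needed is invertibility of $\bfK_{\bfx\bfx}\odot\bfK_{\bfw\bfw}$.
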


\begin{proof}
Again, using the Woodbridge identity, we have
\begin{align*}
   & [(\bfK_{\bfx\bfx}\odot\bfK_{\bfw\bfw})\bfB\bfB^\top (\bfK_{\bfx\bfx}\odot\bfK_{\bfw\bfw})+n\lambda(\bfK_{\bfx\bfx}\odot\bfK_{\bfw\bfw})]^{-1}(\bfK_{\bfx\bfx}\odot\bfK_{\bfw\bfw})\bfB \\
    &= [\bfB\bfB^\top (\bfK_{\bfx\bfx}\odot\bfK_{\bfw\bfw})+n\lambda I]^{-1}\bfB\\
    &= \bfB[\bfB^\top (\bfK_{\bfx\bfx}\odot\bfK_{\bfw\bfw})\bfB+n\lambda I]^{-1}
\end{align*}

Rewriting the summation in equation (eq \ref{KNC-orig}): 
$$
\frac{1}{n}\sum_{i=1}^{n}(\bfK_{x\bfx}\odot\bfK_{w_i\bfw}) = (\bfK_{x\bfx}\odot\bfK_{\Bar{w}\bfw}),
$$
we have 
\begin{equation*}
  \hat{f}_{KNC-o}(x) =  (\bfK_{x\bfx}\odot\bfK_{\Bar{w}\bfw})\bfB[\bfB^\top (\bfK_{\bfx\bfx}\odot\bfK_{\bfw\bfw})\bfB+n\lambda I]^{-1}\bfy,
\end{equation*}
which is in the same form of DProxy estimator (eq \ref{DProxy}).  
\end{proof}

\newpage

\section{First stage Uncertainty Quantification}
\label{UQ1}
To incorporate the first-stage estimation uncertainty into the whole pipeline, we use bootstrap procedures to obtain an empirical distribution of the CMOs of the first stage, and subsequently incorporate them into the second stage one by one, and obtain a hierarchical distribution over the Gaussian processes.
We can now analyse the uncertainty induced by adding this extra procedure. We decompose the total uncertainty in this case. We let $f \mid \mathcal{D}$ be denoted by $f$, and let $C$ be the estimator of $C_{X|Z}$ (the conditional Mean operator). Then
$$
\mathrm{Var}[f] = \mathrm{Var}_C\bigl[\mathbb{E}_f[f\mid C]\bigr] + \mathbb{E}_C\bigl[\mathrm{Var}_f[f\mid C]\bigr].
$$

The second term is the second‑stage uncertainty; our posterior variance provides an asymptotically unbiased point estimate of it as the empirical estimator of conditional mean embedding converges to its population counterpart in the RKHS norm with rate $O(m^{-1/4})$, with the decrease of regularization \citep{song2012feature}. The first term captures first‑stage variance, which can be estimated via bootstrap on $\hat{C}_{X|Z}$ or a model‑based approach (e.g., a Bayesian prior for the CMO). The former comes from the uncertainty due to having a finite sample in the first stage, i.e., the sampling uncertainty, whereas the latter comes from the prior of the model. We focus on the former here and will explore the latter in our follow‑up work.

In practice, we bootstrapped the data for the first stage $\mathcal{D}_{1}$ and estimated different CMOs, then used them to obtain the second stage mean estimator from the second stage data $\mathcal{D}_{2}$ which is not being bootstrapped. If we do not split data then $\mathcal{D}_{1}= \mathcal{D}_{2}= \mathcal{D} $. We will obtain multiple posterior means for $f$. Estimating their variance would give us a bootstrap estimate of the first term in the variance decomposition. For the second stage, it would be the posterior variance from the original data.

\begin{figure}[htbp]  
  \centering
  \includegraphics[width=0.7\columnwidth]{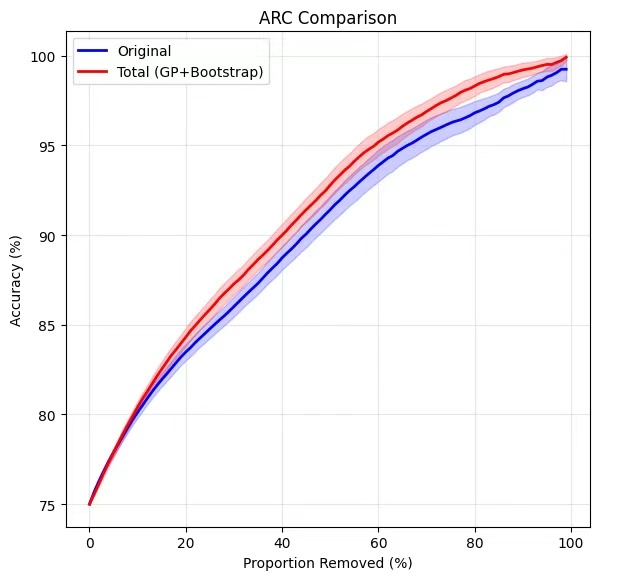}
  \caption{ARC curves for the Demand design: tiny improvement from counting first stage uncertainty.}
  \label{ARCDemandUQ1}
\end{figure}

\begin{figure}[htbp]  
  \centering
  \includegraphics[width=0.7\columnwidth]{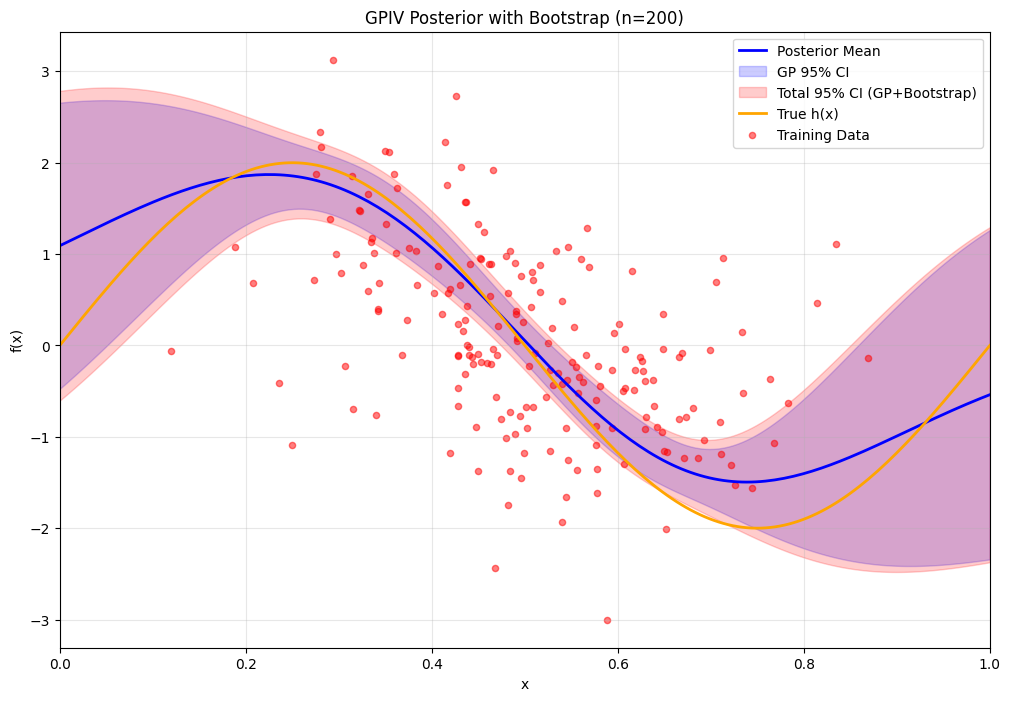}
  \caption{Demo of the first stage UQ in the sine design.}
  \label{DemoUQ1Sine}
\end{figure}

Experiments (Table \ref{tab:UQ1}, Figs \ref{ARCDemandUQ1}-\ref{DemoUQ1Sine}) show that the second‑stage variance dominates the first‑stage variance, often by a factor of $10$ to $1000$. More interestingly, the first‑stage variance provides a small positive “correction” to UQ: when coverage is below $95\%$, it increases coverage by $3‑10\%$; otherwise, the improvement is negligible $(0 - 0.5\%)$. In terms of ARC, the first‑stage variance makes no substantial difference. Given that the uncertainty largely originates from the second‑stage prior on $f$, and that incorporating first‑stage uncertainty would break the closed‑form posterior variance of the second stage, we therefore do not incorporate this first‑stage UQ into the main paper.

\begin{table}[htbp]
\centering
\caption{AUC and Coverage for Second Stage Uncertainty and Combined Uncertainty (mean with standard error in parentheses).}
\label{tab:UQ1}
\adjustbox{max width=\textwidth}{
\begin{tabular}{l l c c c c}
\toprule
Design & IV Strength & \textbf{Second Stage cov} & \textbf{Combined cov} & \textbf{Second Stage AUC} & \textbf{Combined AUC} \\
\midrule
\multirow{3}{*}{log} 
 & high   & 0.8992(0.0929) & 0.9026(0.0905) & 0.8921(0.0285) & 0.8984(0.0298) \\
 & median & 0.8886(0.0813) & 0.8962(0.0795) & 0.9137(0.0033) & 0.9125(0.0035) \\
 & low    & 0.7618(0.2736) & 0.7986(0.2512) & 0.9102(0.0047) & 0.9093(0.0102) \\
\addlinespace
\multirow{3}{*}{sine} 
 & high   & 0.8640(0.1206) & 0.8724(0.1173) & 0.8875(0.0254) & 0.8903(0.0260) \\
 & median & 0.8388(0.1064) & 0.9392(0.0774) & 0.8318(0.0768) & 0.8435(0.0855) \\
 & low    & 0.8034(0.1731) & 0.8948(0.1307) & 0.7998(0.0758) & 0.7953(0.0799) \\
\addlinespace
\multirow{3}{*}{linear} 
 & high   & 0.8764(0.1088) & 0.8784(0.1173) & 0.8894(0.0192) & 0.8993(0.0244) \\
 & median & 0.9146(0.1424) & 0.9170(0.1416) & 0.9025(0.0239) & 0.9017(0.0219) \\
 & low    & 0.9542(0.0759) & 0.9596(0.0710) & 0.9123(0.0041) & 0.9117(0.0049) \\
\addlinespace
demand  & - & 0.8590(0.0466) & 0.9160(0.0486) & 0.8698(0.0342) & 0.8878(0.0385) \\
\bottomrule
\end{tabular}
}
\end{table}

\newpage

\section{Extension of GPProxy with observed confounders}
Some of existing literature \citep{mastouri2021proximal,singh2023kernelmethodsunobservedconfounding} also considers scenarios where a subset of the confounders, denoted by $C$, is observed, as represented in the DAG \ref{DAGProxyEx}. Under analogous identification assumptions (for details, see \citet{mastouri2021proximal}), the average treatment effect (ATE) remains identifiable via a Fredholm integral equation of the form:

\begin{equation*}
    \EE[Y\mid X=x,C=c,Z=z] = \int h(x,c,W) dP(W\mid X=x,C=c,Z=z).
\end{equation*}
Marginalizing $C$ and $W$ out from the bridge function $h$, we have the ATE:
\begin{equation*}
    f(x) = \EE[Y \mid do(X=x)] = \int h(x,c,w) dP(C,W).
\end{equation*}

\begin{figure}[htbp]
    \centering
    \includegraphics[width=0.45\textwidth]{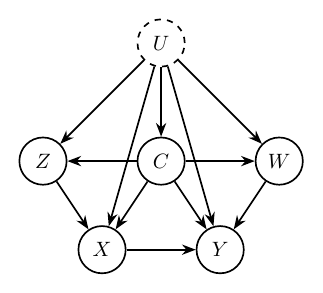}
    \caption{DAG of Proxy setting when the observed confounders exist.}
    \label{DAGProxyEx}
\end{figure}

The procedure of construction for the posterior $f$ is similar to the Section 3.2:
\begin{itemize}
    \item Elicit a GP prior for the bridge function: $h \sim \cG\cP(0, k_\cX\cdot k_\cC \cdot k_\cW)$.
    \item Let $g(x,c,z) = \EE[Y\mid X=x,C=c,Z=z]$, under regularity assumptions on the covariance kernel, $g \sim \cG\cP(0,q)$ is still a GP under the integration which is a linear operation. Then covariance kernel $q((x,c,z),(x',c',z')) = k_{x,x'}k_{c,c'}\langle\mu_{W\mid x,c,z},\mu_{W\mid x’,c',z'}\rangle_{\cH_\cW}$.
    \item Similarly, $f(x) \sim \cG\cP(0,r)$ under the regularity assumptions, where $r(x,x') = k_{x,x'}\lVert \mu_{W} \rVert^{2}_{\mathcal{H}_{\mathcal{W}}}$.
    \item Consider the additive noise model: $\bfy \mid \bfx,\bfc,\bfz \sim N(g(\bfx,\bfc,\bfz),\sigma^2 I)$. The properties above guarantees the joint Gaussianity between $\bfy$ and $f(\bfx)$, where 
    $$\mathrm{cov}(y,f(x)) = k_{x,x'}\langle\mu_{C},k_{c'}\rangle_{\cH_\cC}\langle\mu_{W},\mu_{W'\mid c',x',z'}\rangle_{\cH_\cW}$$.
    \item From Gaussian conditioning, the posterior of $f$ is still a GP with mean $\mu(x)$ and covariance kernel $\kappa(x,x')$:
\begin{equation*}
    \begin{aligned}
        \mu(x) &= (k_x \otimes \mu_C \otimes \mu_W)^\top (\Psi_\mathbf{x} \otimes \Psi_\bfc \otimes C_{W\mid C,X,Z}\Psi_{\bfc, \mathbf{x},\mathbf{z}} )(\mathbf{Q} + \sigma^2I)^{-1} \mathbf{y}   \\
        \kappa(x, x') &= k_{x,x'}\,c_{CW} - (k_x \otimes \mu_C \otimes \mu_W)^\top (\Psi_\mathbf{x} \otimes \Psi_\bfc\otimes C_{W\mid C,X,Z}\Psi_{\bfc,\mathbf{x},\mathbf{z}})(\mathbf{Q} + \sigma^2I)^{-1} \\
        &\quad (\Psi_\mathbf{x}\otimes \Psi_\bfc \otimes C_{W\mid C, X,Z}\Psi_{\bfc,\mathbf{x},\mathbf{z}})^{\top}(k_{x'} \otimes \mu_C  \otimes \mu_W)
    \end{aligned}
\end{equation*}
\end{itemize}

where 
\begin{align*}
\mu_W &= \int k_W \, dP(W),\\
\mu_C &= \int k_C \, dP(C),\\
c_{CW} &= \iint \langle \mu_C \otimes \mu_W, \mu_{C'} \otimes \mu_{W'} \rangle_{\cH_\cC \otimes \cH_\cW} dP(C,W)dP(C',W').
\end{align*}

Similarly, $\mathbf{Q} = (C_{W\mid C,X,Z}\Psi_{\bfc,\mathbf{x},\mathbf{z}})^{\top}(C_{W\mid C,X,Z}\Psi_{\bfc,\mathbf{x},\mathbf{z}})$.

Substituting the kernel mean embeddings $\mu_W$, $\mu_C$ and the operator $C_{W\mid C,X,Z}$ with their empirical estimators yields the corresponding estimators $\hat{\mu}(x)$ and $\hat{\kappa}(x,x')$.



\newpage

\section{Experiments Details and Additional Simulations}
\subsection{RBF kernel}
We use an RBF kernel with amplitude as our GP prior in all of our experiment designs:
\begin{equation*}
    k(x_1,x_2) = \exp\left(-\frac{(x_1-x_2)^2}{2\,l^2_{x}}\right),
\end{equation*}
where $l_x$ represents the lengthscale.

\subsection{IV Experiments Settings}
\textbf{Synthetic data.}\quad 
Here are the details of data generating process:

\begin{align*}
    \begin{bmatrix}
    e\\
    V\\
    W
    \end{bmatrix}
    \sim 
    \cN\Bigg(
    \begin{bmatrix}
    0 \\
    0 \\
    0
    \end{bmatrix},
    \begin{bmatrix}
    1& \rho &0\\
   \rho & 1 & 0\\
     0&0&1
    \end{bmatrix}
    \Bigg)
\end{align*}
and $$X = \Phi\left(\alpha W+(1-\alpha)V\right),\quad Z = \Phi(W).$$ 

In the main studies, we set the power of confounding $\rho = 0.5$ and IV strength to be moderate ($\alpha = 0.5$). Moreover, observations are generated by $Y = f(X) + e, E[e|Z] = 0$. Therefore, we have $f(x) = E[Y \mid do(X=x)]$ which is the average treatment effect (ATE). We test three functions $f(x)$:
\begin{enumerate}
    \item Sine: $f(x) = 2\sin(2\pi x)$ (Same as DualIV's design)
    \item Log (Sigmoid): $f(x) = \log (|16x-8|+1)\times\mathrm{sgn}(x-0.5)$ (Same as KIV's design)
    \item Linear: $f(x) = 4x-2$ (Same as KIV's design)
\end{enumerate}

We visualize examples for the three designs in figure \ref{syndemo}.

\begin{figure}[b]
    \centering
    \includegraphics[width=0.32\textwidth]{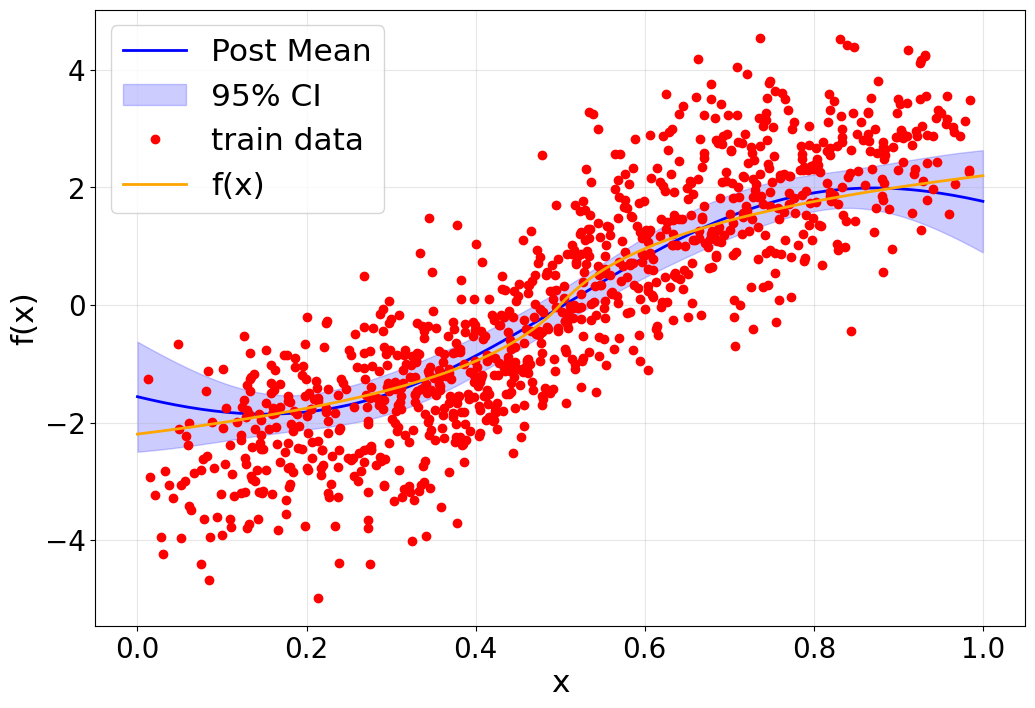}
    \hfill
    \includegraphics[width=0.32\textwidth]{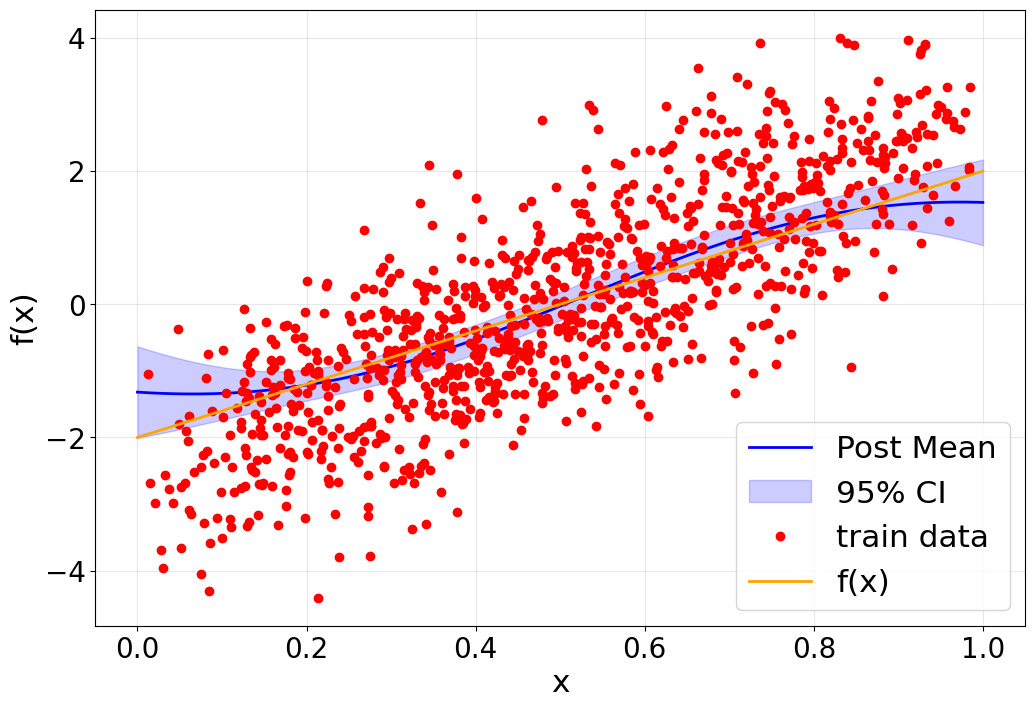}
    \hfill
    \includegraphics[width=0.32\textwidth]{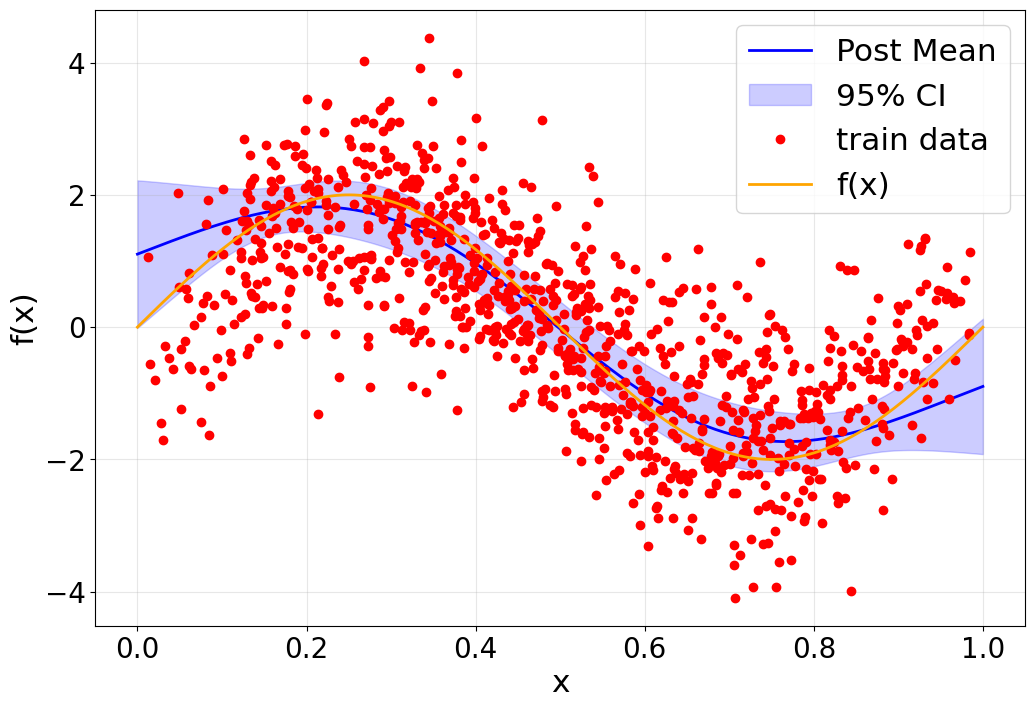}
    \caption{Demo of synthetic data with data size $= 1000$. Left: \text{log}, middle: \text{linear}, right: \text{sine}.}
    \label{syndemo}
\end{figure}

We provide the ARC for \text{sine} and \text{linear} design omitted in the main pages, which shows same behavior across three designs that GPIV perform best across other competitors. Compared with the previous version of this draft, we unified the seed across different functions and evaluations in the synthetic data. The updated result in the table \ref{mseresults} shows small but positive changes and the claim and interpretation of the results do not change. 

\begin{figure}[]
    \centering
    \includegraphics[width=0.95\textwidth]{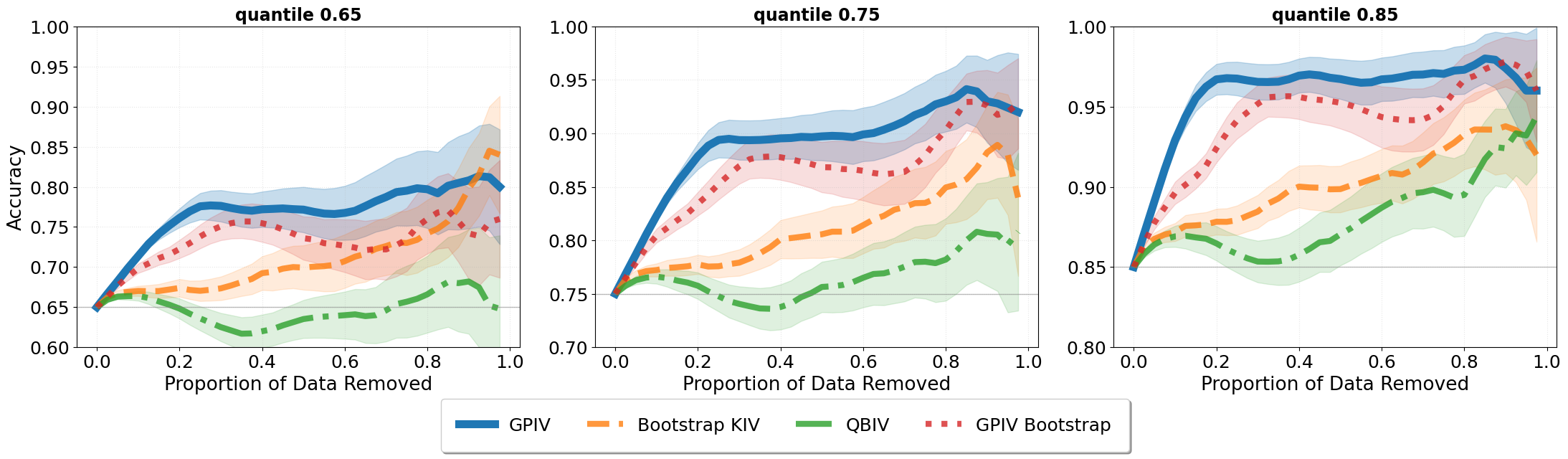}
    \caption{Accuracy-rejection curve for sine design (data size $=200$), with quantile $= 0.65,0.75,0.85$.}
    \label{arcsine}
\end{figure}
\begin{figure}[]
    \centering
    \includegraphics[width=0.95\textwidth]{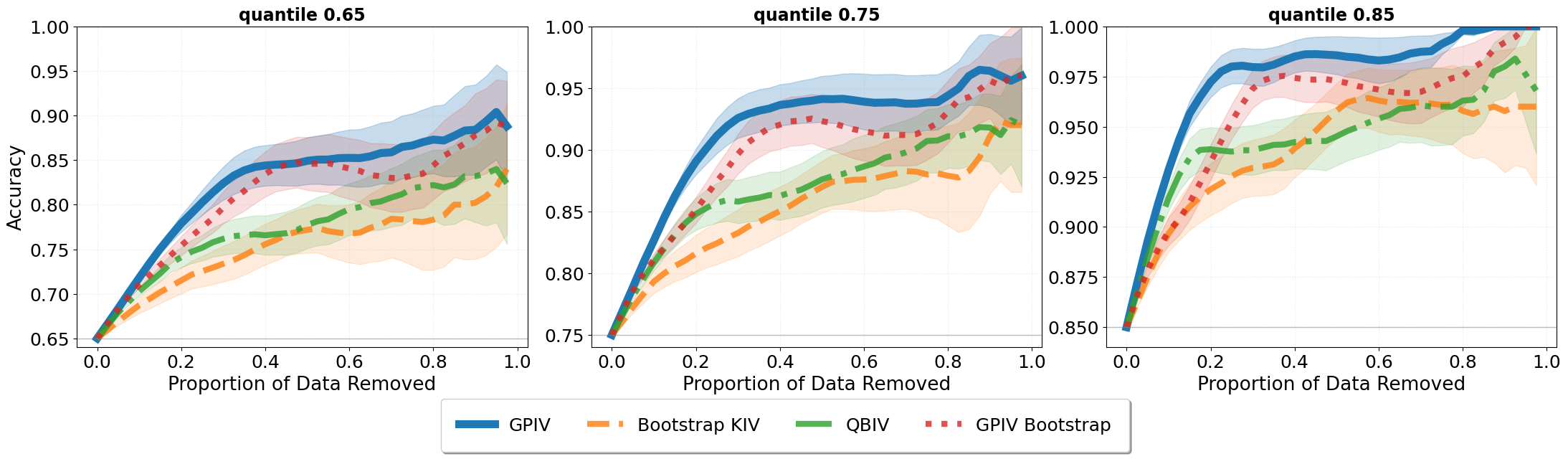}
    \caption{Accuracy-rejection curve for linear design (data size $=200$), with quantile $= 0.65,0.75,0.85$.}
    \label{arclinear}
\end{figure}
\begin{figure}[]
    \centering
    \includegraphics[width=0.95\textwidth]{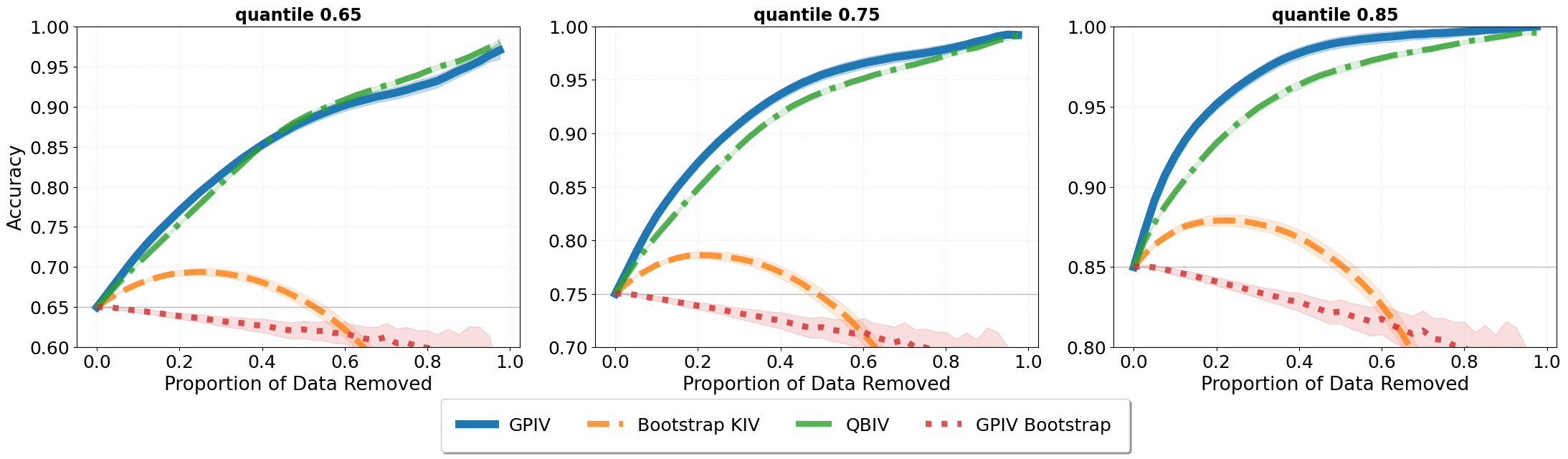}
    \caption{Accuracy-rejection curve for demand design (data size $=200$), with quantile $= 0.65,0.75,0.85$.}
    \label{arcdemand}
\end{figure}

\begin{figure}[]
    \centering
    \includegraphics[width=0.45\textwidth]{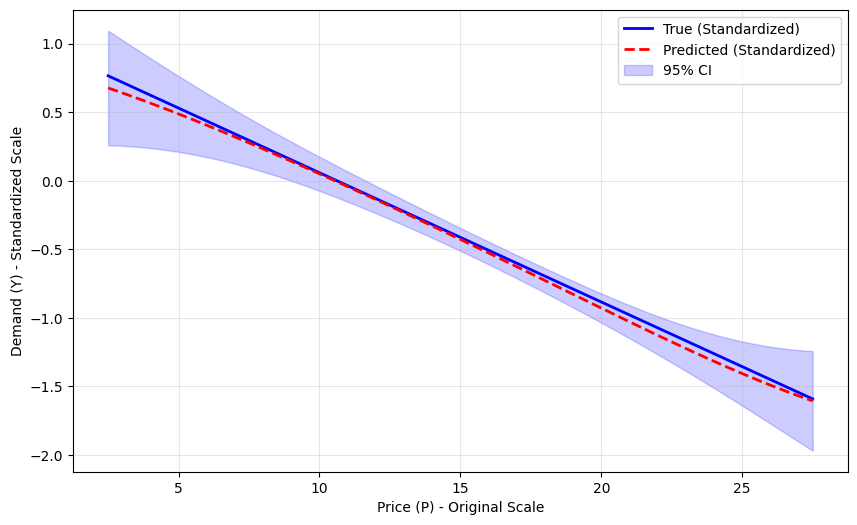}
    \hfill
    \includegraphics[width=0.45\textwidth]{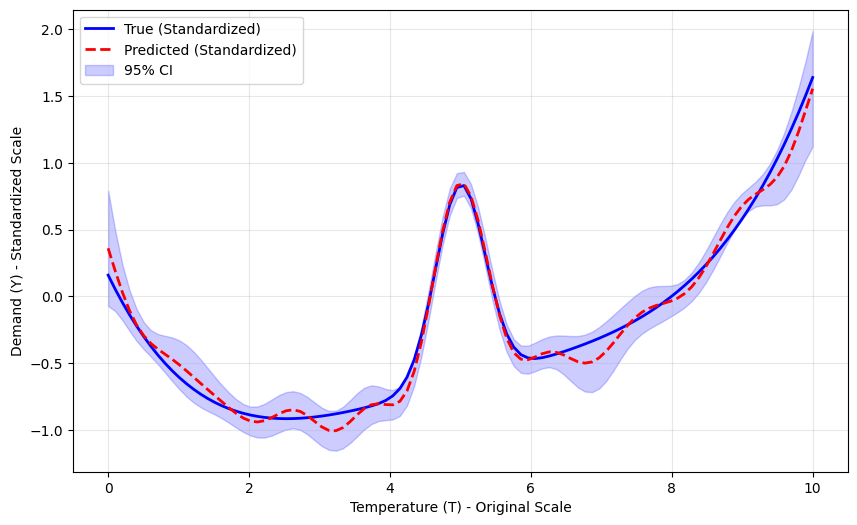}
    \caption{Demo of demand data with data size $=1000$. Left: the estimated $\hat{f}(p,t=2 ,s = 4)$. Right: the estimated $\hat{f}(p=20,t,s=4)$.}
    \label{demanddemo}
\end{figure}

\textbf{Demand data.}\quad
Data generating process:
\begin{align*}
    &S \sim U\{1,2,3,4,5,6,7\}\\
    &T \sim U[0,10]\\
    &C \sim N(0,1)\\
    &V \sim N(0,1)\\
    &\epsilon \sim N(\rho V, 1-\rho^2)\\
    & P = 25+(C+3)h(T)+V
\end{align*}

The observation are generated by $Y = f(P,T,S) + \epsilon, \mathbb E[\epsilon|C,T,S] = 0$. Here we shall consider $X=(P,T,S)$ as the input, and $Z=(C,T,S)$ as the instruments, as the KIV paper suggested.

The structural function is designed as follow:
\begin{align}
    &f(P,T,S) = 100+S(10+P)f(T) - 2P \\
    &f(T) = 2\bigg( \frac{(T-5)^4}{600}+\exp(-4(T-5)^2)+\frac{T}{10} -2 \bigg)
\end{align}

Here we are interested in ATE $f(P,T,S)$, notably, in the DFIV paper, they consider $T,S$ as covariates, $P$ and $C$ as input and instrument variables. Therefore, in their paper, ATE is considered as $f(p) = \mathbb E[Y|do(P=p)] = \mathbb E_{T,S}[f(p,T,S)]$, and CATE is defined as $\mathbb E[Y|do(P=p),T=t] = \mathbb E_{S}[f(p,t,S)]$.

\subsection{Ablation Studies on Standard GP and GPIV}
When confounding is present, both the posterior mean and variance from a standard Gaussian Process (GP) are affected. Specifically, the posterior mean becomes biased, and the posterior variance is smaller than that of GPIV. This results in an over-confident $95\%$ confidence interval and consequently poor coverage.

To illustrate this point, we amplify the confounding effect by increasing the correlation coefficient $\rho$ to $0.9$ in the synthetic data generation process.

\begin{figure}[]
    \centering
    \includegraphics[width=0.85\textwidth]{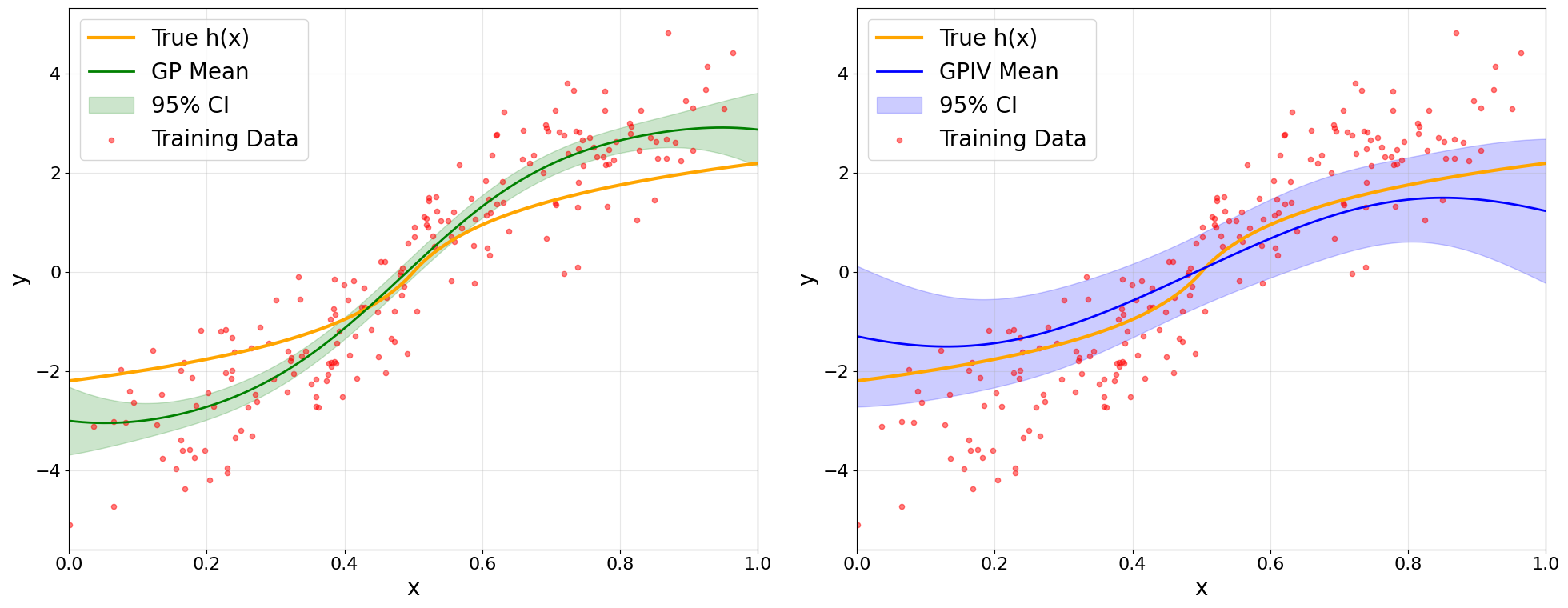}
    \caption{Demo Plots of log design of Standard GP and GPIV with data size $=200$}
    \label{StandardGPlog}
\end{figure}

Figure \ref{StandardGPlog} exemplifies the comparison between the standard GP and GPIV, illustrating that the posterior mean of the standard GP tends to be biased and its posterior variance is underestimated. This leads to an inappropriately narrow confidence interval and consequently inadequate coverage performance. 
\begin{figure}[]
    \centering
    \includegraphics[width=0.75\textwidth]{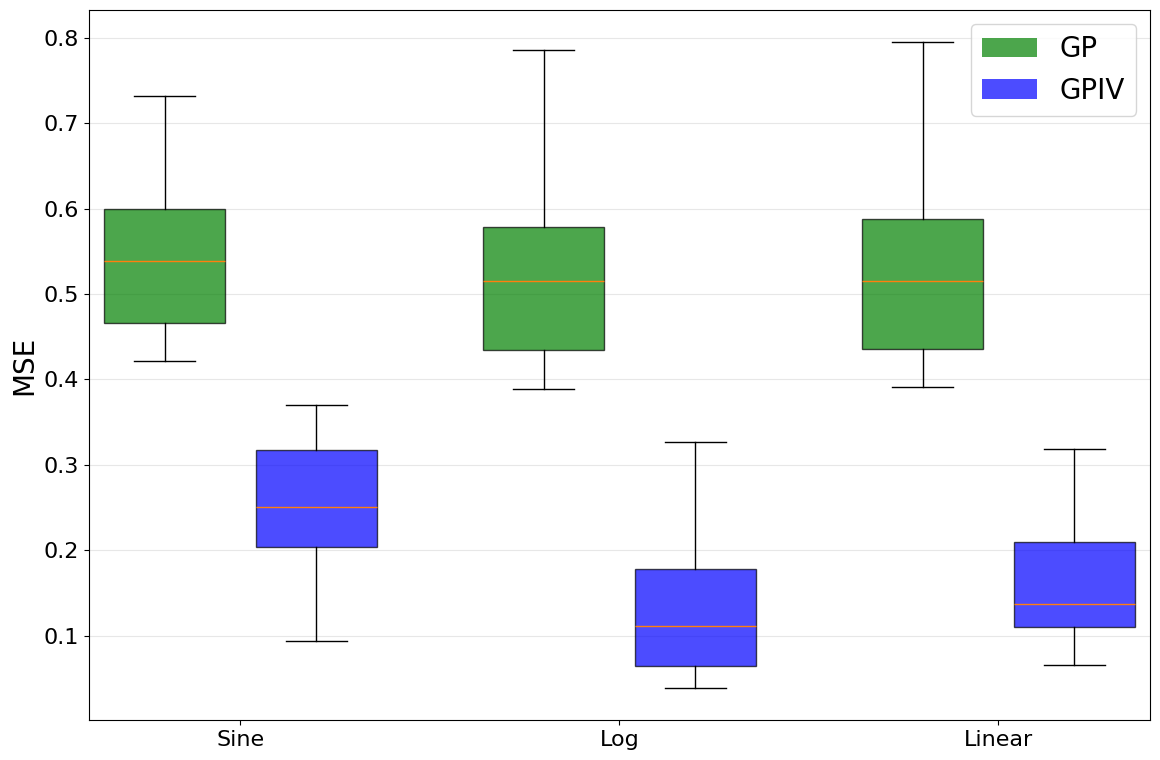}
    \caption{MSE comparison of Standard GP and GPIV.}
    \label{MSEStandardGP}
\end{figure}

Figure \ref{MSEStandardGP} indicates that bias is consistently observed in the standard GP across all experimental designs due to confounding. Furthermore, the coverage performance of the standard GP is severely compromised, with empirical coverage rates falling between only $13\%$ and $25\%$ across the different designs.

\subsection{Additional Studies on Active Learning}

\label{appendix:activelearning}
We provide an addition study on active learning to test the usefulness and quality of our uncertainty. Active learning \citep{aggarwal2014active,gal2017deep} is an approach where the model actively selects the most informative data points to be labeled by. Instead of learning from a random, pre-labeled dataset, it asks questions or requests labels for the specific instances it finds most uncertain or valuable. This interactive process allows the model to achieve high accuracy with significantly less labeled data. In causal inference and clinical trial design, labeling patient data or conducting new experiments is often costly. \citet{gao2025activecq} address this by proposing ActiveCQ, an active learning framework for efficiently estimating causal quantities. We test this on our IV setting. Compare with \citet{gao2025activecq}, we rank the pooled data by posterior variance and choose the data with the highest posterior variance instead of designed acquisition function:

\begin{itemize}
    \item Initial Training Data: $\mathcal{D}_{T} = (x,y,z)_{1:n_T}$, Pooled Data: $\mathcal{D}_{P} = (\tilde{x},\tilde{y},\tilde{z})_{1:n_P}$, and Test data.)
    \item rank all the pooled data $d \in \mathcal{D}_P$ by their posterior variance, and pick the one with the highest posterior variance, move it to the $\mathcal{D}_T$. Record the MSE.
    \item Repeat the previous step until reach the budgets. Plot the MSE against the number of acquired data points.
\end{itemize}

We set the number of initial training data to be $30$, and size of pooled data to be $300$. Each time we add one data with the highest variance to the training data, and the budget is $100$.

\begin{figure}[t]
    \centering
    \includegraphics[width=0.475\textwidth]{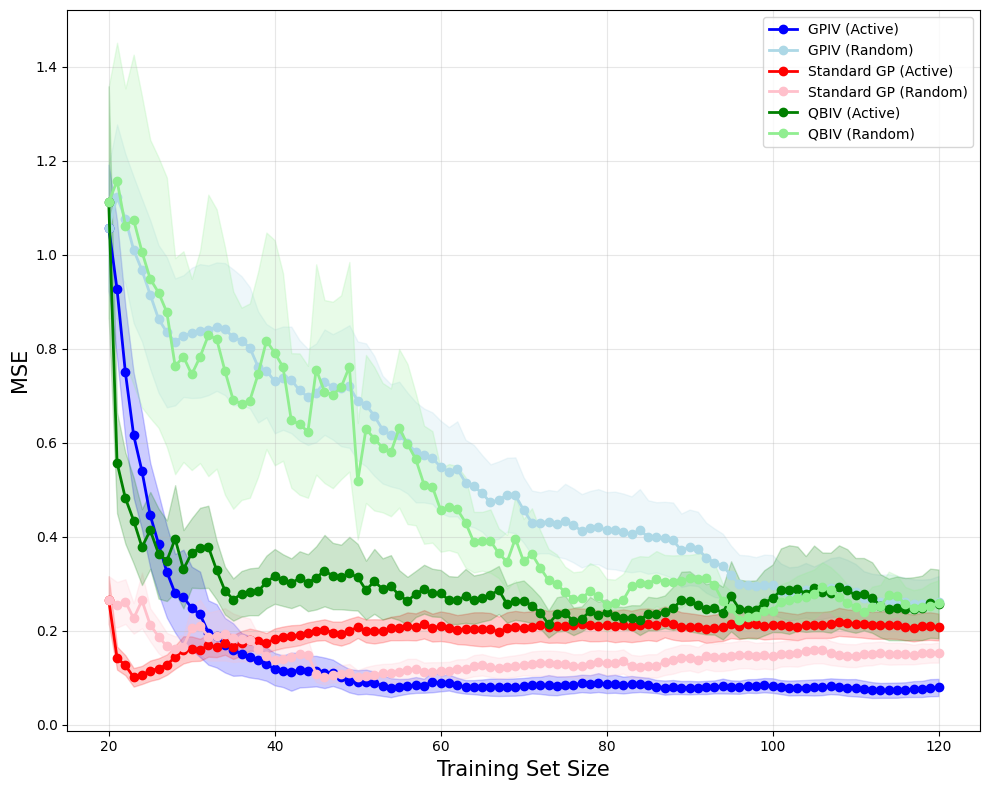}
    \hfill
    \includegraphics[width=0.475\textwidth]{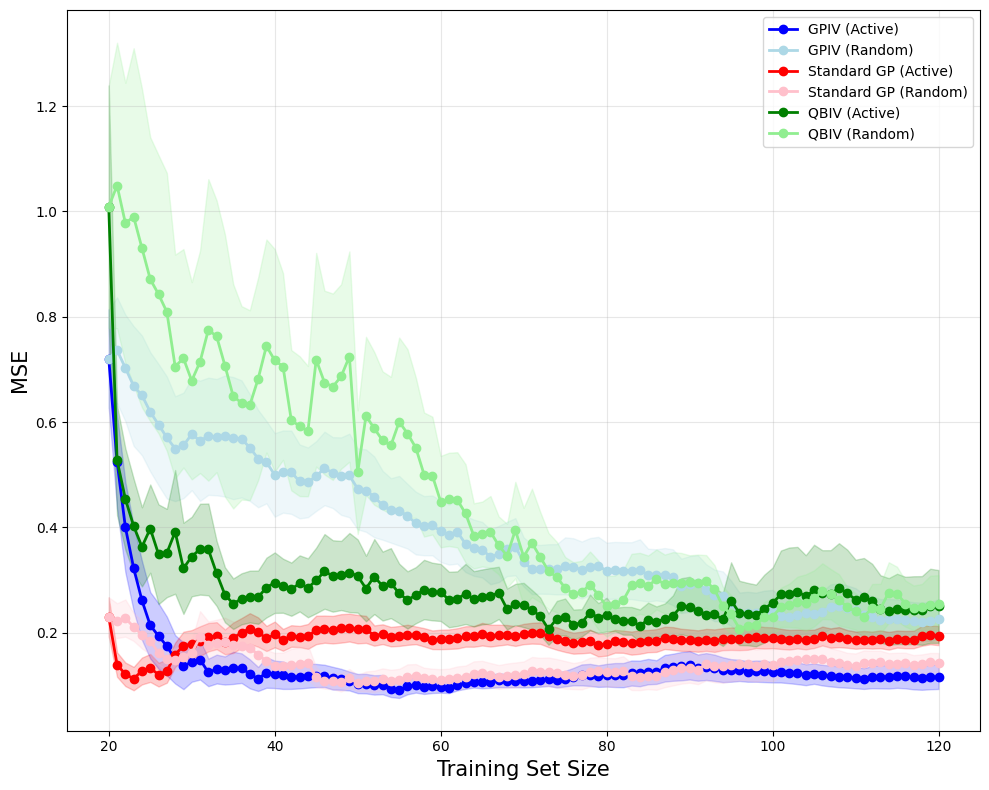}
    \caption{Active Learning of log (left) and linear (right) design.}
    \label{activeiv}
\end{figure}

From the figure \ref{activeiv}, we show that our method improved faster than our competitors and random learning (randomly adding a new data to the model), which means our posterior mean is informative on choosing the data with the highest potential to improve our model predictions.

\subsection{Discussion on the Lengthscales}
\label{appendix: abstudy lengthscale}
\begin{table}[]
\centering
\caption{Effect of optimized lengthscales on MSE, Coverage, and Area under AUC with quantile $=0.75$ for demand design with 200 data.}
\label{ablens}
\begin{tabular}{lccc}
\toprule
\textbf{lengthscale optimized} & MSE & Coverage & AUC \\
\midrule
$(p,t,s)$               & .070 (.003) & .961 (.007) & .918 (.003) \\
$(p,t,s,c)$             & .275 (.030) & .948 (.007) & .923 (.003) \\
$(t,s)$                 & .546 (.020) & .957 (.004) & .931 (.002) \\
$(c,t,s)$               & .724 (.023) & .930 (.005) & .915 (.002) \\
$\emptyset$             & .642 (.033) & .851 (.005) & .895 (.002) \\
$(c)$                   & .906 (.043) & .842 (.006) & .866 (.003) \\
$(p)$                   & .206 (.014) & .788 (.006) & .821 (.004) \\
\bottomrule
\end{tabular}
\end{table}

The ablation study in Table \ref{ablens} demonstrates that the choice of dimensions for lengthscale optimization significantly impacts model performance on the demand dataset ($n=200$). Optimizing the lengthscales for dimensions $x=(p, t, s)$ yields the most effective overall results, achieving the lowest MSE, near-nominal coverage probability, and a high AUC. Notably, in both synthetic and demand design settings (both for IV and Proxy), optimizing the lengthscale for $z$ results in an extremely small estimated value, leading to excessively rough function estimates and consequently poor predictive and uncertainty quantification performance. The results further indicate that optimizing only component $c$ of $z$—rather than using a median heuristic—produces even worse outcomes.

In additional to the demand setting, we further conducted the sensitivity analysis on the IV synthetic data. We observed that, in figure \ref{opt_lz}, the lengthscale parameter of \( Z \), denoted as \( l_z \), tends to shrink to very small values (close to zero) across all designs. This lengthscale is highly sensitive to the data, and such instability is detrimental to predictive performance, as an excessively small \( l_z \) can lead to overfitting. To address this issue, we recommend fixing \( l_z \) using the median heuristic.

Optimizing \(l_x\) and \(\sigma\) while fixing \(l_z\) and \(\eta\) indeed improves MSE (figure \ref{sensi_lx}). However, also optimizing \(l_z\) often leads to poor estimation, as the lengthscale tends to zero (figure \ref{sensi_lz}). Our tests show that the estimation is not highly sensitive to \(l_z\); setting it via the median heuristic works well.

\begin{figure}[htbp]  
  \centering
  \includegraphics[width=0.7\columnwidth]{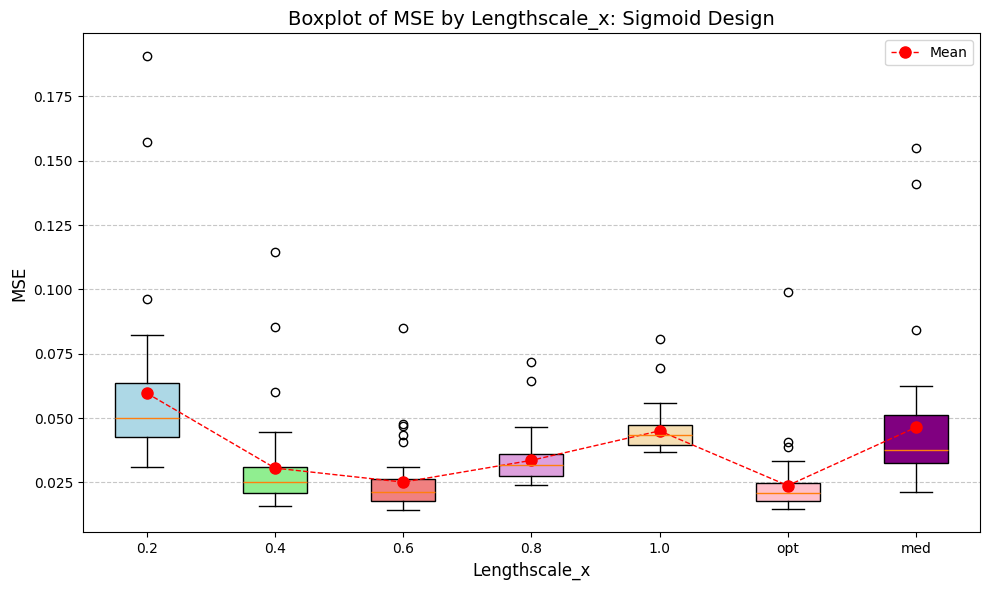}
  \caption{Sensitivity analysis for various lengthscale of x for the log design with $n =2000$}
  \label{sensi_lx}
\end{figure}

\begin{figure}[htbp]  
  \centering
  \includegraphics[width=0.7\columnwidth]{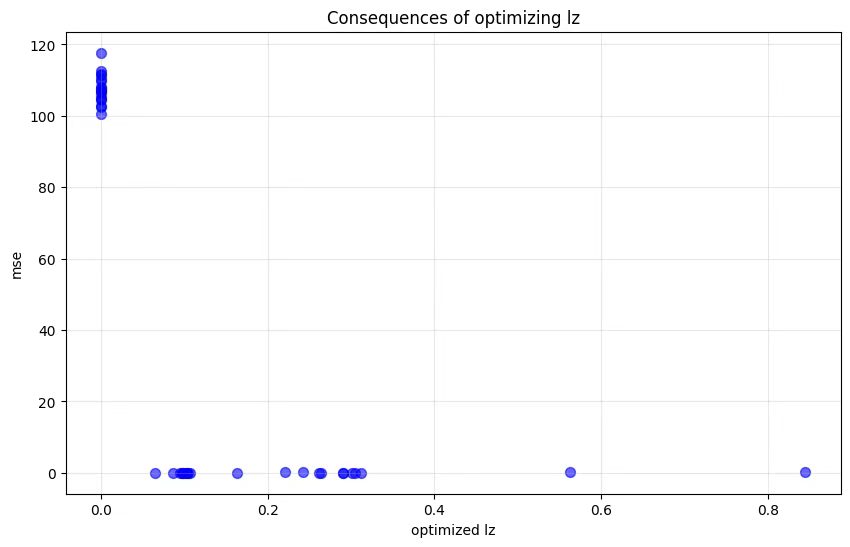}
  \caption{Consequence for optimizing lengthscale of z for the log design with $n =2000$: plots of MSE versus optimized $l_z$.}
  \label{opt_lz}
\end{figure}

\begin{figure}[htbp]  
  \centering
  \includegraphics[width=0.7\columnwidth]{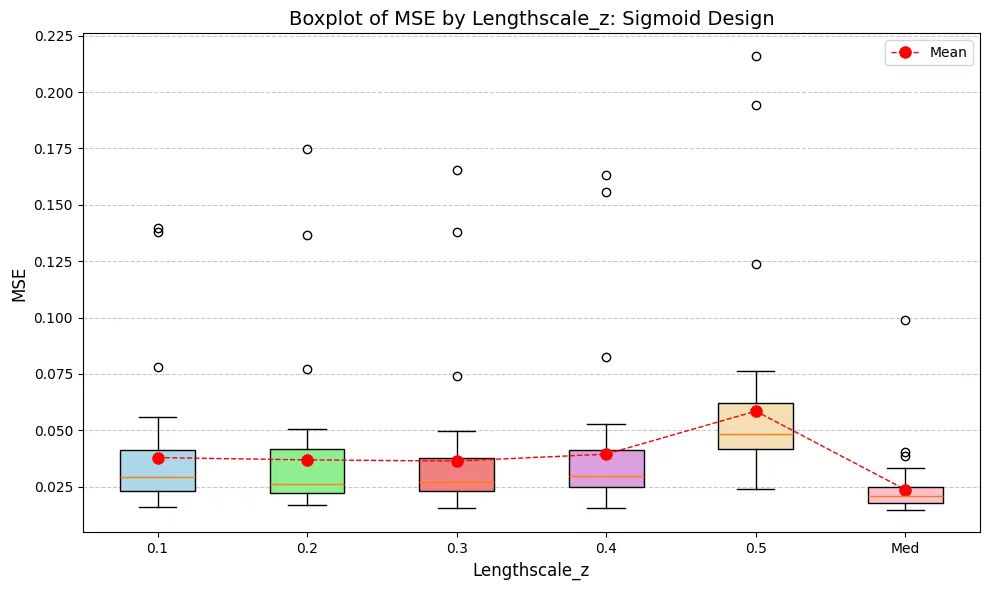}
  \caption{Sensitivity analysis for various lengthscale of z for the log design with $n =2000$}
  \label{sensi_lz}
\end{figure}


\subsection{Sensitivity Analysis Study - $\eta$}
\label{appendix: eta}
\begin{figure}[]
    \centering
    \includegraphics[width=0.9\textwidth]{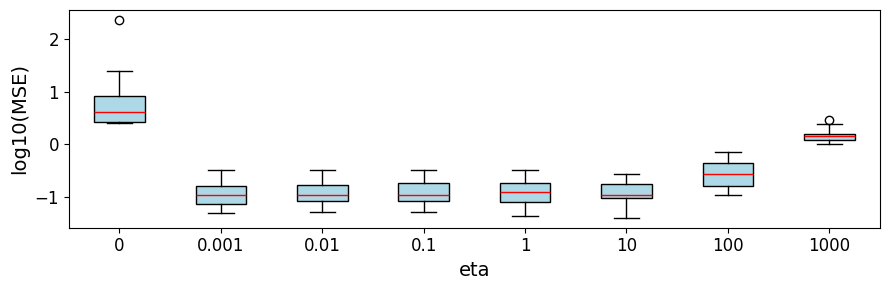}
    \caption{Sensitivity Analysis on $\eta$}
    \label{ablationeta}
\end{figure}

We conduct a sensitivity analysis on the regularization hyperparameter $\eta$. We test $\eta$ over a logarithmic grid ranging from $0.001$ to $1000$, including $\eta = 0$, using $200$ data points. From Figure \ref{ablationeta}, we observe that the MSE remains quite similar for $\eta$ values between $0.001$ and $1$, and deteriorates as regularization increases beyond this range. Moreover, the model also performs poorly when $\eta$ is very close to or exactly $0$. Therefore, we recommend choosing $\eta$ in the range $[0.01, 1]$, or setting $\eta = 0.001 \cdot n$ if the data are standardized.

\subsection{Scalability and approximations}
Without further matrix approximation for scalability, all of the methods require $\mathcal{O}(n^3)$. The mainly difference would be the data-splitting and the hyperparameter tuning methods.

We can use the random feature approximation \citet{liu2021random} or the Nystrom approximation \cite{zhang2023instrumental} to decrease the computational time from $\mathcal{O}(n^3)$ to $\mathcal{O}(n^2+m^2n)$, where $m$ is the number of features used.

We performed a further simulation on the IV synthetic design with large sample $(n=5000)$. For simplicity, we optimize only the $\sigma^2$ and set the remaining hyperparameters heuristically, as the training time is sensitive to the optimzation settings, including the learning rate and the number of iterations, especially when multiple hyperparameters are optimized jointly. From the table \ref{tab:mse_large_n}, we conclude that for large data size, our method provides the lowest MSE. Even if we split our data, the MSE is still the second lowest. One trade-off of this is that we might have a larger time cost (about three times larger than QBIV).

We did not use approximation methods for all methods for the comparison of accuracy. QBIV and KIV use split datasets, which leads to a smaller computational cost compared to our cases. If we split our data, then the computational time would be similar. MMRIV uses one-stage estimation with the lowest computational time, but again, it does not provide uncertainty quantification, and bootstrap with it and KIV would be extremely time consuming, compared to our method which provides UQ simultaneously

\begin{table}[htbp]
\centering
\caption{Large sample ($n=5000$) MSE for synthetic designs (standard errors in parentheses). The best result in each row is highlighted in \textcolor{red}{red} and the second best in \textcolor{orange}{orange}.}
\label{tab:mse_large_n}
\adjustbox{max width=\textwidth}{
\begin{tabular}{l c c c c c}
\toprule
Design & \textbf{GPIV} & \textbf{GPIV (Split)} & QBIV & KIV & MMRIV \\
\midrule
log   & \textcolor{red}{.0306(.0076)} & \textcolor{orange}{.0359(.0102)} & .0363(.0069) & .0534(.0096) & .0454(.0078) \\
sine  & \textcolor{red}{.0273(.0073)} & \textcolor{orange}{.0368(.0092)} & .0545(.0133) & .0505(.0067) & .0722(.0144) \\
linear& \textcolor{red}{.0194(.0055)} & \textcolor{orange}{.0294(.0074)} & .0307(.0124) & .0317(.0133) & .0402(.0069) \\
\bottomrule
\end{tabular}
}
\end{table}

\begin{table}[htbp]
\centering
\caption{Average seconds to perform one estimation on synthetic data for different sample sizes $n$.}
\label{tab:runtime}
\adjustbox{max width=\textwidth}{
\begin{tabular}{l c c c c c}
\toprule
 $n$ & \textbf{GPIV} & \textbf{GPIV (Split)} & QBIV & KIV & MMRIV \\
\midrule
$200$   & 0.123 & 0.063 & 0.052 & 0.107 & 0.047 \\
$2000$  & 34.43 & 15.56 & 10.01 & 20.47 & 9.46 \\
$5000$  & 303.62 & 141.04 & 105.40 & 181.98 & 82.80 \\
\bottomrule
\end{tabular}
}
\end{table}

\subsection{Ablation studies on confounding power}

We modify the strength of confounding, $\rho$ in the Demand setting. We follow the KIV paper's design - let $\rho = 0.1,0.5,0.9$ be the weak, medium, strong confounding. We show that the MSE and the UQ performance did not change much, compared with the main study. 
\begin{table}[htbp]
\centering
\caption{MSE (with standard error in parentheses) for different IV methods across sample sizes and confounding strength. The best result in each row is highlighted in \textcolor{red}{red} and the second best in \textcolor{orange}{orange}.}
\label{tab:iv_results}
\adjustbox{max width=\textwidth}{
\begin{tabular}{l l *{4}{l}}
\toprule
\multicolumn{1}{c}{$n$} & \multicolumn{1}{c}{Confounding} & \multicolumn{1}{c}{\textbf{GPIV}} & \multicolumn{1}{c}{KIV} & \multicolumn{1}{c}{MMRIV} & \multicolumn{1}{c}{QBIV} \\
\midrule
\multirow{3}{*}{200} 
 & Weak   & \textcolor{red}{.0812(.0039)} & .7126(.0303) & .6481(.0325) & \textcolor{orange}{.6313(.0362)} \\
 & Medium & \textcolor{red}{.0702(.0032)} & .7133(.0318) & .6493(.0257) & \textcolor{orange}{.6320(.0264)} \\
 & Strong & \textcolor{red}{.0723(.0270)} & .7129(.0301) & .7131(.0268) & \textcolor{orange}{.6810(.0256)} \\
\addlinespace
\multirow{3}{*}{500} 
 & Weak   & \textcolor{red}{.0515(.0026)} & .6051(.0298) & .5960(.0192) & \textcolor{orange}{.5178(.0189)} \\
 & Medium & \textcolor{red}{.0533(.0029)} & .6046(.0300) & .5822(.0148) & \textcolor{orange}{.5203(.0163)} \\
 & Strong & \textcolor{red}{.0526(.0031)} & .6058(.0322) & .5970(.0157) & \textcolor{orange}{.5179(.0182)} \\
\addlinespace
\multirow{3}{*}{1000}
 & Weak   & \textcolor{red}{.0266(.0021)} & .4739(.0392) & .5391(.0129) & \textcolor{orange}{.4282(.0177)} \\
 & Medium & \textcolor{red}{.0282(.0012)} & .4742(.0302) & .5389(.0132) & \textcolor{orange}{.4294(.0152)} \\
 & Strong & \textcolor{red}{.0260(.0014)} & .4737(.0263) & .5645(.0097) & \textcolor{orange}{.4400(.0173)} \\
\bottomrule
\end{tabular}
}
\end{table}

\begin{table}[htbp]
\centering
\caption{Coverage (closer to 0.95 the better) and AUC (higher the better) with standard errors in parentheses. The best result in each row is highlighted in \textcolor{red}{red} and the second best in \textcolor{orange}{orange}. Leading zeros are omitted.}
\label{tab:coverage_auc}
\adjustbox{max width=\textwidth}{
\begin{tabular}{l l c c c c}
\toprule
Metric & Confounding & \textbf{GPIV} & KIV(BS) & GPIV(BS) & QBIV \\
\midrule
\multirow{3}{*}{Coverage} 
 & Weak   & \textcolor{red}{.9127(.0077)} & .4895(.0198) & .5948(.0609) & \textcolor{orange}{.8917(.0417)} \\
 & Medium & \textcolor{red}{.9611(.0072)} & .4607(.0293) & .6254(.0588) & \textcolor{orange}{.9031(.0102)} \\
 & Strong & \textcolor{red}{.9531(.0069)} & .4859(.0252) & .5808(.0612) & \textcolor{orange}{.8916(.0106)} \\
\addlinespace
\multirow{3}{*}{AUC} 
 & Weak   & \textcolor{orange}{.8731(.0279)} & .6722(.0080) & .7156(.0126) & \textcolor{red}{.8934(.0020)} \\
 & Medium & \textcolor{red}{.9177(.0173)} & .6635(.0081) & .7072(.0119) & \textcolor{orange}{.9044(.0012)} \\
 & Strong & \textcolor{orange}{.8925(.0237)} & .6724(.0125) & .7013(.0086) & \textcolor{red}{.9034(.0017)} \\
\bottomrule
\end{tabular}
}
\end{table}

\subsection{Discussion on Weak IV}
To test the scenario of weak IV, we modified our data generating process in the synthetic design. We decrease $\alpha$ from $0.5$ to $0.1$ where our IV $Z = \Phi(W)$, $X = \Phi(\alpha W+(1-\alpha)V)$.

We see that the MSE (Tables \ref{tab:mse_weakiv}-\ref{tab:uqweakiv}) increases across all the designs, but our method is still comparable to other baselines. More importantly, our methods still provide reliable UQ (on coverage and the accuracy rejection curve).

\begin{table}[htbp]
\centering
\caption{MSE (mean with standard error) for different designs and sample sizes. The best result in each row is highlighted in \textcolor{red}{red} and the second best in \textcolor{orange}{orange} (lower is better).}
\label{tab:mse_weakiv}
\adjustbox{max width=\textwidth}{
\begin{tabular}{l l c c c c}
\toprule
Design & $n$ & \textbf{GPIV} & KIV & MMRIV & QBIV \\
\midrule
\multirow{3}{*}{log} 
 & 200 & \textcolor{orange}{1.2285(.1260)} & 1.4408(.2012) & \textcolor{red}{.9129(.0781)} & 1.4294(.2636) \\
 & 500 & 1.0359(.0998) & 1.1758(.1517) & \textcolor{red}{.8711(.0501)} & \textcolor{orange}{1.0220(.0744)} \\
 & 1000 & \textcolor{orange}{.8648(.0592)} & .9159(.0742) & \textcolor{red}{.8600(.0263)} & .9738(.0903) \\
\addlinespace
\multirow{3}{*}{sine} 
 & 200 & \textcolor{orange}{.7516(.0998)} & .8149(.0863) & \textcolor{red}{.6724(.0427)} & .8129(.0392) \\
 & 500 & \textcolor{orange}{.2674(.0313)} & .4819(.0524) & \textcolor{red}{.2587(.0267)} & .3734(.0328) \\
 & 1000 & \textcolor{red}{.1330(.0152)} & .4100(.0299) & \textcolor{orange}{.2367(.0198)} & .3718(.0219) \\
\addlinespace
\multirow{3}{*}{linear} 
 & 200 & 1.0618(.0646) & \textcolor{orange}{.8922(.0856)} & \textcolor{red}{.7364(.0378)} & 1.2187(.2776) \\
 & 500 & \textcolor{orange}{.7863(.0605)} & .8145(.0615) & \textcolor{red}{.6851(.0293)} & .8432(.0648) \\
 & 1000 & \textcolor{orange}{.7495(.0614)} & .8012(.0346) & \textcolor{red}{.6647(.0133)} & .7855(.0629) \\
\bottomrule
\end{tabular}
}
\end{table}

\begin{table}[htbp]
\centering
\caption{Coverage (closer to $0.95$ the better) and AUC (higher the better) with standard errors in parentheses. The best result in each row is highlighted in \textcolor{red}{red} and the second best in \textcolor{orange}{orange}.}
\label{tab:uqweakiv}
\adjustbox{max width=\textwidth}{
\begin{tabular}{l l c c c c}
\toprule
Design & Metrics & \textbf{GPIV} & \textbf{QBIV} & \textbf{GPIV(BS)} & \textbf{KIV(BS)} \\
\midrule
\multirow{2}{*}{Log} 
 & Coverage & \textcolor{red}{.8564(.0175)} & .7572(.0282) & .4206(.0322) & \textcolor{orange}{.8034(.0220)} \\
 & AUC & \textcolor{red}{.9403(.0091)} & \textcolor{orange}{.9074(.0120)} & .5638(.0243) & .7728(.0270) \\
\addlinespace
\multirow{2}{*}{Sine} 
 & Coverage & \textcolor{red}{.9564(.0163)} & \textcolor{orange}{.9738(.0140)} & .5118(.0495) & .3232(.0161) \\
 & AUC & \textcolor{orange}{.8696(.0211)} & .8481(.0239) & .8689(.0225) & \textcolor{red}{.8821(.0198)} \\
\addlinespace
\multirow{2}{*}{Linear} 
 & Coverage & \textcolor{red}{.9118(.0457)} & \textcolor{orange}{.8410(.0395)} & .5818(.0136) & .7888(.0236) \\
 & AUC & \textcolor{red}{.9398(.0072)} & \textcolor{orange}{.9042(.0132)} & .7541(.0203) & .7732(.0253) \\
\bottomrule
\end{tabular}
}
\end{table}

\subsection{Discussion on Weak identification}
We investigate violations of the identification assumption in synthetic experiments by relaxing the independence between the error and the IV. Specifically, we set the covariance between \(e\) and \(V\) to \(0.8\), and vary the covariance between \(e\) and \(W\) (denoted \(\theta\), where \(Z = \Phi(W)\)) from \(0\) to \(0.5\). Results show that when \(\theta\) is small (i.e., mild violation), MSE and coverage do not deteriorate (Table \ref{tab:mse_coverage_ivviolation}, Fig \ref{IVViolate}). However, when the violation is moderately large (\(\theta = 0.3\)), MSE increases and coverage drops significantly, indicating failure to recover the true ATE. This is expected as a strong violation of causal assumptions implies the causal estimand is no longer identifiable (estimatable) using observational data.

\begin{table}[htbp]
\centering
\caption{MSE and Coverage (with standard errors in parentheses) for different $\theta$ (covariance between IV and error) values across synthetic designs.}
\label{tab:mse_coverage_ivviolation}
\adjustbox{max width=\textwidth}{
\begin{tabular}{l l c c c c c c}
\toprule
\multirow{2}{*}{Design} & \multirow{2}{*}{Metric} & \multicolumn{6}{c}{$\theta$} \\
\cmidrule(lr){3-8}
 & & 0 & 0.1 & 0.2 & 0.3 & 0.4 & 0.5 \\
\midrule
\multirow{2}{*}{log} 
 & MSE & .1432(.0210) & .1064(.0128) & .0954(.0082) & .1159(.0075) & .1653(.0093) & .2611(.0125) \\
 & Cov & .9993(.0007) & 1.0000(.0000) & 1.0000(.0000) & .9932(.0047) & .9545(.0125) & .8296(.0228) \\
\addlinespace
\multirow{2}{*}{sine} 
 & MSE & .1633(.0200) & .1600(.0156) & .2369(.0212) & .3552(.0299) & .5028(.0269) & .7323(.0291) \\
 & Cov & .9483(.0205) & .9660(.0125) & .9149(.0167) & .8419(.0270) & .8101(.0228) & .7716(.0273) \\
\addlinespace
\multirow{2}{*}{linear} 
 & MSE & .1464(.0201) & .1266(.0115) & .1412(.0090) & .1895(.0116) & .2750(.0137) & .4226(.0159) \\
 & Cov & .9889(.0046) & .9856(.0098) & .9535(.0219) & .8436(.0308) & .6705(.0288) & .5096(.0179) \\
\bottomrule
\end{tabular}
}
\end{table}

\begin{figure}[htbp]  
  \centering
  \includegraphics[width=0.7\columnwidth]{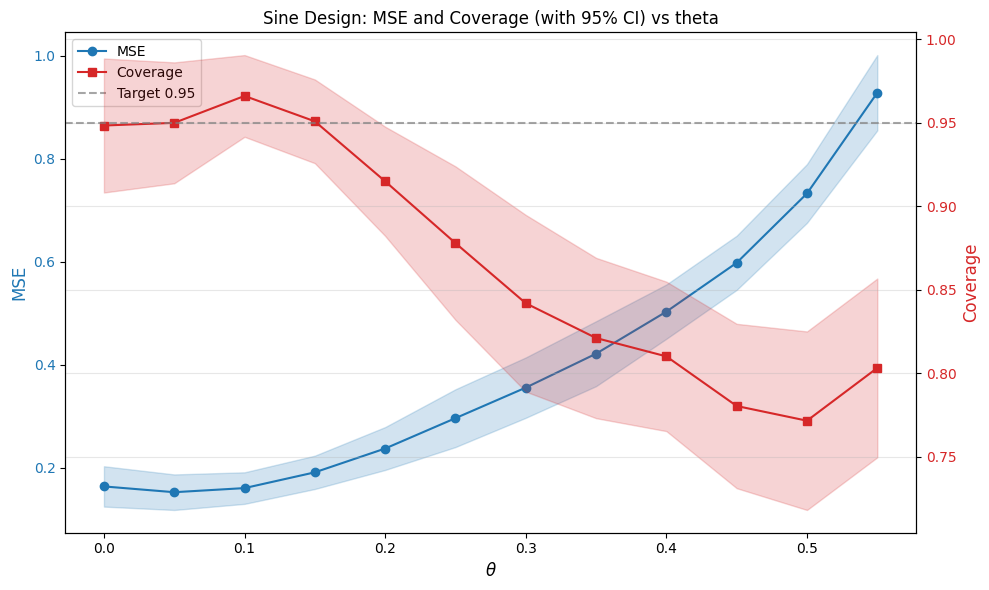}
  \caption{Plots of MSE and Coverage with $95\%$ C.I. for $\theta$ from $0.00$ to $0.55$}
  \label{IVViolate}
\end{figure}

\subsection{Proxy Experiments Settings}
\label{appendix:ProxyExp}
\textbf{Synthetic data.} \quad
Here are the details of data generating process:
\begin{align*}
U &:= [U_1, U_2], \quad U_2 \sim \mathrm{Uniform}[-1, 2] \\
U_1 &\sim \mathrm{Uniform}[0, 1] - \mathbf{1}[0 \leq U_2 \leq 1] \\
W &:= [W_1, W_2] = [U_1 + \mathrm{Uniform}[-1, 1], U_2 + \varepsilon_1] \\
Z &:= [Z_1, Z_2] = [U_1 + \varepsilon_2,U_2 + \mathrm{Uniform}[-1, 1]] \\
X &:= U_2 + \varepsilon_3 \\
Y &:= 3 \cos\bigl(2(0.3U_1 + 0.3U_2 + 0.2) + 1.5A\bigr) + \varepsilon_4
\end{align*}
such that $\varepsilon_{1,2,3,4} \sim N(0,1)$ independently.

\begin{figure}[]
    \centering
    \includegraphics[width=0.475\textwidth]{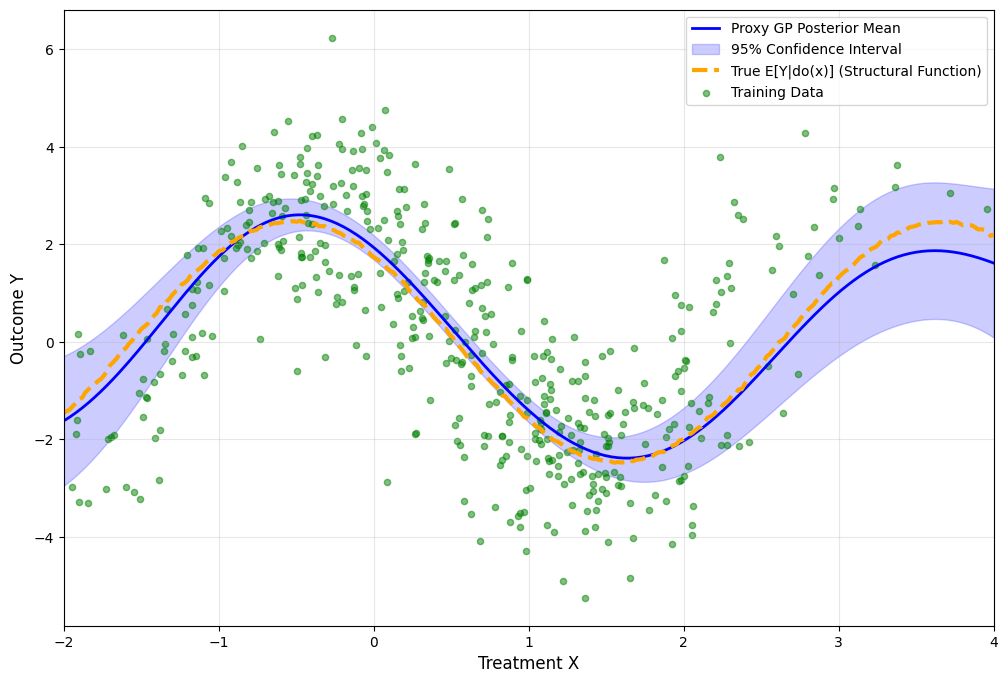}
    \hfill
    \includegraphics[width=0.475\textwidth]{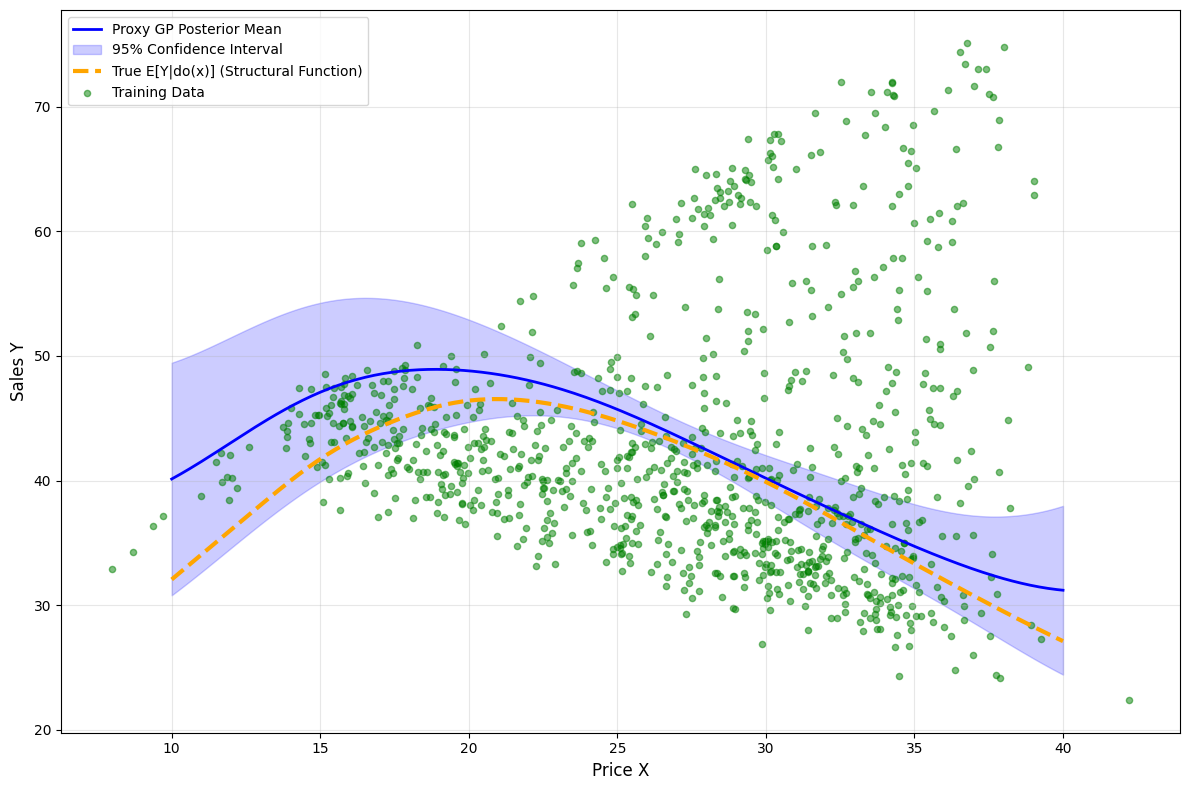}
    \caption{Demo of Proximal synthetic (left) and demand (right) design with 1000 training data.}
    \label{demoproxy}
\end{figure}

\begin{figure}[]
    \centering
    \includegraphics[width=0.95\textwidth]{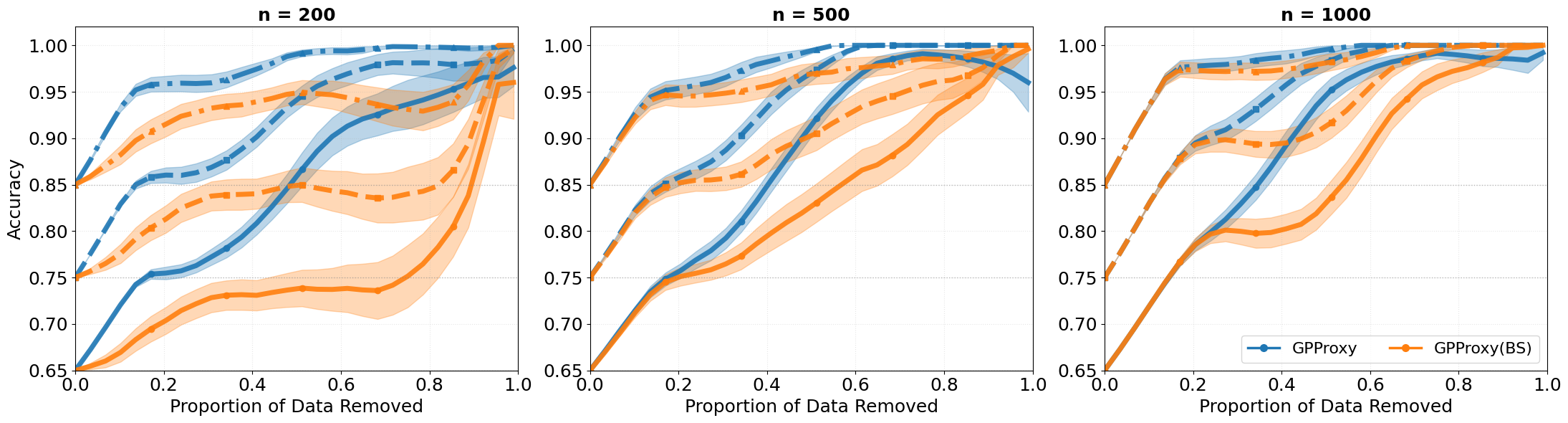}
    \caption{Accuracy-rejection curve for proxy synthetic design (data size $=200,500,1000$), with quantile $= 0.65,0.75,0.85$.}
    \label{arcproxysyn}
\end{figure}

\begin{figure}[]
    \centering
    \includegraphics[width=0.95\textwidth]{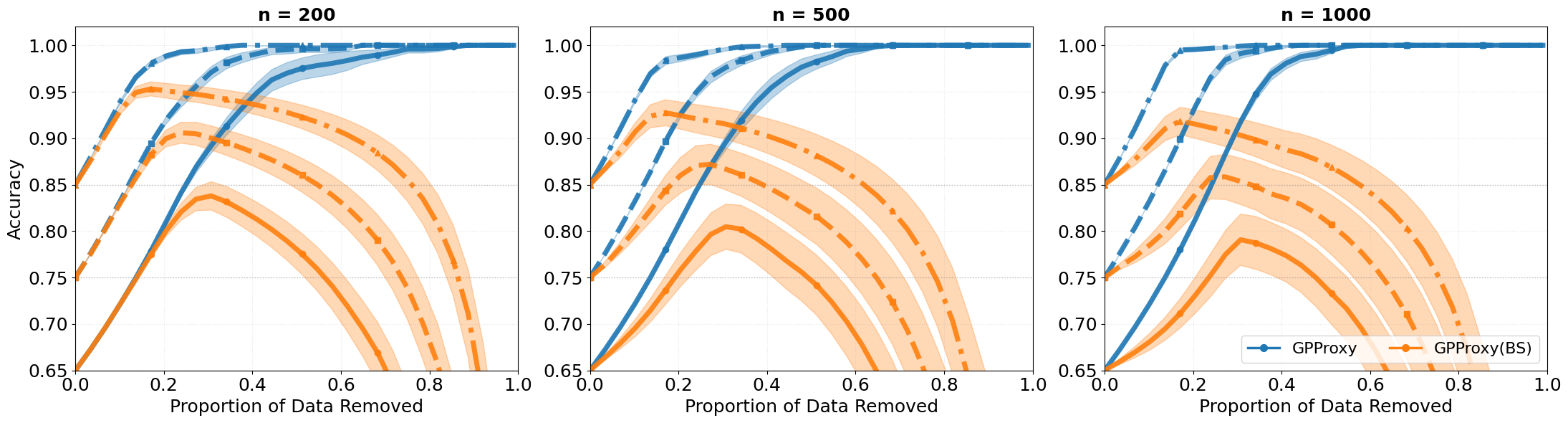}
    \caption{Accuracy-rejection curve for proxy demand design (data size $=200,500,1000$), with quantile $= 0.65,0.75,0.85$.}
    \label{arcproxydemand}
\end{figure}

\textbf{Demand data.} \quad
Structural equations model real-world pricing with unobserved demand sensitivity.
\begin{align*}
&\text{Demand: } U \sim \mathcal{U}(0, 10), \\
&\text{Fuel cost: } [Z_1, Z_2] = \bigl[2\sin(2\pi U/10) + \varepsilon_1,\; 2\cos(2\pi U/10) + \varepsilon_2 \bigr], \\
&\text{Web views: } W = 7g(U) + 45 + \varepsilon_3, \\
&\text{Price: } X = 35 + (Z_1 + 3)g(U) + Z_2 + \varepsilon_4, \\
&\text{Sales: } Y = X \times \min\left(\exp\left(\frac{W-X}{10}\right),\; 2\right) - 5g(U) + \varepsilon_5.
\end{align*}
where
\[
g(u) = 2\left( \frac{(u{-}5)^4}{600} + e^{-4(u{-}5)^2} + \frac{u}{10} - 2 \right),
\quad \varepsilon_i \overset{\mathrm{i.i.d.}}{\sim} \mathcal{N}(0,1).
\]

Note that we slightly modify the Sales function in \citet{xu2021deep} to make the ATE more nonlinear.

\subsection{Implementation Details}
In practice, jointly optimizing all hyperparameters is not always feasible due to potential nonidentifiability. For both the IV and Proxy settings, we observed that optimizing the lengthscale of $Z$ tends to shrink it to a very small value, leading to extremely poor predictive performance. Therefore, we recommend fixing it using the median heuristic. Moreover, since $\eta$ serves as a penalization constant, direct optimization via marginal likelihood is not meaningful; we instead set it to a default value of $\eta = 0.1$ in both GPIV and GPProxy, as suggested in the section \ref{appendix: eta}. 

We standardize the inputs and optimize the lengthscales of $X$ and $W$, initializing them with the median heuristic. The noise variance $\sigma^2$ is also optimized, with an initial value of $0.25$.

In the IV setting, we implement the baseline methods KIV \citep{singh2019}, MMRIV \citep{zhang2023instrumental}, and QBIV \citep{wang2021quasi}. For the proximal setting, we include KPV, PMMR \citep{mastouri2021proximal}, PKIPW, PKDR \citep{wu2023doubly}, KNC \citep{singh2023kernelmethodsunobservedconfounding}, and KNC-orig (which corresponds to DProxy). For all methods, we adopt the default settings from the original source code: only the regularization hyperparameters are tuned, while the lengthscales are fixed using the median heuristic. Note that the code for KNC and KNC-orig are not publicly available yet; we therefore implement them in Python following the instructions provided in the experimental sections of the respective papers. We re-implement KIV and MMRIV in Python based on their original MATLAB code. In our bootstrap implementation, we used 25 samples per iteration and confirmed that increasing the number of bootstrap iterations does not compromise uncertainty quantification performance. This finding aligns with the conclusions of \citet{wang2021quasi}.

\subsection{Wilcoxon Signed-rank test}
We employ the Wilcoxon signed-rank test at the $5\%$ significance level to compare the methods. Table \ref{wilcoxon} lists the methods that perform significantly worse than our approach.
\begin{table}[htbp]
\centering
\caption{Methods with significantly worse performancen on MSE than GPIV (left) and GPProxy (right) under Wilcoxon signed-rank test, $5\%$.}
\label{wilcoxon}
\begin{minipage}[]{0.485\textwidth}
\centering
\small
\begin{tabular}{lcl}
\toprule
Design & $n$ & Methods \\
\midrule
\multirow{3}{*}{sine}   & 200 & KIV \\
                        & 500 & KIV, MMRIV \\
                        & 1000 & KIV, MMRIV \\
\addlinespace
\multirow{3}{*}{log}    & 200 & KIV \\
                        & 500 & KIV, QBIV \\
                        & 1000 & KIV, MMRIV, QBIV \\
\addlinespace
\multirow{3}{*}{linear} & 200 & KIV \\
                        & 500 & QBIV \\
                        & 1000 & KIV, MMRIV, QBIV \\
\addlinespace
\multirow{3}{*}{demand} & 200 & KIV, MMRIV, QBIV \\
                        & 500 & KIV, MMRIV, QBIV \\
                        & 1000 & KIV, MMRIV, QBIV \\
\bottomrule
\end{tabular}
\end{minipage}%
\hfill
\begin{minipage}[]{0.485\textwidth}
\centering
\small
\begin{tabular}{lcl}
\toprule
Design & $n$ & Methods \\
\midrule
\multirow{3}{*}{synthetic} & 200 & -- \\
                           & 500 & -- \\
                           & 1000 & -- \\
\addlinespace
\multirow{3}{*}{demand}    & 200 & -- \\
                           & 500 & DProxy, KNC \\
                           & 1000 & DProxy, PMMR, PKDR, KNC \\
\bottomrule
\end{tabular}
\end{minipage}

\vspace{2mm}
\footnotesize\textit{Note:} For GPIV, the ``demand'' design uses normalized MSE. ``--'' indicates no method significantly worse than the our method.
\end{table}

\end{document}